\newcommand{\rtp}{\textsc{GRP}\xspace}
\newcommand{\rta}{\textsc{GRA}\xspace}
\newcommand{\mpp}{\textsc{MAPF}\xspace}
\newcommand{\eecbs}{\textsc{EECBS}\xspace}
\newcommand{\ddm}{\textsc{DDM}\xspace}
\newcommand{\lacam}{\textsc{LaCAM}\xspace}
\newcommand{\rtpk}{\textsc{GRPKD}\xspace}
\newcommand{\rtptwo}{\textsc{GRP2D}\xspace}
\newcommand{\rtatwo}{\textsc{GRA2D}\xspace}
\newcommand{\rtpthree}{\textsc{GRP3D}\xspace}
\newcommand{\rtathree}{\textsc{GRA3D}\xspace}
\newcommand{\rthddd}{\textsc{GRH3D}\xspace}
\newcommand{\rtlmddd}{\textsc{GRLM3D}\xspace}
\newcommand{\rtmddd}{\textsc{GRM3D}\xspace}
\newcommand{\rtmdd}{\textsc{GRM}\xspace}
\newcommand{\rth}{\textsc{GRH}\xspace}
\newcommand{\mcp}{\textsc{MCP}\xspace}
\newcommand{\rtlm}{\textsc{GRLM}\xspace}
\newcommand{\rthlba}{\textsc{GRH-LBA}\xspace}
\newcommand{\rthip}{\textsc{GRH-IP}\xspace}
\newcommand{\rthpr}{\textsc{GRH-PR}\xspace}
\newtheorem{problem}{Problem}
\newtheorem{proposition}{Proposition}[section]
\newtheorem{lemma}{Lemma}[section]
\newtheorem{corollary}{Corollary}[section]
\newtheorem{theorem}{Theorem}[section]
\newtheorem{remark}{Remark}
\def\rtmapf{\textsc{GRM}\xspace}
\def\gtl{\mathcal G_{tl}\xspace}
\def\gbr{\mathcal G_{br}\xspace}
\newif\ifdraft
\newcommandx{\sh}[2][1=]{\todo[linecolor=blue,
			backgroundcolor=blue!10,bordercolor=blue,#1]{Han: #2}}
\newcommandx{\tg}[2][1=]{\todo[linecolor=orange,
			backgroundcolor=orange!10,bordercolor=orange,#1]{Greaten: #2}}
\newcommandx{\jy}[2][1=]{\todo[linecolor=green,
			backgroundcolor=green!10,bordercolor=green,#1]{JJ: #2}}
\newcommand{\sh}[1]{{}}
\newcommand{\tg}[1]{{}}
\newcommand{\jy}[1]{{}}
\begin{document}

\title{Expected $1.x$ Makespan-Optimal Multi-Agent Path Finding on Grid Graphs in Low Polynomial Time}

\author{\name Teng Guo \email teng.guo@rutgers.edu \\
       \name Jingjin Yu \email jingjin.yu@rutgers.edu \\
       \addr Rutgers, the State University of New Jersey, Piscataway, NJ, USA.}


\maketitle

\begin{abstract}
Multi-Agent Path Finding (\mpp) is NP-hard to solve optimally, even on graphs, suggesting no polynomial-time algorithms can compute exact optimal solutions for them. 
This raises a natural question: How optimal can polynomial-time algorithms reach? 
Whereas algorithms for computing constant-factor optimal solutions have been developed, the constant factor is generally very large, limiting their application potential. 
In this work, among other breakthroughs, we propose the first low-polynomial-time \mpp algorithms delivering $1$-$1.5$ (resp., $1$-$1.67$)  asymptotic makespan optimality guarantees for 2D (resp., 3D) grids for random instances at a very high $1/3$ agent density, with high probability. Moreover, when regularly distributed obstacles are introduced, our methods experience no performance degradation. These methods generalize to support $100\%$ agent density.  

Regardless of the dimensionality and density, our high-quality methods are enabled by a unique hierarchical integration of two key building blocks. At the higher level, we apply the labeled Grid Rearrangement Algorithm (\rta), capable of performing efficient reconfiguration on grids through row/column shuffles. At the lower level, we devise novel methods that efficiently simulate row/column shuffles returned by \rta. 
Our implementations of \rta-based algorithms are highly effective in extensive numerical evaluations, demonstrating excellent scalability compared to other SOTA methods. For example, in 3D settings, \rta-based algorithms readily scale to grids with over $370,000$ vertices and over $120,000$ agents and consistently achieve conservative makespan optimality approaching $1.5$, as predicted by our theoretical analysis. 

Full source code of our \rta implementations will be made available at

\url{https://github.com/arc-l/rubik-table}

\end{abstract}
We examine multi-agent pathfinding (\mpp \cite{stern2019multi}) on two- and three-dimensional grids with potentially regularly distributed obstacles (see Fig.~\ref{fig:jd_center}).
The main objective of \mpp is to find collision-free paths for routing many agents from a start configuration to a goal configuration.  
In practice, solution optimality is of key importance, yet optimally solving \mpp is NP-hard~\cite{surynek2010optimization,yu2013structure}, even in planar settings \cite{yu2015intractability} and on obstacle-less grids~\cite{demaine2019coordinated}. 
\mpp algorithms apply to a diverse set of practical scenarios, including formation reconfiguration~\cite{PodSuk04}, object transportation~\cite{RusDonJen95}, swarm robotics\cite{preiss2017crazyswarm,honig2018trajectory}, to list a few.
Even when constrained to grid-like settings, \mpp algorithms still find many large-scale applications, including warehouse automation for general order fulfillment \cite{wurman2008coordinating}, grocery order fulfillment \cite{mason2019developing}, and parcel sorting \cite{wan2018lifelong}. 

 \begin{figure}[htbp]
        \centering
        \begin{overpic}               
        [width=1\linewidth]{./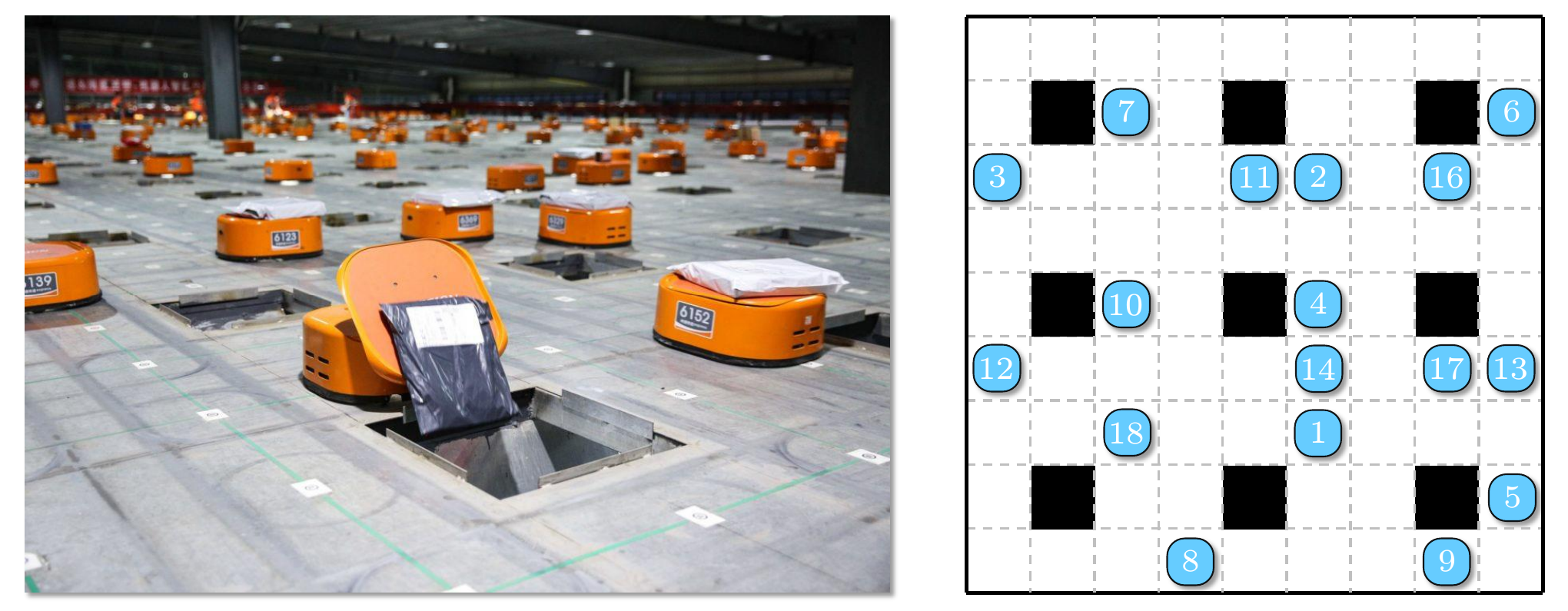}
             \footnotesize
             \put(28.5, -3) {(a)}
             \put(78.5, -3) {(b)}
        \end{overpic}
        \caption{ (a) Real-world parcel sorting system (at JD.com) using many agents on a large grid-like environment with holes for dropping parcels; (b) A snapshot of a similar \mpp instance our methods can solve in polynomial-time with provable optimality guarantees. In practice, our algorithms scale to 2D maps of size over {$450 \times 300$}, supporting over $45$K agents and achieving better than $1.5$-optimality (see, e.g.,  Fig.~\ref{fig:revise_third}). They scale further in 3D settings. 
        } 
        \label{fig:jd_center}
    \end{figure}

Motivated by applications including order fulfillment and parcel sorting, we focus on grid-like settings with extremely high agent density, i.e., with $\frac{1}{3}$ or more graph vertices occupied by agents. Whereas recent studies \cite{yu2018constant,demaine2019coordinated} show such problems can be solved in polynomial time yielding constant-factor optimal solutions, the constant factor is prohibitively high ($\gg 1$) to be practical.
In this research, we tear down this \mpp planning barrier, achieving $(1+\delta)$-makespan optimality asymptotically with high probability in polynomial time where $\delta \in (0, 0.5]$ for 2D grids and $\delta\in (0,\frac{2}{3}]$ for 3D grids.

This study's primary theoretical and algorithmic contributions are outlined below and summarized in Table~\ref{tab: algorithm_summarize}. Through judicious applications of a novel row/column-shuffle-based global Grid Rearrangement (GR) method called the Rubik Table Algorithm \cite{szegedy2023rubik}\footnote{It was noted in \cite{szegedy2023rubik} that their Rubik Table Algorithm can be applied to solve \mpp; we do not claim this as a contribution of our work. Rather, our contribution is to dramatically bring down the constant factor through a combination of meticulous algorithm design and careful analysis.}, employing many classical algorithmic techniques, and combined with careful analysis, we establish that in \emph{low polynomial} time:
\begin{itemize}[leftmargin=3.5mm]
    \setlength\itemsep{1mm}
    \item For $m_1m_2$ agents on a 2D $m_1\times m_2$ grid, $m_1 \ge m_2$, i.e., at maximum agent density, \rtmapf (Grid Rearrangement for \mpp) computes a solution for an arbitrary \mpp instance under a makespan of $4m_1 + 8m_2$; For $m_1m_2m_3$ agents on a 3D $m_1\times m_2 \times m_3$ grids, $m_1 \ge m_2 \ge m_3$, \rtmapf computes a solution under a makespan of $4m_1+8m_2+8m_3$; 
    \item For $\frac{1}{3}$  agent density with uniformly randomly distributed start/goal configurations, \rth (Grid Rearrangement with Highways) computes a solution with a makespan of $m_1 + 2m_2 + o(m_1)$ on 2D grids and $m_1+2m_2+2m_3+o(m_1)$ on 3D grids, with high probability. In contrast, such an instance has a minimum makespan of $m_1 + m_2 - o(m_1)$ for 2D grids and $m_1+m_2+m_3-o(m_1)$ for 3D grids with high probability. This implies that, as $m_1 \to \infty$, an asymptotic optimality guarantee of $\frac{m_1 + 2m_2}{m_1+m_2} \in (1, 1.5]$ is achieved for 2D grids and $\frac{m_1+2m_2+2m_3}{m_1+m_2+m_3}\in(1,\frac{5}{3}]$ for 3D grids, with high probability;
     \item For $\frac{1}{2}$ agent density, the same optimality guarantees as in the $\frac{1}{3}$ density setting can be achieved using \rtlm (Grid Rearrangements with Line Merge), using a merge-sort inspired agent routing heuristic; 
    \item Without modification, \rth achieves the same $\frac{m_1 + 2m_2}{m_1+m_2}$ optimality guarantee on 2D grids with up to $\frac{m_1m_2}{9}$ regularly distributed obstacles together with $\frac{2m_1m_2}{9}$ agents (e.g., Fig.~\ref{fig:jd_center}(b)). Similar guarantees hold for 3D settings; 
    \item For agent density up to $\frac{1}{2}$, for an arbitrary (i.e., not necessarily random) instance, a solution can be computed with a makespan of $3m_1+4m_2+o(m_1)$ on 2D grids and $3m_1+4m_2+4m_3+o(m_1)$ on 3D grids. 
    \item With minor numerical updates to the relevant guarantees, the above-mentioned results also generalize to grids of arbitrary dimension $k \ge 4$. 
\end{itemize}

 Moreover, we have developed many effective and principled heuristics that work together with the \rta-based algorithms to further reduce the computed makespan by a significant margin. These heuristics include (1) An optimized matching scheme, required in the application of the 
 grid rearrangement algorithm,
 based on linear bottleneck assignment (LBA), (2) A polynomial time path refinement method for compacting the solution paths to improve their quality.
As demonstrated through extensive evaluations, our methods are highly scalable and capable of tackling instances with tens of thousands of densely packed agents. Simultaneously, the achieved optimality matches/exceeds our theoretical prediction. With the much-enhanced scalability, our approach unveils a promising direction toward the development of practical, provably optimal multi-agent routing algorithms that run in low polynomial time. 

\begin{table}[h]
\scriptsize
  \centering
  \begin{tabular}{|c|c|c|c|c|}
    \hline
    \textbf{Algorithms} & \rtmapf 2x3 &  \rtmapf 4x2 & \rtlm &\rth  \\
\hline
\textbf{Applicable Density} & $\leq 1$ &$\leq 1$ &$\leq \frac{1}{2}$ & $\leq \frac{1}{3}$ \\
\hline
    \textbf{Makespan Upperbound} & $7(2m_2+m_1)$ & $4(2m_2+m_1)$ & $4m_2+3m_1$ & $4m_2+3m_1$ \\
    \hline
    \textbf{Asymptotic Makespan}&$7(2m_2+m_1)$ &$4(2m_2+m_1)$ &$2m_2+m_1+o(m_1)$ & $2m_2+m_1+o(m_1)$\\
    \hline
      \textbf{Asymptotic Optimality}&$7(1+\frac{m_2}{m_1+m_2})$ &$4(1+\frac{m_2}{m_1+m_2})$ &$1+\frac{m_2}{m_1+m_2}$ & $1+\frac{m_2}{m_1+m_2}$\\
    \hline
  \end{tabular}
  \caption{
Summary results of the algorithms proposed in this work. All algorithms operate on grids of dimensions $m_1\times m_2$. \rth, \rtmapf, and \rtlm are all further derived from \rtatwo, each a variant characterized by distinct low-level shuffle movements. Specifically for \rtmapf,  the low-level shuffle movement is tailored for facilitating full-density robot movement}
  \label{tab: algorithm_summarize}
\end{table}

This paper builds on two conference publications~\cite{guo2022sub,Guo2022PolynomialTN}. Besides providing a unified treatment of the problem that streamlined the description of \rta-based algorithms under 2D/3D/$k$D settings for journal archiving, the manuscript further introduces many new results, including (1) The baseline \rtmapf method (for the full-density case) is significantly improved with a much stronger makespan optimality guarantee, by a factor of $\frac{7}{4}$; (2) A new and general polynomial-time path refinement technique is developed that significantly boosts the optimality of the plans generated by all \rta-based algorithms; (3) Complete and substantially refined proofs are provided for all theoretical developments in the paper; and (4) The evaluation section is fully revamped to reflect the updated theoretical and algorithmic development. 
%
%

{\textbf{Related work.}} 
Literature on multi-agent path and motion planning \cite{hopcroft1984complexity,ErdLoz86} is expansive; here, we mainly focus on graph-theoretic (i.e., the state space is discrete) studies \cite{yu2016optimal,stern2019multi}. As such, in this paper, \mpp refers explicitly to graph-based multi-agent path planning. 
Whereas the feasibility question has long been positively answered  \cite{KorMilSpi84}, the same cannot be said for finding optimal solutions, as computing time- or distance-optimal solutions are NP-hard in many settings, including on general graphs \cite{goldreich2011finding,surynek2010optimization,yu2013structure}, planar graphs \cite{yu2015intractability,banfi2017intractability}, and regular grids \cite{demaine2019coordinated} that is similar to the setting studied in this work.  

Nevertheless, given its high utility, especially in e-commerce applications \cite{wurman2008coordinating,mason2019developing,wan2018lifelong} that are expected to grow significantly \cite{dekhne2019automation,covid-auto}, many algorithmic solutions have been proposed for optimally solving \mpp. 
Among these, combinatorial-search-based solvers \cite{lam2019branch} have been demonstrated to be fairly effective.  \mpp solvers may be classified as being optimal or suboptimal. 
Reduction-based optimal solvers solve the problem by reducing the \mpp problem to another problem, e.g., SAT~\cite{surynek2012towards}, answer set programming~\cite{erdem2013general}, integer linear programming (ILP)~\cite{yu2016optimal}.
Search-based optimal \mpp solvers include  EPEA* \cite{goldenberg2014enhanced}, ICTS \cite{sharon2013increasing}, CBS \cite{sharon2015conflict}, M* \cite{wagner2015subdimensional}, and many others. 

Due to the inherent intractability of optimal \mpp, optimal solvers usually exhibit limited scalability, leading to considerable interest in suboptimal solvers.
Unbounded solvers like push-and-swap~\cite{luna2011push}, push-and-rotate~\cite{de2014push}, windowed hierarchical cooperative A${}^*$~\cite{silver2005cooperative}, BIBOX~\cite{surynek2009novel}, all return feasible solutions very quickly, but at the cost of solution quality.
Balancing the running time and optimality is one of the most attractive topics in the study of \mpp. 
Some algorithms emphasize the scalability without sacrificing as much optimality, e.g., ECBS~\cite{barer2014suboptimal}, DDM \cite{han2020ddm}, PIBT \cite{Okumura2019PriorityIW}, PBS \cite{ma2019searching}. There are also learning-based solvers ~\cite{damani2021primal,sartoretti2019primal,li2021message} that scale well in sparse environments. Effective orthogonal heuristics have also been proposed \cite{guo2021spatial}. 
Recently, $O(1)$-approximate or constant factor time-optimal algorithms have been proposed, e.g.  \cite{yu2018constant,demaine2019coordinated,han2018sear}, that tackle highly dense instances. However, these algorithms only achieve a low-polynomial time guarantee at the expense of very large constant factors, rendering them theoretically interesting but impractical. 

In contrast, with high probability, our methods run in low polynomial time with provable $1.x$ asymptotic optimality. To our knowledge, this paper presents the first \mpp algorithms to simultaneously guarantee polynomial running time and $1.x$ solution optimality, which works for any dimension $\ge 2$.

\textbf{Organization.} The rest of the paper is organized as follows.
In Sec.~\ref{sec:pre}, starting with 2D grids, we provide a formal definition of graph-based \mpp, and introduce the Grid Rearrangement problem and the associated baseline algorithm (\rta) for solving it.
\rtmapf, a basic adaptation of \rta for \mpp at maximum agent density which ensures a makespan upper bound of $4m_1 + 8m_2$, is described in Sec.~\ref{sec:1:1}. An accompanying lower bound of $m_1 + m_2 - o(m_1)$ for random \mpp instances is also established. 
In Sec.~\ref{sec:1:2} we introduce \rth for $\frac{1}{3}$ agent density achieving a makespan upper bound of $m_1 + 2m_2 + o(m_1)$. Obstacle support is then discussed. 
We continue to show how $\frac{1}{2}$ agent density may be supported with similar optimality guarantees. 
In Sec.~\ref{sec: three_d}, we generalize the algorithms to work on 3$+$D grids.
In Sec.~\ref{sec:opt-boost}, we introduce multiple optimality-boosting heuristics to significantly improve the solution quality for all variants of Grid Rearrangement-based solvers.
We thoroughly evaluate the performance of our methods in Sec.~\ref{sec:eval} and conclude with Sec.~\ref{sec:conclusion}.

\section{Preliminaries}\label{sec:pre}
\subsection{Multi-Agent Path Finding on Graphs (\mpp)}
Consider an undirected graph $\mathcal G(V, E)$ and $n$ agents with start configuration $S = \{s_1, \dots, s_n\} \subseteq V$ and goal configuration $G = \{g_1, \dots, g_n\} \subseteq V$.
A {\em path} for agent $i$ is a map 
$P_i: \mathbb {N} \to V$ where $\mathbb N$ is the set of non-negative integers. 
A feasible $P_i$ must be a sequence of vertices that connects $s_i$ and $g_i$: 
1) $P_i(0) = s_i$;
2) $\exists T_i \in \mathbb N$, s.t. $\forall t \geq T_i, P_i(t) = g_i$;
3) $\forall t > 0$, $P_i(t) = P_i(t - 1)$ or $(P_i(t), P_i(t - 1)) \in E$. 
A path set $\{P_1, \ldots, P_n\}$ is feasible iff each $P_i$ is feasible and for all $t \ge 0$ and $1\le i < j \le n$, $P_i(t)=P$, it does not happen that: 1) $P_i(t) = P_j(t)$; 2) $P_i(t) = P_j(t+1) \wedge P_j(t) = P_i(t+1)$.

We work with $\mathcal G$ being $4$-connected grids in 2D and $6$-connected grids in 3D, aiming to mainly minimize the \emph{makespan}, i.e., $\max_i\{|{P}_i|\}$ (later, a sum-of-cost objective is also briefly examined).
Unless stated otherwise, $\mathcal G$ is assumed to be an $m_1 \times m_2$ grid with $m_1 \ge m_2 $ in 2D and $m_1\times m_2\times m_3$ grid with $m_1\ge m_2\ge m_3$ in 3D. As a note, ``randomness'' in this paper always refers to uniform randomness. 
The version of \mpp we study is sometimes referred to as \emph{one-shot} MAPF \cite{stern2019multi}. We mention our results also translate to guarantees on the life-long setting \cite{stern2019multi}, briefly discussed in Sec.~\ref{sec:conclusion}. 

\vspace{-2mm}
\subsection{The Grid Rearrangement Problem (\rtp)}
The Grid Rearrangement problem (\rtp) (first proposed in ~\cite{szegedy2023rubik} as the Rubik Table problem) formalizes the task of carrying out globally coordinated object reconfiguration operations on lattices, with many interesting applications. 
The problem has many variations; we start with the 2D form, to be generalized to higher dimensions later.

\begin{problem}[{\normalfont \bf Grid Rearrangement Problem in 2D (\rtptwo)} \cite{szegedy2023rubik}]
Let $M$ be an $m_1 (row) \times m_2 (column)$ table, $m_1 \ge m_2$, containing $m_1m_2$ items, one in each table cell. 
The items have $m_2$ colors with each color having a multiplicity of $m_1$.
In a \emph{shuffle} operation, the items in a single column or a single row of $M$ may be permuted in an arbitrary manner. 
Given an arbitrary configuration $X_I$ of the items, find a sequence of shuffles that take $M$ from $X_I$ to the configuration where
row $i$, $1 \leq i \leq m_1$, contains only items of color $i$. The problem may also be \emph{labeled}, i.e., each item has a unique label in $1, \ldots, m_1m_2$.
\end{problem}

A key result \cite{szegedy2023rubik}, which we denote here as the (labeled) Grid Rearrangement Algorithm in 2D (\rtatwo), shows that a colored \rtptwo can be solved using $m_2$ column shuffles followed by $m_1$ row shuffles, implying a low-polynomial time computation time.
Additional $m_1$ row shuffles then solve the labeled \rtptwo. We illustrate how \rtatwo works on an $m_1 \times m_2$ table with $m_1 =4$ and $m_2 =3$ (Fig.~\ref{fig:rubik}); we refer the readers to \cite{szegedy2023rubik} for details. The main supporting theoretical result is given in Theorem~\ref{t:rta}, which depends on Theorem~\ref{t:hall}.
%

\begin{theorem}[Hall’s Matching theorem with parallel edges \cite{hall2009representatives,szegedy2023rubik}]\label{t:hall}
Let $B$ be a $d$-regular ($d$ > 0) bipartite graph on $n + n$ nodes,
possibly with parallel edges. Then $B$ has a perfect
matching.
\end{theorem}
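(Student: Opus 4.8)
The plan is to deduce the claim from the classical Hall marriage theorem by verifying Hall's condition through a degree-counting argument, while taking care that parallel edges are accounted for correctly. Write the two sides of $B$ as $X$ and $Y$ with $|X| = |Y| = n$, and recall that Hall's theorem guarantees a matching saturating $X$ precisely when $|N(S)| \ge |S|$ holds for every $S \subseteq X$, where $N(S) \subseteq Y$ denotes the set of \emph{distinct} vertices adjacent to some vertex of $S$. Since $|X| = |Y| = n$, any matching saturating $X$ is automatically perfect, so it suffices to check Hall's condition.

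First I would fix an arbitrary $S \subseteq X$ and count, \emph{with multiplicity}, the edges incident to $S$. Because $B$ is $d$-regular, each vertex of $S$ contributes exactly $d$ edge-endpoints, so there are exactly $d|S|$ edges incident to $S$ (parallel edges counted separately). Every such edge has its other endpoint in $N(S)$, by definition of the neighborhood. On the opposite side, each vertex of $N(S)$ also has degree exactly $d$, so the total number of edges incident to $N(S)$, again counted with multiplicity, is exactly $d|N(S)|$. The edges incident to $S$ form a subcollection of the edges incident to $N(S)$, whence $d|S| \le d|N(S)|$; dividing by $d > 0$ yields $|S| \le |N(S)|$. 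Having verified Hall's condition for every $S$, Hall's theorem produces a matching saturating $X$, which is then a perfect matching of $B$.

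The one point requiring care --- and essentially the only place where parallel edges could cause trouble --- is that the neighborhood $N(S)$ is a \emph{set} of distinct vertices, whereas the regularity hypothesis speaks of degrees counted \emph{with multiplicity}. The argument goes through because multiplicity enters symmetrically on both sides: the counts $d|S|$ and $d|N(S)|$ both include parallel edges, so the common factor $d$ cancels cleanly and the set-cardinality inequality $|S| \le |N(S)|$ survives unchanged. I expect this reconciliation of ``degree with multiplicity'' against ``cardinality of a distinct-vertex set'' to be the only delicate step; everything else is routine.

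As an alternative route, should one prefer to avoid invoking Hall directly, one could induct on $d$: König's edge-coloring theorem decomposes any $d$-regular bipartite multigraph into $d$ perfect matchings, so in particular at least one exists. This is a stronger statement than needed, however, so the Hall-based verification above is the most economical proof.
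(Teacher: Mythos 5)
Your proof is correct. Note, however, that the paper does not prove this statement at all: it is imported verbatim from the cited literature (Hall's theorem in the multigraph form, as used in the Rubik Table paper of Szegedy and Yu), so there is no in-paper argument to compare against. Your double-counting verification of Hall's condition is the standard proof of exactly this fact: regularity gives $d|S|$ edge-endpoints leaving $S$ (with multiplicity), these all land in $N(S)$, which can absorb at most $d|N(S)|$ endpoints, and cancelling $d>0$ yields $|S|\le|N(S)|$; since the parts have equal size $n$, the resulting saturating matching is perfect. You are also right that the only subtlety is reconciling multiplicity-counted degrees with the set-valued neighborhood, and your observation that parallel edges enter both counts symmetrically disposes of it — equivalently, one may pass to the underlying simple graph, which has the same neighborhoods and whose matchings lift back to $B$. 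The alternative you mention (König's edge-coloring theorem decomposing a $d$-regular bipartite multigraph into $d$ perfect matchings) is likewise valid and is in fact closer in spirit to how the Rubik Table construction uses this theorem, since that construction needs all $m_2$ matchings, not just one; but as a proof of the stated existence claim, your Hall-based argument is the more economical route.
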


\begin{theorem}[{\normalfont \bf Grid Rearrangement Theorem \cite{szegedy2023rubik}}]\label{t:rta}
An arbitrary Grid Rearrangement problem on an $m_1\times m_2$ table can be solved using $m_1 + m_2$ shuffles. The labeled Grid Rearrangement problem can be solved using $2m_1 + m_2$ shuffles.
\end{theorem}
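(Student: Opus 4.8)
The plan is to prove both claims with the same two-sweep template: a \emph{preprocessing} sweep of single-line shuffles in one direction that balances the colors, followed by a \emph{sorting} sweep of single-line shuffles in the other direction that finishes the job. The only nontrivial ingredient is the existence and \emph{realizability} of the preprocessing sweep, which I would obtain from a perfect-matching decomposition built on Theorem~\ref{t:hall}.

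For the colored case, I would aim to make column $j$ hold exactly the items of color $j$ using $m_2$ column shuffles followed by $m_1$ row shuffles. The key is to show that $m_2$ column shuffles suffice to reach an intermediate configuration in which every row is a \emph{rainbow}, i.e., contains each of the $m_2$ colors exactly once (this is consistent since each row has $m_2$ cells). To exhibit such a configuration I would encode the current contents as a bipartite multigraph $B$ whose two sides are the $m_2$ columns and the $m_2$ colors, inserting one edge between column $j$ and color $c$ for each item of color $c$ presently sitting in column $j$. Since each column holds $m_1$ items and each color has multiplicity $m_1$, $B$ is $m_1$-regular (possibly with parallel edges). By Theorem~\ref{t:hall} it has a perfect matching, and—crucially—deleting that matching leaves an $(m_1-1)$-regular bipartite multigraph, so iterating yields an edge-disjoint decomposition of $B$ into $m_1$ perfect matchings $N_1,\dots,N_{m_1}$. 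I would then assign $N_k$ to row $k$: the item of $N_k$ drawn from column $j$ is placed in cell $(k,j)$. Because each $N_k$ is a bijection between columns and colors, row $k$ becomes a rainbow; because the $m_1$ matchings partition the $m_1$ edges incident to each column node, the items sent to column $j$ across the rows are exactly the items originally in column $j$, so the reassignment is realizable by a single shuffle of each column. After this, one row shuffle per row (for a total of $m_1$) moves the color-$j$ item of each row into column $j$, making every column monochromatic; the colored total is $m_2 + m_1$.

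For the labeled case I would reduce to the colored one. Define the \emph{color} of an item to be its target row; this gives $m_1$ colors, each of multiplicity $m_2$, a balanced instance to which the colored algorithm applies after swapping the roles of rows and columns. By that symmetric version, every item can be routed to its correct row using $m_1 + m_2$ shuffles. At that point each row contains exactly the $m_2$ items destined for it, and these have pairwise distinct target columns, so a final sweep of $m_1$ row shuffles sorts each row internally and delivers every labeled item to its exact target cell; the labeled total is $(m_1+m_2)+m_1 = 2m_1 + m_2$. The main obstacle is entirely in the preprocessing step: one must verify both the matching \emph{decomposition} (handling parallel edges, which is exactly why the parallel-edge form of Hall in Theorem~\ref{t:hall} is invoked) and the \emph{per-line multiset preservation} that makes the computed reassignment expressible as one shuffle per line; the sorting sweeps and the bookkeeping of shuffle counts are then routine.
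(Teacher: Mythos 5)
Your proposal is correct and takes essentially the same route as the paper's (Szegedy--Yu) argument: encode the current contents as a regular bipartite multigraph between lines and colors, use Theorem~\ref{t:hall} iteratively to decompose it into edge-disjoint perfect matchings, let each matching prescribe one line of an intermediate configuration that is realizable by a single sweep of parallel line shuffles (the per-line multiset-preservation point you verify), and finish with one sweep for the colored case and two for the labeled case, giving $m_1+m_2$ and $2m_1+m_2$ respectively. The only cosmetic difference is that your colored phase is the transpose of the paper's presentation (columns matched to colors, column shuffles first), an ordering the paper itself notes is equally admissible.
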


\rtatwo operates in two phases. In the first, a bipartite graph $B(T, R)$ is constructed based on the initial table where the bipartite set $T$ are the colors/types of items, and the set $R$ the rows of the table  (see Fig.~\ref{fig:rubik}(b)). An edge is added to $B$ between $t \in T$ and $r \in R$ for every item of color $t$ in row $r$. 
From $B(T,R)$, which is a \emph{regular bipartite graph}, $m_2$ \emph{perfect matchings} can be computed as guaranteed by Theorem~\ref{t:hall}. Each matching, containing $m_1$ edges, dictates how a table column should look like after the first phase. For example, the first set of matching (solid lines in Fig.~\ref{fig:rubik}(b)) says the first column should be ordered as yellow, cyan, red, and green, shown in Fig.~\ref{fig:rubik}(c). 
After all matchings are processed, we get an intermediate table, Fig.~\ref{fig:rubik}(c). Notice each row of Fig.~\ref{fig:rubik}(a) can be shuffled to yield the corresponding row of Fig.~\ref{fig:rubik}(c); a key novelty of \rtatwo.
After the first phase of $m_1$ row shuffles, the intermediate table (Fig.~\ref{fig:rubik}(c))  can be rearranged with $m_2$ column shuffles to solve the colored \rtptwo (Fig.~\ref{fig:rubik}(d)). Another $m_1$ row shuffles then solve the labeled \rtptwo (Fig.~\ref{fig:rubik}(e)). It is also possible to perform the rearrangement using $m_2$ column shuffles, followed by $m_1$ row shuffles, followed by another $m_2$ column shuffles. 

\begin{figure}[h]
        \centering
        \begin{overpic}[width=\linewidth]{./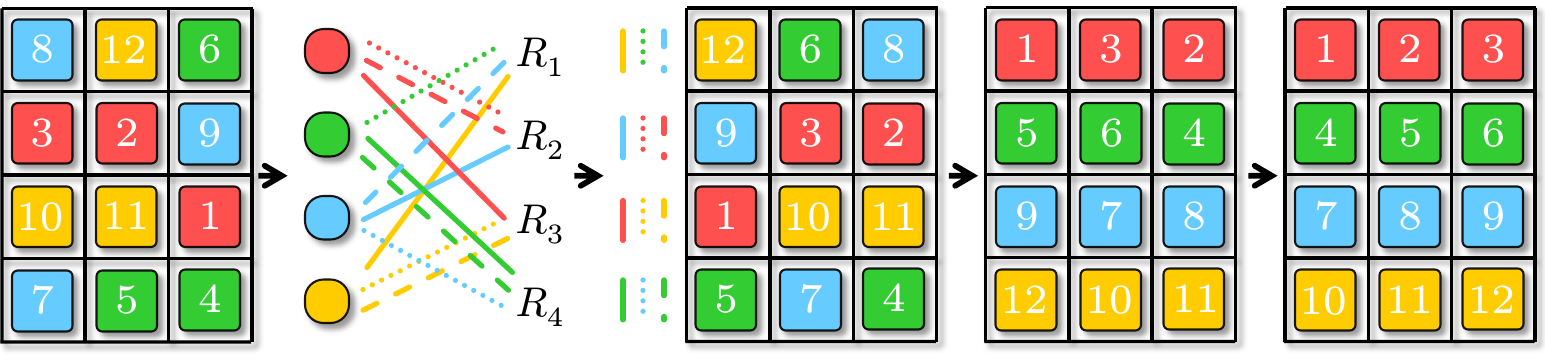}
             \footnotesize
             \put(6.5,  -2) {(a)}
             \put(26,  -2) {(b)}
             \put(50.5,  -2) {(c)}
             \put(70,  -2) {(d)}
             \put(89.5,  -2) {(e)}
        \end{overpic}
        
        \caption{Illustration of applying the \emph{$11$ shuffles}. (a) The initial $4\times 3$ table with  a random arrangement of 12 items that are colored and labeled. The labels are consistent with the colors. (b) The constructed bipartite graph. It contains $3$ perfect matchings, determining the $3$ columns in (c); only color matters in this phase. (c) Applying $4$ row shuffles to (a), according to the matching results, leads to an intermediate table where each column has one color appearing exactly once. (d) Applying $3$ column shuffles to (c) solves a colored \rtptwo. (e) $4$ additional row shuffles fully sort the labeled items.} 
        \label{fig:rubik}
    \end{figure}

\rtatwo runs in $O(m_1m_2\log m_1)$ (notice that this is nearly linear with respect to $n = m_1m_2$, the total number of items) expected time or $O(m_1^2m_2)$ deterministic time. 

\section{Solving \mpp at Maximum Density Leveraging \rtatwo}\label{sec:1:1}
The built-in global coordination capability of \rtatwo naturally applies to solving makespan-optimal \mpp.
Since \rtatwo only requires \emph{three rounds} of shuffles and each round involves either parallel row shuffles or parallel column shuffles, if each round of shuffles can be realized with makespan proportional to the size of the row/column, then a makespan upper bound of $O(m_1 + m_2)$ can be guaranteed. 
This is in fact achievable even when all of $\mathcal G$'s vertices are occupied by agents, by recursively applying a \emph{labeled line shuffle algorithm} \cite{yu2018constant}, which can arbitrarily rearrange a line of $m$ agents embedded in a grid using $O(m)$ makespan. 
\begin{lemma}[Basic Simulated Labeled Line Shuffle \cite{yu2018constant}]\label{l:line-shuffle} For $m$ labeled agents on a straight path of length $m$, embedded in a 2D grid, they may be arbitrarily ordered in $O(m)$ steps. Moreover, multiple such reconfigurations can be performed on parallel paths within the grid. 
\end{lemma}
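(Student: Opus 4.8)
The plan is to decouple the \emph{combinatorial} task (deciding which adjacent swaps realize the target ordering) from the \emph{physical} task (executing a batch of adjacent swaps without collisions), and to bound the cost of each separately.

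First I would reduce an arbitrary reordering to a bounded-depth schedule of parallel adjacent transpositions. Label each agent by the rank of the line cell it must occupy in the target ordering; producing the target arrangement is then equivalent to sorting these ranks along the line. By the classical guarantee for \emph{odd--even transposition sort}, any permutation of $m$ keys on a linear array is sorted in at most $m$ rounds, where each round performs a set of \emph{pairwise disjoint} adjacent compare--exchanges --- all pairs $(2k-1,2k)$ in odd rounds and all pairs $(2k,2k+1)$ in even rounds. Precomputing, for each comparator, whether it swaps or not yields a fixed schedule of $m$ rounds, each round being a set of non-overlapping adjacent transpositions on the line.

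Second I would simulate one round of these transpositions in $O(1)$ makespan using a strip of free cells running parallel to the line as a buffer. To swap the two agents occupying adjacent line cells $L,R$, I use the two buffer cells $L',R'$ directly beside them and execute a $4$-cycle rotation in three steps: the left agent steps off the line into $L'$; simultaneously the right agent slides into $L$ while the first agent advances along the buffer from $L'$ to $R'$; finally the first agent steps back onto the line into $R$. This is vertex- and edge-collision-free, and because the active pairs within a round are non-overlapping, the buffer cells they use are pairwise disjoint, so all swaps of a round run in parallel. Hence each round costs three steps and the $m$ rounds cost $3m = O(m)$ steps overall.

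For the ``moreover'' claim I would observe that the simulation above is entirely \emph{local}: executing a round touches only the line cells and their adjacent buffer cells. Thus, given several disjoint parallel lines each equipped with its own disjoint buffer strip, the per-line simulations occupy spatially disjoint cell sets and can be driven in lockstep, so all lines are shuffled simultaneously within the same $O(m)$ makespan. I expect the main obstacle to be buffer management rather than the reduction: guaranteeing a collision-free buffer for \emph{every} line at once while keeping the step count $O(m)$. For well-separated lines a dedicated adjacent strip suffices, but when buffer space is scarce (neighboring lines competing for the same free cells) one must either reserve a fixed-width halo around each line or stagger the rounds; verifying that such an allocation exists without inflating the makespan beyond $O(m)$ is the delicate point, and it is precisely what the recursive full-density treatment invoked in the surrounding text is designed to supply.
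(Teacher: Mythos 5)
Your combinatorial reduction (odd--even transposition sort, giving a fixed schedule of disjoint adjacent swaps in $O(m)$ rounds) is exactly the one the paper uses, but your physical simulation has a genuine gap: it presupposes a strip of \emph{empty} buffer cells beside each line, and that resource does not exist in the regime where this lemma is actually deployed. The lemma is invoked to prove Theorem~\ref{t:rtm-makespan}, where \emph{every} vertex of the grid is occupied by an agent; in that setting the parallel paths being shuffled tile the entire grid, so there are zero free cells anywhere. A buffer-strip scheme caps out at roughly $1/2$ agent density (one occupied line per empty line), and neither of your proposed fixes survives full density: a ``fixed-width halo'' of free cells cannot be reserved when no free cells exist, and ``staggering the rounds'' does not create free space either. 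Your closing sentence, which defers the buffer-allocation problem to ``the recursive full-density treatment invoked in the surrounding text,'' is circular --- this lemma \emph{is} the primitive on which that full-density treatment is built.

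The missing idea is to replace buffer-based swaps with \emph{cyclic rotations}. On any cycle in the grid, all agents on it can simultaneously advance one position per step, and this is collision-free regardless of how many cells of the cycle are occupied --- even all of them. The paper (following \cite{yu2018constant}) simulates an adjacent transposition by composing three such rotations inside a $2\times 3$ block (Fig.~\ref{fig:figure8}): the two designated agents are swapped in $3$ steps while the block may be completely full. Because the blocks partition the grid (Fig.~\ref{fig:odd-even}) and rotations need no empty cells, every row passing through a block is served at once, which is what makes the ``moreover'' clause true at $100\%$ density; the paper's Lemma~\ref{l:reconfigure} then sharpens the per-block step counts. If you swap your three-step buffer maneuver for this rotation primitive, the rest of your argument (disjoint comparators per round, lockstep execution across rounds, $O(m)$ total) goes through essentially unchanged.
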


The key operation is based on a localized, $3$-step pair swapping routine, shown in Fig.~\ref{fig:figure8}. For more details on the line shuffle routine, see \cite{yu2018constant}. 

\begin{figure}[h!]
        \centering
        \includegraphics[width=\linewidth]{./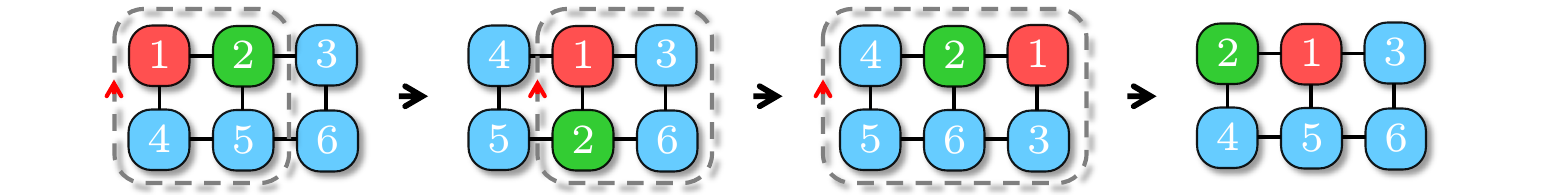}
\caption{On a $2 \times 3$ grid, swapping two agents may be performed in three steps with three cyclic rotations.} 
        \label{fig:figure8}
\end{figure}
    
However, the basic simulated labeled line-shuffle algorithm has a large constant factor. Each shuffle takes 3 steps; doing arbitrary shuffling of a $2 \times 3$ takes $20+$ steps in general. The constant factor further compounds as we coordinate the shuffles across multiple lines. Borrowing ideas from \emph{parallel odd-even sort} \cite{bitton1984taxonomy}, we can greatly reduce the constant factor in Lemma \ref{l:line-shuffle}. We will do this in several steps. First, we need the following lemma. By an \emph{arbitrary horizontal swap} on a grid, we mean an arbitrary reconfiguration or a grid row.

\begin{lemma}\label{l:reconfigure}
It takes at most $7$, $6$, $6$, $7$, $6$, and $8$ steps to perform arbitrary combinations of arbitrary horizontal swaps on $3\times 2$, $4\times 2$, $2 \times 3$, $3\times 3$, $2\times 4$, and $3\times 4$ grids, respectively.
\end{lemma}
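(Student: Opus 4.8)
The plan is to treat this as a finite, constructive claim: for each of the six listed grids, exhibit an explicit schedule of synchronous moves that realizes an arbitrary product of within-row permutations (one independent permutation per row) within the stated number of steps, where a \emph{step} is one synchronous move respecting the \mpp constraints (no vertex collisions, no edge swaps) but allowing synchronous rotations along a cycle, which is the mechanism already used by the $2\times 3$ swap of Fig.~\ref{fig:figure8}. Because the grids are fully occupied, an individual agent can never slide into its target without a neighbor vacating, so every elementary reconfiguration is driven by such rotations rather than by a single empty cell. The worst-case makespan over all row-preserving targets is therefore a well-defined finite quantity, and the lemma asserts an upper bound on it for each shape.

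The engine of the construction is the parallel odd-even transposition idea. Any permutation of a length-$\ell$ row decomposes into $\ell$ phases of disjoint adjacent compare-exchanges (one phase for $\ell=2$, three for $\ell=3$, four for $\ell=4$), and, crucially, the same phase schedule can be run simultaneously in every row since the adjacent pairs touched in a given phase are disjoint within each row. I would first show how to realize a single phase, i.e.\ a prescribed set of disjoint within-row adjacent transpositions performed in parallel across all rows, using the buffer supplied by the extra row(s) or column(s) and a small number of synchronous rotations whose count depends on the available buffer; this is the generalization of the three-step swap of Fig.~\ref{fig:figure8} from one pair to a full phase. I would then pipeline consecutive phases, overlapping the ``return'' rotations of one phase with the ``setup'' rotations of the next, and, when the number of rows is even (the $4\times 2$ case), pair the rows into independent $2\times c$ blocks so that two phases proceed in genuine parallel. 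This pairing is what makes $4\times 2$ cost only six steps while $3\times 2$ costs seven, since the odd third row cannot be absorbed for free.

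The remaining work is to pin down the exact constants and to certify the bound for \emph{every} target, not merely for an anticipated worst case. I would shrink the case set using the symmetries of the grid (horizontal and vertical reflection) together with the fact that the per-row targets can be chosen independently up to the buffer they share, so it suffices to schedule each row's worst permutation (the reversal for $\ell=3,4$; the single swap for $\ell=2$) and to check the few ways these interact across adjacent rows. For these tiny grids the resulting case set is finite and small, so each claimed number can be certified either by writing out the optimal rotation schedule for the worst interacting targets or, equivalently and exhaustively, by a breadth-first search over the finite configuration graph, where the stated count equals the maximum over row-preserving targets of the optimal makespan.

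The step I expect to be the main obstacle is exactly this certification. \mpp makespan is not monotone---neither under enlarging the grid (witness $3\times 2$ costing seven but $4\times 2$ only six) nor under ``simplifying'' a target---so one cannot argue that a single hand-picked permutation dominates all others, and the asymmetry between transposed grids ($2\times 3$ at six versus $3\times 2$ at seven) confirms that the bounds are genuinely geometry- and orientation-dependent rather than given by a clean closed form. Hence the honest way to close the argument is a finite but careful enumeration, and the real difficulty is organizing the rotation schedules so that the parallel phases across rows never induce a vertex collision or an edge swap while still meeting the tight constants: the buffer is scarce (only one or two spare rows or columns), forcing the schedules to reuse the same cells for consecutive phases, which is where most of the bookkeeping---and the gap between a naive $3\ell$-type estimate and the sharper $6$--$8$ figures---resides.
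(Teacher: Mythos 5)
Your proposal lands exactly where the paper does: the paper's proof is a one-line exhaustive computation --- using integer linear programming it solves all $2^3$, $2^4$, $6^2$, $6^3$, $24^2$, and $24^3$ row-permutation targets makespan-optimally on the six grids and reads off the maxima --- which is the same finite certification you arrive at in your final paragraph (with breadth-first search over the configuration graph in place of ILP as the exact solver, an immaterial difference). Your odd-even-transposition and pipelining construction is extra scaffolding the paper never uses, and since you yourself correctly recognize that it cannot certify the tight constants and fall back on exhaustive enumeration, the two proofs coincide in substance.
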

\begin{proof}
Using integer linear programming \cite{yu2016optimal}, we exhaustively compute makespan-optimal solutions for arbitrary horizontal reconfigurations on $3\times 2$ ($8 = 2^3$ possible cases), $4 \times 2$ grids ($2^4$ possible cases), $2 \times 3$ grids ($6^2$ possible cases), $3 \times 3$ grids ($6^3$ possible cases), $2 \times 4$ grids ($24^2$ possible cases), and $3 \times 4$ grids ($24^3$ possible cases), which confirms the claim. 
\end{proof}
As an example, it takes seven steps to horizontally ``swap'' all three pairs of agents on a $3\times 2$ grid, as shown in Fig.~\ref{fig:six}. 

 \begin{figure}[!htbp]
        \centering
        \includegraphics[width=1\linewidth]{./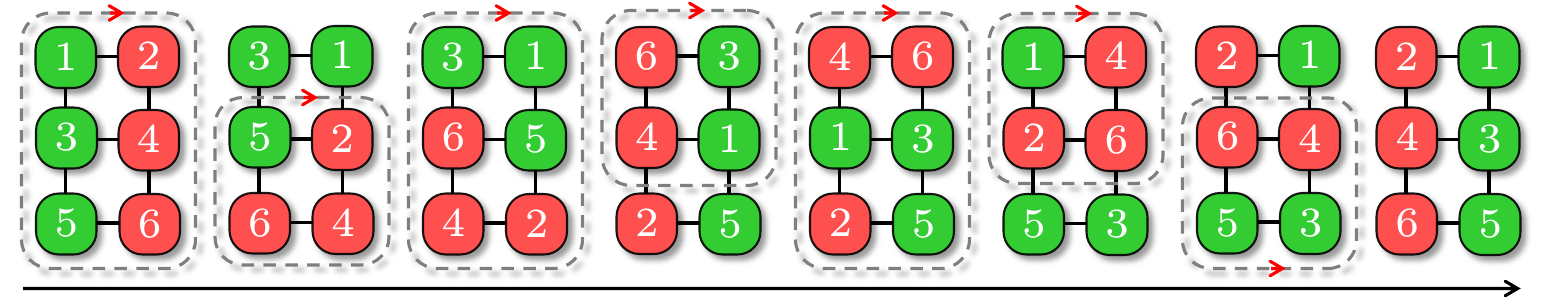}
\caption{An example of a full horizontal ``swap'' on a $3\times 2$ grid that takes seven steps, in which all three pairs are swapped. 
} 
        \label{fig:six}
    \end{figure}

\begin{lemma}[Fast Line Shuffle]\label{l:fast-line-shuffle}
Embedded in a 2D grid, $m$ agents on a straight path of length $m$ may be arbitrarily ordered in $7m$ steps. Moreover, multiple such reconfigurations can be executed in parallel within the grid. 
\end{lemma}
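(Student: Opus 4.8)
The plan is to reduce the arbitrary reordering of a line to \emph{sorting}, and then to realize that sort by a parallel odd--even transposition network whose every layer is executed by the constant-step primitive of Lemma~\ref{l:reconfigure}.

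First I would turn ``arbitrarily order'' into ``sort.'' Assign to each of the $m$ agents on the path a key equal to the index of the cell it must occupy in the target ordering; producing the target configuration is then exactly the problem of sorting these $m$ keys along the path, where the only admissible primitive is exchanging two \emph{adjacent} agents. Since the target permutation is known in advance, I can precompute, for each comparator, whether it must swap or leave its pair fixed, so I only need a network of adjacent comparators of small parallel depth. For this I invoke \emph{odd--even transposition sort} \cite{bitton1984taxonomy}: on a line of $m$ elements it sorts any input in $m$ layers, where odd layers act on the disjoint adjacent pairs $(1,2),(3,4),\dots$ and even layers on $(2,3),(4,5),\dots$. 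Thus the whole reordering decomposes into $m$ layers, each a set of \emph{disjoint} adjacent swaps along the path.

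The crux is to execute one layer in a constant number of steps while simultaneously permitting many parallel lines. I would orient the path as a grid row and group the grid's rows into consecutive triples; within a single layer, each comparator spans two adjacent columns, so restricted to one triple it occupies a $3\times 2$ subgrid. For such a block, each of its three rows independently either must swap its pair or not --- precisely an \emph{arbitrary combination of arbitrary horizontal swaps} --- which by Lemma~\ref{l:reconfigure} is realizable in at most $7$ steps using only the six cells of that block. Because the comparators of one layer occupy column-disjoint blocks and the triples are row-disjoint, all these $3\times 2$ routines run concurrently without collision; a lone line is the special case in which the other two rows of its triple request no swap. Hence each of the $m$ layers costs at most $7$ steps, giving $7m$ total and handling arbitrarily many parallel lines at once.

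I expect the main obstacle to be boundary bookkeeping rather than anything conceptual: rows whose count is not a multiple of three, and the end columns left unpaired in odd/even layers, must be assigned to valid blocks (or declared idle) so that every invoked routine sees a genuine fully occupied $3\times 2$ block, as Lemma~\ref{l:reconfigure} requires. I would also verify that each block routine returns all six agents to that same block, so that consecutive layers compose cleanly and disjoint blocks never contend for a shared cell. Once this localization is confirmed, the $7m$ bound and the parallelism claim follow immediately.
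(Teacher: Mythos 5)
Your proposal is correct and follows essentially the same route as the paper: the paper likewise reduces the reordering to parallel odd--even transposition sort and simulates each layer of disjoint adjacent swaps via Lemma~\ref{l:reconfigure} on row-disjoint, column-disjoint $3\times 2$ blocks (using $4\times 2$ blocks for leftover rows when the perpendicular dimension is not a multiple of three), yielding at most $7$ steps per layer and $7m$ overall.
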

\begin{proof}
Arranging $m$ agents on a straight path of length $m$ may be realized using parallel odd-even sort \cite{bitton1984taxonomy} in $m-1$ rounds, which only requires the ability to simulate potential pairwise ``swaps'' interleaving odd phases (swapping agents located at positions $2k + 1$ and $2k + 2$ on the path for some $k$) and even phases (swapping agents located at positions $2k + 2$ and $2k + 3$ on the path for some $k$). Here, it does not matter whether $m$ is odd or even. 
To simulate these swaps, we can partition the grid embedding the path into $3 \times 2$ grids in two ways for the two phases, as illustrated in Fig.~\ref{fig:odd-even}. 
\begin{figure}[h!]
        \centering
        \includegraphics[width=1\linewidth]{./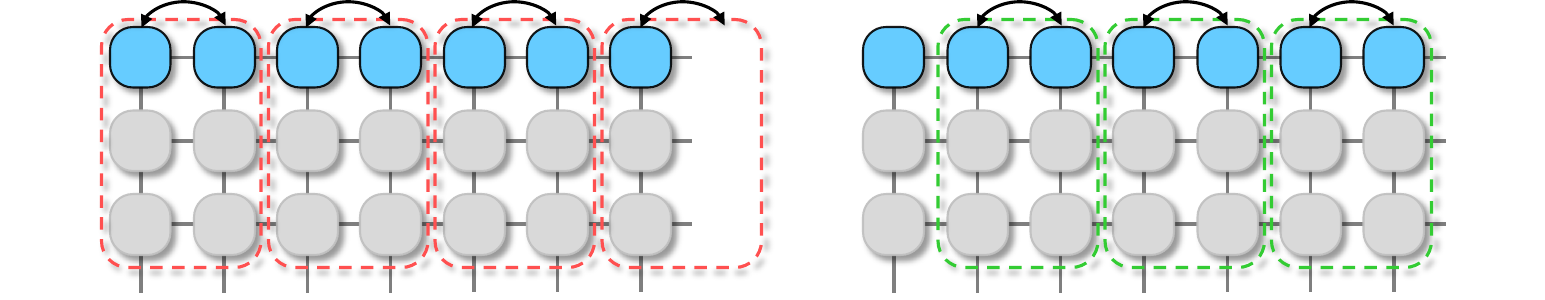}
\caption{Partitioning a grid into disjoint $3 \times 2$ grids in two ways for simulating odd-even sort. Highlighted agent pairs may be independently ``swapped'' within each $3\times 2$ grid as needed.} 
        \label{fig:odd-even}
    \end{figure}

A perfect partition requires that the second dimension of the grid, perpendicular to the straight path, be a multiple of $3$. If not, some partitions at the bottom can use $4 \times 2$ grids. By Lemma~\ref{l:reconfigure}, each odd-even sorting phase can be simulated using at most $7m$ steps. Shuffling on parallel paths is directly supported. 
\end{proof}

After introducing a $7m$ step line shuffle, we further show how it can be dropped to $4m$, using similar ideas. The difference is that an updated parallel odd-even sort will be used with different sub-grids of sizes different from $2\times 3$ and $2\times 4$. 

The updated parallel odd-even sort operates on blocks of \emph{four} (Fig.~\ref{fig:odd-even-2}) instead of \emph{two} (Fig.~\ref{fig:odd-even}), which cuts down the number of parallel sorting operations from $m-1$ to about $m/2$. 
Here, if $m$ is not even, a partition will leave either $1$ or $3$ at the end. For example, if $m= 11$, it can be partitioned as $4, 4, 3$ and $2, 4, 4, 1$ in the two parallel sorting phases.  

\begin{figure}[h]
        \centering
        \includegraphics[width=0.9\linewidth]{./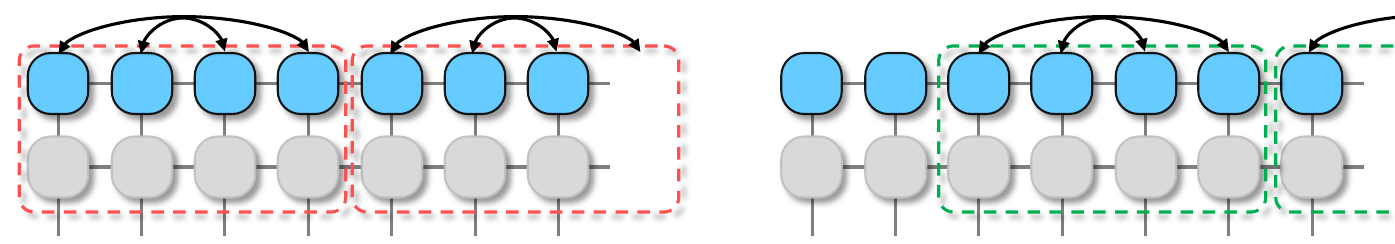}
\caption{We can make the parallel odd-even sort work twice faster by increasing the swap block size from two to four.} 
        \label{fig:odd-even-2}
    \end{figure}

With the updated parallel odd-even sort, we must be able to make swaps on blocks of four. We do this by partition an $m_1\times m_2$ grid into $2\times 4$, which may have leftover sub-grids of sizes $2\times 3$, $3\times 4$, and $3 \times 3$. 
Using the same reasoning in proving Lemma~\ref{l:fast-line-shuffle} and with Lemma~\ref{l:reconfigure}, we have

\begin{lemma}[Faster Line Shuffle]\label{l:faster-line-shuffle}
Embedded in a 2D grid, $m$ agents on a straight path of length $m$ may be arbitrarily ordered in approximately $4m$ steps. Moreover, multiple such reconfigurations can be executed in parallel within the grid. 
\end{lemma}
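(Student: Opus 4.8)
The plan is to follow the architecture of the proof of Lemma~\ref{l:fast-line-shuffle} essentially verbatim, replacing its two ingredients by faster analogues. Recall that there, sorting $m$ agents on a path was reduced to (i) a \emph{combinatorial} fact — parallel odd-even transposition sort realizes any permutation of $m$ items in $m-1$ rounds, each round being a set of disjoint adjacent pairwise swaps — and (ii) a \emph{simulation} fact — each such round can be carried out on the grid using $3\times 2$ (and leftover $4\times 2$) sub-grids, costing $\le 7$ steps by Lemma~\ref{l:reconfigure}. To reach $4m$, I would coarsen the swap granularity from $2$ to $4$: use a block-$4$ sorting network that needs only about $m/2$ rounds, and simulate each round with $2\times 4$ sub-grids, which Lemma~\ref{l:reconfigure} handles in $\le 8$ steps; the product is $8\cdot(m/2)=4m$.

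For ingredient (i) I would define the block-$4$ network as alternating two partitions of the path $1,\dots,m$ into consecutive windows of length $4$: an \emph{aligned} partition $\{1,2,3,4\},\{5,6,7,8\},\dots$ and an \emph{offset} partition shifted by $2$, namely $\{1,2\},\{3,4,5,6\},\{7,8,9,10\},\dots$, exactly as in the $m=11$ example ($4,4,3$ versus $2,4,4,1$). In each round the agents inside every window are permuted arbitrarily. The key claim is that alternating these two phases routes any target permutation in about $m/2$ rounds. I would justify this by reduction to the already-used block-$2$ odd-even sort: every adjacent pair $(i,i+1)$ lies inside some window of the aligned phase or some window of the offset phase (the offset windows are precisely centered on the boundaries that the aligned windows straddle), and an arbitrary permutation of a length-$4$ window subsumes two consecutive layers of adjacent transpositions restricted to that window. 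Hence one aligned round followed by one offset round emulates a constant number of full odd-even layers, so the $m$ layers that suffice for odd-even sort are covered by $\Theta(m/2)$ block-$4$ rounds. Alternatively, and perhaps more cleanly, I would invoke the $0$/$1$ principle and track the single $0$/$1$ boundary, showing it advances by $2$ positions per round, again giving $\approx m/2$ rounds.

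For ingredient (ii), each block-$4$ round asks for an \emph{arbitrary horizontal swap} on every window of $4$ consecutive path cells. Laying the path along grid rows and tiling the embedding grid by $2\times 4$ sub-grids (with leftover $2\times 3$, $3\times 4$, and $3\times 3$ tiles where the along-path length is $3$ or the perpendicular dimension is not even), Lemma~\ref{l:reconfigure} bounds the cost of each tile's reconfiguration by $6$, $6$, $8$, and $7$ steps respectively, so a whole round costs at most $8$ steps. Since the tiling covers the entire grid and each tile operation is local and independent, all windows — and indeed all parallel paths embedded in the grid — are reconfigured simultaneously, which yields the ``multiple parallel reconfigurations'' clause for free, exactly as in Lemma~\ref{l:fast-line-shuffle}. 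Multiplying the $\le 8$ steps per round by the $\approx m/2$ rounds gives the stated $\approx 4m$ bound.

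The main obstacle I anticipate is pinning down ingredient (i) rigorously, i.e.\ proving that the overlapping block-$4$ network really sorts in about $m/2$ rounds rather than merely asserting a ``twice as fast'' heuristic. The delicate points are the correct handling of the two ends of the path, where the offset phase leaves leftover windows of size $1$, $2$, or $3$, and controlling the lower-order additive term so that the count is $m/2 + O(1)$ (consistent with the ``approximately'' in the statement). I would resolve both via the $0$/$1$-principle argument, which is insensitive to these end effects and directly certifies sorting after $\lceil m/2\rceil + O(1)$ rounds.
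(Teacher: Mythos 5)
Your proposal matches the paper's own proof essentially verbatim: the paper likewise replaces the block-2 odd-even sort by the block-4 variant with aligned ($4,4,\dots$) and offset ($2,4,4,\dots$) partitions to cut the round count to about $m/2$, and simulates each round via $2\times 4$ tiles (with $2\times 3$, $3\times 4$, $3\times 3$ leftovers) at $\le 8$ steps each by Lemma~\ref{l:reconfigure}, yielding approximately $4m$. The only difference is that you attempt to rigorously certify the $\approx m/2$-round claim, which the paper's proof sketch merely asserts; that extra care is reasonable, though your ``track the single $0$/$1$ boundary'' variant should be replaced by the standard leftmost-$0$/rightmost-$1$ argument, since a $0$/$1$ input generally has many boundaries.
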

\begin{proof}[Proof sketch]
The updated parallel odd-even sort requires a total of $m/2$ steps. Since each step operated on a $2\times 4$, $2 \times 3$, $3 \times 4$, or $3\times 3$ grid, which takes at most $8$ steps, the total makespan is approximately $4m$. 
\end{proof}

Combining \rtatwo and fast line shuffle (Lemma~\ref{l:faster-line-shuffle}) yields a polynomial time \mpp algorithm for fully occupied grids with a makespan of $4m_1 + 8m_2$. 

\begin{theorem}[\mpp on Grids under Maximum Agent Density, Upper Bound]\label{t:rtm-makespan}
\mpp on an $m_1\times m_2$ grid, $m_1 \ge m_2 \ge 3$, with each grid vertex occupied by an agent, can be solved in polynomial time in a makespan of $4m_1 + 8m_2$.
\end{theorem}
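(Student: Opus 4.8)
The plan is to recognize that \mpp at maximum agent density \emph{is} a labeled \rtptwo instance and then pay for each of the three shuffle rounds guaranteed by Theorem~\ref{t:rta} using the fast parallel simulation of Lemma~\ref{l:faster-line-shuffle}. Concretely, since every vertex of the $m_1\times m_2$ grid holds an agent, both the start configuration $S$ and the goal configuration $G$ are bijections between agents and cells. Treating each agent as a labeled item placed at its start cell, the requirement that agent $i$ reach $g_i$ is exactly the target of a labeled Grid Rearrangement problem (any target bijection is a relabeling of the canonical sorted order). Theorem~\ref{t:rta} then tells us the reconfiguration decomposes into three rounds of shuffles; I will use the order row-shuffles, then column-shuffles, then row-shuffles (the $m_1$-row, $m_2$-column, $m_1$-row pattern of \rtatwo), as opposed to the alternative column-row-column pattern.

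Next I would cost each round by invoking the parallel-paths clause of Lemma~\ref{l:faster-line-shuffle}. A round of row shuffles is a set of $m_1$ independent arbitrary reorderings, one per row, where each row is a straight path of length $m_2$; by Lemma~\ref{l:faster-line-shuffle} all of them can be carried out simultaneously in approximately $4m_2$ steps. Symmetrically, a round of column shuffles reorders $m_2$ columns, each a path of length $m_1$, in approximately $4m_1$ steps. The hypothesis $m_1 \ge m_2 \ge 3$ is exactly what supplies the perpendicular clearance the line shuffle needs: when simulating row shuffles the grid is tiled by disjoint sub-grids of height $2$ or $3$ drawn from the $m_1$ direction (and dually height $2$ or $3$ from the $m_2$ direction for column shuffles), and Lemma~\ref{l:reconfigure} bounds each sub-grid reconfiguration by at most $8$ steps. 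Summing over the chosen order gives a makespan of $4m_2 + 4m_1 + 4m_2 = 4m_1 + 8m_2$; choosing this order over column-row-column is what exploits $m_1 \ge m_2$, since the latter would yield $8m_1 + 4m_2 \ge 4m_1 + 8m_2$. Polynomiality follows because \rtatwo computes the shuffle schedule in low polynomial time and each round's move sequence is generated in polynomial time, and the three rounds simply concatenate (agents rest at well-defined cells between rounds, so no collisions are introduced at the seams).

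The step I expect to require the most care is not the arithmetic but the justification that the $m_1$ per-row (resp. $m_2$ per-column) shuffles genuinely run \emph{in parallel} within the stated per-round makespan, rather than serially. The worry is that maneuvering room for a swap in one row is borrowed from a neighboring row, so naively the rows could contend for space. This is precisely resolved by the disjoint sub-grid tiling underlying Lemma~\ref{l:faster-line-shuffle}: the whole grid is partitioned into non-overlapping blocks, each block handles the adjacent-pair swaps for all rows it spans via Lemma~\ref{l:reconfigure}, and blocks act independently and simultaneously. Hence I would lean on Lemma~\ref{l:faster-line-shuffle} as a black box for the parallel guarantee and confine the new reasoning to (i) the reduction of max-density \mpp to labeled \rtptwo, (ii) matching orientations so that row shuffles cost $\approx 4m_2$ and column shuffles cost $\approx 4m_1$, and (iii) selecting the row-column-row order; the lower-order additive constants hidden in the ``approximately $4m$'' bound are absorbed into the clean statement $4m_1 + 8m_2$.
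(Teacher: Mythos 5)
Your proposal is correct and takes essentially the same route as the paper: the paper's proof sketch likewise reduces max-density \mpp to labeled \rtptwo, invokes \rtatwo with two row-shuffle phases costing $4m_2$ each and one column-shuffle phase costing $4m_1$ (simulated via the fast parallel line shuffle of Lemmas~\ref{l:reconfigure} and~\ref{l:faster-line-shuffle}), and attributes the polynomial running time to the perfect-matching computation. Your explicit justification of the row--column--row ordering (exploiting $m_1 \ge m_2$) and of the disjoint-tiling parallelism only makes precise what the paper's sketch leaves implicit.
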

\tg{added proof}
\begin{proof}[Proof Sketch]
By combining the \rtatwo and the line shuffle algorithms, the problem can be efficiently solved through two row-shuffle phases and one column-shuffle phase. During the row-shuffle phases, all rows can be shuffled in $4m_2$ steps, and similarly, during column-shuffle phases, all columns can be shuffled in $4m_1$ steps. Summing up these steps, the entire problem can be addressed in $4m_1 + 8m_2$ steps. The primary computational load lies in computing the perfect matchings, a task achievable in $O(m_1m_2\log m_1)$ expected time or $O(m_1^2m_2)$ deterministic time.
\end{proof}

It is clear that, by exploiting the idea further, smaller makespans can potentially be obtained for the full-density setting, but the computation required for extending  Lemma~\ref{l:reconfigure} will become more challenging. It took about two days to compute all $24^3$ cases for the $3\times 4$ grid, for example.

We call the resulting algorithm from the process \rtmdd (for 2D), with ``M'' denoting \emph{maximum density}, regardless of the used sub-grids. The straightforward pseudo-code is given in Alg.~\ref{alg:rubik}. The comments in the main \rtmdd routine indicate the corresponding \rtatwo phases. For \mpp on an $m_1 \times m_2$ grid with row-column coordinates $(x, y)$, we say agent $i$ belongs to color $1 \le j \le m_1$ if $g_i.y=j$.
Function $\texttt{Prepare()}$ in the first phase finds intermediate states $\{\tau_i\}$ for each agent through perfect matchings and routes them towards the intermediate states by (simulated) column shuffles. 
If the agent density is smaller than required, we may fill the table with ``virtual'' agents \cite{han2018sear,yu2018constant}.
For each agent $i$, we have $\tau_i.y=s_i.y$. 
Function $\texttt{ColumnFitting()}$ in the second phase routes the agents to their second intermediate states $\{\mu_i\}$ through row shuffles where $\mu_{i}.x=\tau_{i}.x$ and $\mu_i.y=g_i.y$. 
In the last phase, function $\texttt{RowFitting()}$ routes the agents to their final goal positions using additional column shuffles.

\begin{algorithm}
\begin{small}
\DontPrintSemicolon
\SetKwProg{Fn}{Function}{:}{}
\SetKwFunction{Fprepare}{Prepare}
\SetKwFunction{Fcolumnfitting}{ColumnFitting}
\SetKwFunction{Frowfitting}{RowFitting}
\SetKwFunction{FRtMapf}{RTM}

  \caption{Labeled Grid Rearrangement Based \mpp Algorithm for 2D (\rtatwo)\label{alg:rubik}}
  \KwIn{Start and goal vertices $S=\{s_i\}$ and $G=\{g_i\}$}
  \Fn{\textsc{RTA2D}({$S,G$})}{
$\texttt{Prepare}(S,G)$ \quad\quad\quad\hspace{-0.5mm} \Comment{Computing Fig.~\ref{fig:rubik}(b)}\;
$\texttt{ColumnFitting}(S,G)$ \Comment{Fig.~\ref{fig:rubik}(a) $\to$ Fig.~\ref{fig:rubik}(c)}\;
$\texttt{RowFitting}(S,G)$ \quad\quad\hspace{-1.5mm} \Comment{Fig.~\ref{fig:rubik}(c) $\to$
Fig.~\ref{fig:rubik}(d)}\;
}
\vspace{1mm}
  \Fn{\Fprepare{$S,G$}}{
  
    $A\leftarrow [1,...,m_1m_2]$\;
  \For{$(t,r)\in [1,...,m_1]\times[1,...,m_1]$}{
  \If{$\exists i\in A$ where $s_i.x=r\wedge g_i.y=t$}
  {
 add edge $(t,r)$ to $B(T,R)$\;
 remove $i$ from $A$ \;
  }
    }
 compute matchings $\mathcal{M}_1,...,\mathcal{M}_{m_2}$ of $B(T, R)$\;
 $A\leftarrow [1,...,m_1m_2]$\;
 \ForEach{$\mathcal{M}_r$ and $(t,r)\in \mathcal{M}_r$}{
 \If{$\exists i\in A$ where $s_i.x=r\wedge g_i.y=t$}{
 $\tau_i\leftarrow (r, s_i.y)$ and remove $i$ from $A$\;
  mark agent $i$ to go to $\tau_i$\;
 }
 }
 perform simulated column shuffles in parallel 
 }
 \vspace{1mm}
  \Fn{\Fcolumnfitting{$S,G$}}{
        \ForEach{$i\in [1,...,m_1m_2]$}{
        $\mu_i\leftarrow (\tau_i.x,g_i.y)$ and mark agent $i$ to go to $\mu_i$\;
        }
  perform simulated row shuffles in parallel      
  }
\vspace{1mm}
  \Fn{\Frowfitting{$S,G$}}{
        \ForEach{$i\in [1,...,m_1m_2]$}{
        mark agent $i$ to go to $g_i$\;
        }
  perform simulated column shuffles in parallel 
  }
\end{small}
\end{algorithm}

We now establish the optimality guarantee of \rtmapf on 2D grids, assuming \mpp instances are randomly generated. 
For this, a precise lower bound is needed. 

\begin{proposition}[Precise Makespan Lower Bound of \mpp on 2D Grids]\label{p:makespan-lower}
The minimum makespan of random \mpp instances on an $m_1 \times m_2$ grid with $\Theta(m_1m_2)$ agents is $m_1 + m_2 - o(m_1)$ with arbitrarily high probability as $m_1\to \infty$.
\end{proposition}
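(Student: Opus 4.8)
The plan is to lower-bound the minimum makespan by the largest shortest-path distance any single agent must traverse, and then to show that in a random instance some agent almost spans the main diagonal of the grid with high probability. For any feasible path set $\{P_i\}$, each $P_i$ has length at least the graph distance $\mathrm{dist}_{\mathcal G}(s_i,g_i)$, which on a $4$-connected grid equals the Manhattan distance $|s_i.x-g_i.x|+|s_i.y-g_i.y|$. Hence the makespan $\max_i|P_i|$ is at least $\max_i \mathrm{dist}_{\mathcal G}(s_i,g_i)$, and it suffices to prove that, with probability tending to $1$, some agent has Manhattan distance at least $m_1+m_2-o(m_1)$. Note the trivial matching upper bound $\mathrm{dist}_{\mathcal G}(s_i,g_i)\le m_1+m_2-2$ from the grid diameter, so this will in fact pin the maximum distance down to $m_1+m_2-o(m_1)$.

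To produce such an agent, I would fix a scale $a=a(m_1)$ in the window $m_1^{1/2}\ll a\ll m_1$ (for concreteness $a=m_1^{3/4}$) and define two corner boxes $C_{TL}=\{x\le a,\ y\le a\}$ and $C_{BR}=\{x>m_1-a,\ y>m_2-a\}$, each of area $a^2$. Any agent whose start lies in $C_{TL}$ and whose goal lies in $C_{BR}$ has Manhattan distance at least $m_1+m_2-4a=m_1+m_2-o(m_1)$. Let $X$ count the agents realizing this event. Under the uniform random model the start placement and the goal placement are independent uniform injections, so for each of the $n=\Theta(m_1m_2)$ agents the event has probability $(a^2/N)^2$ with $N=m_1m_2$, giving $\mathbb E[X]=\Theta\!\big(n\,a^4/N^2\big)=\Theta\!\big(a^4/(m_1m_2)\big)$; with $a=m_1^{3/4}$ this is $\Theta(m_1^2/m_2)\ge\Theta(m_1)\to\infty$.

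It then remains to convert a diverging expectation into existence with high probability, which I would do via the second moment method. Writing $X=\sum_i Y_i$ for the per-agent indicators, the key observation is that placing the starts (resp. goals) as a random injection makes the joint occupancy of two distinct cells of the same box \emph{negatively} correlated relative to the independent case, so $\mathrm{Cov}(Y_i,Y_j)\le 0$ and $\mathrm{Var}(X)\le \mathbb E[X]$. Chebyshev's inequality then yields $\Pr[X=0]\le \mathrm{Var}(X)/\mathbb E[X]^2\le 1/\mathbb E[X]\to 0$. Consequently $X\ge 1$ with high probability, so the maximum Manhattan distance—and therefore the minimum makespan—is at least $m_1+m_2-o(m_1)$, as claimed.

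The main obstacle is the probabilistic existence step: one must choose the box scale $a$ in the narrow window that simultaneously forces $\mathbb E[X]\to\infty$ and keeps the distance deficit $4a=o(m_1)$, and then control the dependence introduced by the distinctness of starts (and of goals). The negative-correlation fact for sampling without replacement is what keeps the second-moment bound clean; alternatively the same conclusion follows from a Poisson approximation for the number of diagonal-crossing agents. Everything else—the reduction of makespan to maximum graph distance and the identification of graph distance with Manhattan distance on grids—is routine.
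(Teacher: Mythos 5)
Your proposal is correct and follows essentially the same route as the paper's proof: both lower-bound the makespan by the maximum start--goal Manhattan distance and then show that, with high probability, some agent's start and goal land in opposite corner boxes, forcing a distance of $m_1+m_2-o(m_1)$. The differences are minor but worth recording. The paper uses corner boxes of size $\alpha m_1 \times \alpha m_2$ with $\alpha = m_1^{-1/8}$ and bounds $\Pr[\text{at least one spanning agent}] = 1-(1-\alpha^4)^{cm_1m_2} > 1 - e^{-\alpha^4 c m_1m_2}$, implicitly treating the per-agent events as independent even though starts (and goals) are distinct vertices; your second-moment argument, with the negative-correlation observation for sampling without replacement giving $\mathrm{Var}(X)\le\mathbb{E}[X]$ and hence $\Pr[X=0]\le 1/\mathbb{E}[X]\to 0$, handles that dependence explicitly, so your version is if anything more careful on exactly the point the paper glosses over. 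One small caveat on your side: the fixed square boxes of side $a=m_1^{3/4}$ can exceed the grid's shorter dimension when $m_2 < m_1^{3/4}$ (the paper's boxes scale with both dimensions and avoid this); in that regime your boxes clip to area $am_2$, but the per-agent probability becomes $(a/m_1)^2$, the expectation still diverges, and the distance deficit remains $o(m_1)$ since then $m_2=o(m_1)$, so the conclusion survives with an obvious adjustment that you should state.
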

\begin{proof}
Without loss of generality, let the constant in $\Theta(m_1m_2)$ be some $c \in (0, 1]$, i.e., there are $cm_1m_2$ agents. 
We examine the top left and bottom right corners of the $m_1 \times m_2$ grid $\mathcal G$. In particular, let $\gtl$ (resp.,  $\gbr$) be the top left (resp., bottom right) $\alpha m_1\times \alpha m_2$ sub-grid of $\mathcal G$, for some positive constant $\alpha \ll 1$.
For $u \in V(\gtl)$ and $v \in V(\gbr)$, assuming each grid edge has unit distance, then the Manhattan distance between $u$ and $v$ is at least $(1-2\alpha)(m_1 + m_2)$. 
Now, the probability that some $u  \in V(\gtl)$ and $v \in V(\gbr)$ are the start and goal, respectively, for a single agent, is $\alpha^4$. For $cm_1m_2$ agents, the probability that at least one agent's start and goal fall into $\gtl$ and $\gbr$, respectively, is $p =1 - (1 - \alpha^4)^{cm_1m_2}$. 

Because $(1 - x)^y < e^{-xy}$ for $0 < x < 1$ and $y > 0$ \footnote{This is because $\log(1-x) < -x$ for $0 < x < 1$; 
multiplying both sides by a positive $y$ and exponentiate with base $e$ then yield the inequality.}, $p > 1 - e^{-\alpha^4cm_1m_2}$. 
Therefore, for arbitrarily small $\alpha$, we may choose $m_1$ such that $p$ is arbitrarily close to $1$. 
For example, we may let $\alpha = m_1^{-\frac{1}{8}}$, which decays to zero as $m_1 \to \infty$, then it holds that the makespan is $(1 - 2\alpha)(m_1 + m_2) = m_1 + m_2 - 2m_1^{-\frac{1}{8}}(m_1 + m_2) = m_1 + m_2 - o(m_1)$ with probability $p > 1 - e^{-c\sqrt{m_1}m_2}$. 
\end{proof}

Comparing the upper bound established in Theorem~\ref{t:rtm-makespan} and the lower bound from Proposition~\ref{p:makespan-lower} immediately yields

\begin{theorem}[Optimality Guarantee of \rtmapf]\label{t:rtmapf}
For random \mpp instances on an $m_1 \times m_2$ grid with $\Omega(m_1m_2)$ agents, as $m_1\to \infty$, \rtmapf computes in polynomial time solutions that are $4(1 + \frac{m_2}{m_1+m_2})$-makespan optimal, with high probability.
\end{theorem}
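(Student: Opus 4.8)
The plan is to obtain the claimed optimality ratio by directly combining the \emph{deterministic} makespan upper bound of \rtmapf with the \emph{probabilistic} lower bound on the optimal makespan for random instances, and then verifying that the lower-order terms do not perturb the leading-order ratio. No new machinery is needed; the work is in bridging the density hypotheses of the two ingredient results and controlling the $o(m_1)$ slack.

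First I would pin down the upper bound. Theorem~\ref{t:rtm-makespan} guarantees a makespan of $4m_1 + 8m_2$ when every grid vertex is occupied. For an instance with only $\Omega(m_1m_2)$ agents (density possibly below $1$), I would invoke the virtual-agent padding described in Sec.~\ref{sec:1:1}: fill the empty cells with virtual agents, solve the resulting full-density instance with \rtmapf, and discard the virtual agents' paths afterward. Since the real agents' paths form a subset of the full-density solution, their makespan is still at most $4m_1 + 8m_2$, and the whole computation remains polynomial. Next I would invoke Proposition~\ref{p:makespan-lower}: because the agent count is $\Omega(m_1m_2)$ and bounded above by $m_1m_2$, it is in fact $\Theta(m_1m_2)$, so the proposition applies and yields an optimal-makespan lower bound of $m_1 + m_2 - o(m_1)$ with arbitrarily high probability as $m_1 \to \infty$.

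The remaining step is algebraic. I would first record the identity $4\left(1 + \frac{m_2}{m_1+m_2}\right) = \frac{4m_1 + 8m_2}{m_1 + m_2}$, which matches the upper-bound numerator against the stated target. The approximation ratio is then $\frac{4m_1 + 8m_2}{m_1 + m_2 - o(m_1)} = \frac{4m_1 + 8m_2}{m_1+m_2}\cdot\frac{1}{1 - \frac{o(m_1)}{m_1+m_2}}$. Because $\frac{o(m_1)}{m_1+m_2} \le \frac{o(m_1)}{m_1} \to 0$, the correction factor tends to $1$, so the ratio converges to $4\left(1 + \frac{m_2}{m_1+m_2}\right)$ as $m_1 \to \infty$. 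Since the upper bound holds deterministically and the lower bound holds with high probability, the optimality claim inherits the high-probability qualifier from Proposition~\ref{p:makespan-lower}.

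I expect the main obstacle to be conceptual rather than computational: carefully justifying that the virtual-agent padding preserves the $4m_1 + 8m_2$ makespan for the \emph{real} agents, thereby bridging the maximum-density hypothesis of Theorem~\ref{t:rtm-makespan} with the weaker $\Omega(m_1m_2)$-density hypothesis assumed here, and confirming that the $o(m_1)$ slack in the denominator is genuinely negligible relative to $m_1 + m_2$ regardless of how $m_2$ scales with $m_1$ (e.g., it must remain negligible even in the regime $m_2 = \Theta(m_1)$ where the target ratio approaches its maximum of $6$).
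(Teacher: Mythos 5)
Your proposal is correct and follows essentially the same route as the paper, which simply combines the deterministic upper bound of Theorem~\ref{t:rtm-makespan} with the probabilistic lower bound of Proposition~\ref{p:makespan-lower} and takes the limiting ratio. Your additional care—the virtual-agent padding to bridge $\Omega(m_1m_2)$ density with the full-density hypothesis, the observation that $\Omega(m_1m_2)$ plus the trivial cap $m_1m_2$ gives $\Theta(m_1m_2)$, and the explicit control of the $o(m_1)$ slack—only makes explicit what the paper's one-line proof sketch leaves implicit.
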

\tg{added proof sketch}
\begin{proof}[Proof Sketch]
By Proposition~\ref{p:asymptotic_lowerbound} and Theorem~\ref{t:rtm-makespan}, the asymptotic optimality ratio is $4 \left(1 + \frac{m_2}{m_1+m_2}\right)$.
\end{proof}

 \rtmapf always runs in polynomial time and has the same running time as \rtatwo; the high probability guarantee only concerns solution optimality. 
 The same is true for other high-probability algorithms proposed in this paper. We also note that high-probability guarantees imply guarantees in expectation. 

\section{Near-Optimally Solving \mpp with up to One-Third and One-Half Agent Densities}\label{sec:1:2}
Though \rtmapf runs in polynomial time and provides constant factor makespan optimality in expectation, the constant factor is $4+$ due to the extreme density. In practice, a agent density of around $\frac{1}{3}$ (i.e., $n = \frac{m_1m_2}{3}$) is already very high. As it turns out,  with $n = cm_1m_2$ for some constant $c > 0$ and $n \le \frac{m_1m_2}{2}$, the constant factor can be dropped to close to 1.  

\subsection{Up to One-Third Density: Shuffling with Highway Heuristics}\label{subsec:1:3}
For $\frac{1}{3}$ density, we work with random \mpp instances in this subsection; arbitrary instances for up to $\frac{1}{2}$ density are addressed in later subsections.
Let us assume for the moment that $m_1$ and $m_2$ are multiples of three; we partition $\mathcal G$ into $3\times 3$ cells (see, e.g., Fig.~\ref{fig:jd_center}(b) and Fig.~\ref{fig:example}).
We use Fig.~\ref{fig:example}, where Fig.~\ref{fig:example}(a) is a random start configuration and Fig.~\ref{fig:example}(f) is a random goal configuration, as an example to illustrate \rth -- Grid Rearrangement with Highways, targeting agent density up to $\frac{1}{3}$. 
\rth has two phases: \emph{unlabeled reconfiguration} and \emph{\mpp resolution with Grid Rearrangement and highway heuristics}. 

\begin{figure}[htbp]
\centering

        

              \begin{overpic}[width=\linewidth]{./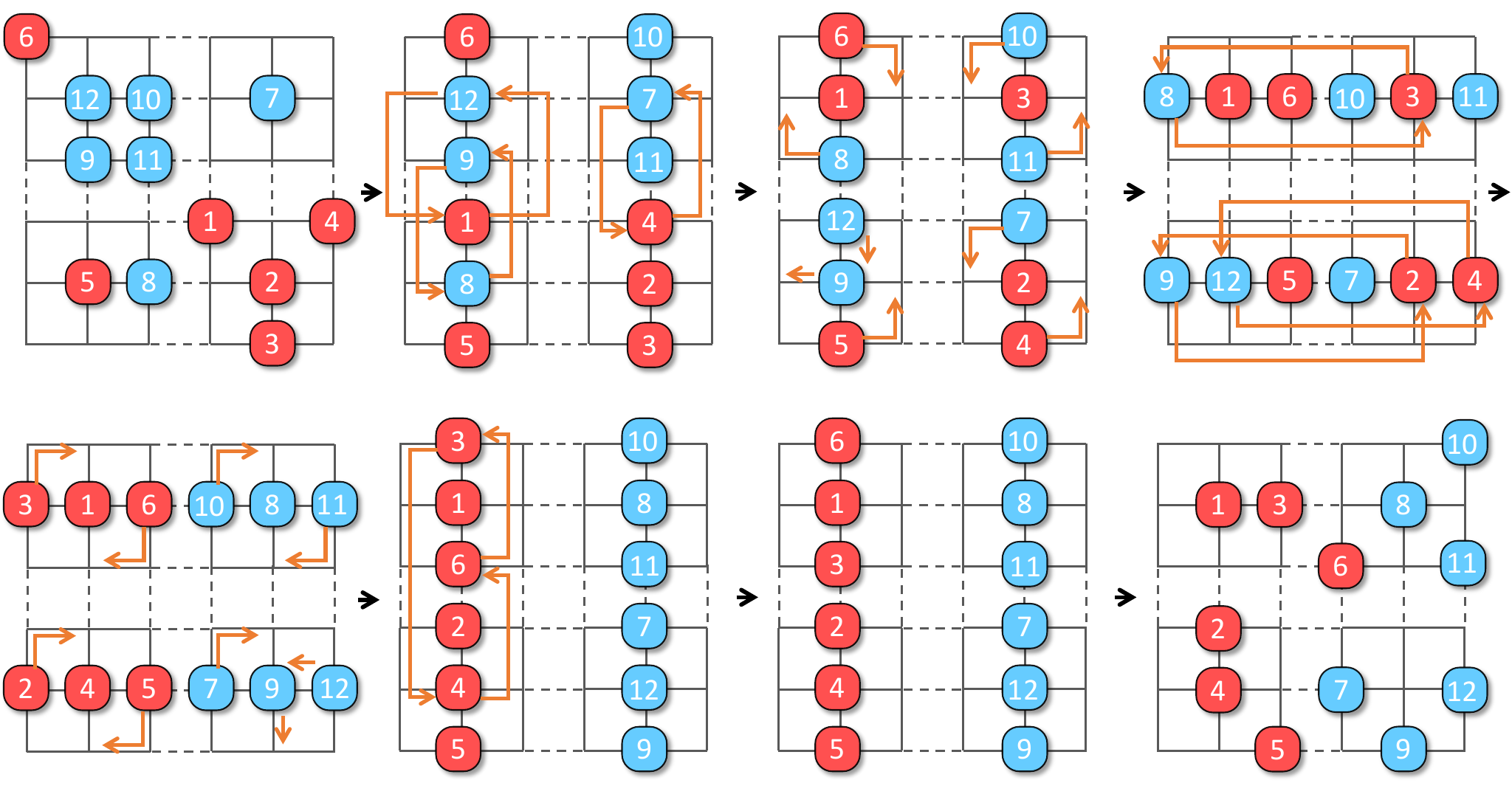}
             \footnotesize
             \put(10.5, 25) {(a)}
             \put(35.5, 25) {(b)}
             \put(60.5, 25) {(c)}
             \put(85.5, 25) {(d)}
             \put(10.5, -1) {(e)}
             \put(35.5, -1) {(f)}
             \put(60.5, -1) {(g)}
             \put(85.5, -1) {(h)}
        
        \end{overpic}
        \caption{An example of applying \rth to solve an \mpp instance. (a) The start configuration; (b) The start balanced configuration obtained from (a);  (c) The intermediate configuration obtained from the Grid Rearrangement preparation phase; (d). The intermediate configuration obtained from (c); (e) The intermediate configuration obtained from the column fitting phase; (f) Apply additional column shuffles for labeled items; (g) The goal balanced configuration obtained from the goal configuration; (h) The goal configuration.} 
        \label{fig:example}
    \end{figure}

In unlabeled reconfiguration, agents are treated as being indistinguishable. Arbitrary start and goal configurations (under $\frac{1}{3}$ agent density) are converted to intermediate configurations where each $3 \times 3$ cell contains no more than $3$ agents. We call such configurations \emph{balanced}. With high probability, random \mpp instances are not far from being balanced. To establish this result (Proposition~\ref{p:phase:1}), we need the following. 

\begin{theorem}[Minimax Grid Matching \cite{leighton1989tight}]\label{t:minimax}
Consider an $m \times m$ square containing $m^2$ points following the uniform distribution. Let $\ell$ be the minimum length such that there exists a perfect matching of the $m^2$ points to the grid points in the square for which the distance between every pair of matched points is at most $\ell$. Then $\ell = O(\log^{\frac{3}{4}}m)$ with high probability.
\end{theorem}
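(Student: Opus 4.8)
Since this is a classical result of Leighton and Shor, the cleanest route in the present paper is to invoke it directly as a black box; nonetheless, here is the strategy I would follow to establish it from scratch. The plan is to convert the existence of a bounded-length perfect matching into a \emph{discrepancy} condition over geometric regions, and then control that discrepancy uniformly over all regions at once.

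First I would set up the matching-existence criterion via flow/Hall duality. One builds a bipartite structure (in the spirit of Theorem~\ref{t:hall}, but now over a continuous geometry) joining each of the $m^2$ random points to every lattice point within distance $\ell$, and asks when a perfect matching exists. By the max-flow/min-cut theorem (the continuous analogue of Hall's condition), a perfect matching with all edges of length at most $\ell$ fails to exist precisely when some region $R$ holds more random points than there are lattice points within distance $\ell$ of $R$. Because the number of lattice points in the $\ell$-neighborhood of $R$ is approximately $\mathrm{area}(R) + c\,\ell\cdot\mathrm{perimeter}(R)$, the existence of a matching with parameter $\ell$ reduces to the uniform bound
\[
\mathrm{disc}(R) := \#\{\text{points in } R\} - \mathrm{area}(R) \;\le\; c\,\ell\cdot\mathrm{perimeter}(R) \qquad \text{for every region } R.
\]

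The core of the argument is then to show $\sup_R \mathrm{disc}(R)/\mathrm{perimeter}(R) = O(\log^{3/4} m)$ with high probability. I would first argue that one may restrict to ``canonical'' regions whose boundaries run along a fine auxiliary grid, so that the family of competitor regions becomes countable and stratified by boundary length. For each fixed region, $\mathrm{disc}(R)$ is a sum of independent centered indicators, so Bernstein/Chernoff concentration yields sub-Gaussian tails with variance proxy $\mathrm{area}(R)$. The delicate part is the multi-scale chaining and union bound: at each scale one must weigh the entropy (the number of distinct canonical regions of a given perimeter) against the concentration, and it is the precise balancing across scales, organized via a hierarchical decomposition into subsquares at geometrically spaced scales, that produces the exponent $3/4$ rather than a cruder $\sqrt{\log m}$ or $\log m$.

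I expect the main obstacle to be exactly this entropy-versus-concentration balance. Controlling the supremum of a discrepancy process over an exponentially rich family of regions, while respecting the perimeter normalization, is where a naive union bound loses too much and returns the wrong exponent; summing the per-scale contributions correctly is the technical heart of the Leighton--Shor theorem. For the purposes of this paper I would therefore not reprove it, but cite Theorem~\ref{t:minimax} and proceed to use it in establishing that random \mpp instances are, with high probability, close to balanced.
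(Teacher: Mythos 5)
Your proposal matches the paper's treatment exactly: Theorem~\ref{t:minimax} is an imported result of Leighton and Shor, and the paper states it with a citation and no proof, using it only as a black box to establish Proposition~\ref{p:phase:1}. Your sketch of the underlying argument (Hall/max-flow duality reducing matching existence to a perimeter-normalized discrepancy bound, then multi-scale entropy-versus-concentration balancing to get the $\log^{3/4}m$ exponent) is a faithful description of the known proof, and your conclusion to cite rather than reprove is precisely what the paper does.
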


Theorem~\ref{t:minimax} applies to rectangles with the longer side being $m$ as well (Theorem 3 in \cite{leighton1989tight}). 

\begin{proposition}\label{p:phase:1}
On an $m_1\times m_2$ grid, with high probability, a random configuration of $n = \frac{m_1m_2}{3}$ agents is of distance $o(m_1)$ to a balanced configuration. 
\end{proposition}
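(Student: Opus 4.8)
The plan is to exhibit, with high probability, a balanced target configuration together with an assignment of the randomly placed agents onto it whose maximum (bottleneck) displacement is $O(\log^{3/4} m_1) = o(m_1)$; the unlabeled reconfiguration that carries the random configuration onto this target then moves every agent a distance $o(m_1)$, which is exactly the claim. The natural tool is Theorem~\ref{t:minimax}, so the whole argument is organized around reducing ``distance to balance'' to a minimax grid matching cost.

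First I would fix the target. Partition $\mathcal G$ into its $\frac{m_1 m_2}{9}$ disjoint $3\times 3$ cells. Since there are exactly $n = \frac{m_1 m_2}{3} = 3\cdot\frac{m_1 m_2}{9}$ agents and a balanced configuration permits at most $3$ agents per cell, any balanced configuration of $n$ agents must in fact place \emph{exactly} $3$ agents in every cell. I take the target to have three agents at three fixed vertices of each cell. Reaching a balanced configuration is therefore equivalent to reassigning the agents so that each cell receives exactly three of them, and the distance to balance is controlled by the bottleneck cost of this reassignment.

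Second, and this is the heart of the argument, I would bound that bottleneck cost using Theorem~\ref{t:minimax}. Collapsing each $3\times 3$ cell to a single representative point turns the task into matching $n$ uniformly random points to the coarse $\frac{m_1}{3}\times\frac{m_2}{3}$ grid of cell centers, where each center has capacity $3$. Rescaling so that the point set has unit density (equivalently, replicating each coarse grid point three times to absorb the capacity) makes this precisely a minimax grid matching instance on a rectangle whose longer side is $\Theta(m_1)$; the rectangular form of Theorem~\ref{t:minimax} noted just after its statement then yields, with high probability, a matching in which every agent is sent to a cell at coarse-grid distance $O(\log^{3/4} m_1)$. Translating back to $\mathcal G$, an agent moved across $d$ cells travels $O(d)$ grid edges plus an $O(1)$ intra-cell correction, so the maximum displacement is $O(\log^{3/4} m_1) = o(m_1)$, establishing the proposition.

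I expect the main obstacle to lie in this reduction rather than in any single estimate. Theorem~\ref{t:minimax} is stated for $m^2$ continuous uniform points matched one-to-one to the unit integer lattice, whereas here the points are discrete grid vertices sampled \emph{without replacement} at density $\frac13$ and the targets carry capacity $3$. I would handle this by (i) observing that sampling distinct vertices is negatively associated, so the concentration needed is no worse than for i.i.d.\ continuous uniform points and the continuous bound transfers up to $O(1)$ discretization rounding; and (ii) checking that the capacity-$3$ and density-$\frac13$ rescaling preserves the regularity hypotheses of the theorem, leaving only a constant-factor change that is absorbed into the $O(\cdot)$. The remaining pieces, namely the per-cell count bookkeeping and the conversion from coarse-grid distance to actual grid displacement, are routine.
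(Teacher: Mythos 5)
Your overall strategy is the same as the paper's: reduce ``distance to balance'' to a bottleneck matching cost and invoke Theorem~\ref{t:minimax} (in its rectangular form). The gap sits at exactly the step you yourself call the heart of the argument. The capacitated problem you set up --- $n$ random points matched to the $\frac{m_1}{3}\times\frac{m_2}{3}$ grid of cell centers, each center with capacity $3$ --- is \emph{not} ``precisely a minimax grid matching instance,'' and no rescaling makes it one. Theorem~\ref{t:minimax} matches a unit-density point set to the \emph{unit-density integer lattice}; your target set, after replicating each cell center three times, is a lattice of spacing $3$ (or spacing $\sqrt{3}$ after density-normalizing rescaling) carrying multiplicity $3$ at each site. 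Three coincident targets are a genuinely different geometry from three spread-out lattice points, so your check (ii) --- that ``the capacity-$3$ and density-$\frac{1}{3}$ rescaling preserves the regularity hypotheses of the theorem'' --- is not a check that can be carried out: the theorem as stated simply does not cover that target set. To close the hole you need one additional idea. Either (i) the paper's move: partition the $n$ agents into three groups of $\frac{m_1m_2}{9}$ each, observe that each group is again a uniformly random point set, apply the (scaled-by-$3$) minimax theorem to each group separately so that every $3\times 3$ cell receives exactly one agent \emph{from each group}, and overlay the three matchings --- legitimate precisely because the agents are indistinguishable, with a union bound over three high-probability events; or (ii) one application of Theorem~\ref{t:minimax} composed with a separate deterministic lemma (a Hall-type argument) matching the unit lattice to the multiplicity-$3$ coarse lattice with $O(1)$ displacement. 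Neither appears in your write-up, and without one of them the central reduction fails.

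Your remaining points are fine and in places more careful than the paper: the observation that balance forces exactly three agents per cell, the remark that sampling distinct vertices without replacement only helps concentration relative to i.i.d.\ continuous points, and the $O(1)$ rounding correction (the paper itself compresses all of this into ``scaling and rounding'').
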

\begin{proof}
We prove for the case of $m_1 = m_2 = 3m$ using the minimax grid matching theorem (Theorem~\ref{t:minimax}); generalization to $m_1 \ge m_2$ can be then seen to hold using the generalized version of Theorem~\ref{t:minimax} that applies to rectangles (Theorem 3 of \cite{leighton1989tight}, which applies to arbitrarily simply-connected region within a square region).

Now let $m_1 = m_2 = 3m$. We may view a random configuration of $m^2$ agents on a $3m\times 3m$ grid as randomly placing $m^2$ continuous points in an $m\times m$ square with scaling (by three in each dimension) and rounding. 
By Theorem~\ref{t:minimax}, a random configuration of $m^2$ continuous points in an $m \times m$ square can be moved to the $m^2$ grid points at the center of the $m^2$ disjoint unit squares within the $m \times m$ square, where each point is moved by a distance no more than $O(\log^{\frac{3}{4}}m)$, with high probability. 
Translating this back to a $3m \times 3m$ gird, $m^2$ randomly distributed agents on the grid can be moved so that each $3\times 3$ cell contains exactly one agent, and the maximum distance moved for any agent is no more than $O(\log^{\frac{3}{4}}m)$, with high probability. Applying this argument three times yields that a random configuration of $\frac{m_1^2}{3}$ agents on an $m_1\times m_1$ gird can be moved so that each $3\times 3$ cell contains exactly three agents, and no agent needs to move more than a $O(\log^{\frac{3}{4}}{m_1})$ steps, with high probability. Because the agents are indistinguishable, overlaying three sets of reconfiguration paths will not cause an increase in the distance traveled by any agent. \end{proof}

In the example, unlabeled reconfiguration corresponds to Fig.~\ref{fig:example}(a)$\to$Fig.~\ref{fig:example}(b) and Fig.~\ref{fig:example}(f)$\to$Fig.~\ref{fig:example}(e) (\mpp solutions are time-reversible). 
We simulated the process of unlabeled reconfiguration for $m_1 = m_2 = 300$, i.e., on a $300 \times 300$ grids. For $\frac{1}{3}$ agent density, the actual number of steps averaged over $100$ random instances is less than $5$.
We call configurations like  Fig.~\ref{fig:example}(b)-(e), which have all agents concentrated vertically or horizontally in the middle of the $3\times 3$ cells, \emph{centered balanced} or simply \emph{centered}. 
Completing the first phase requires solving two unlabeled problems \cite{YuLav12CDC,Ma2016OptimalTA}, doable in polynomial time. 

In the second phase, \rta is applied with a highway heuristic to get us from Fig.~\ref{fig:example}(b) to Fig.~\ref{fig:example}(e), transforming between vertical centered configurations and horizontal centered configurations. 
To do so, \rta is applied (e.g., to Fig.~\ref{fig:example}(b) and (e)) to obtain two intermediate configurations (e.g., Fig.~\ref{fig:example}(c) and (d)).
To go between these configurations, e.g., Fig.~\ref{fig:example}(b)$\to$Fig.~\ref{fig:example}(c), we apply a heuristic by moving agents that need to be moved out of a $3\times 3$ cell to the two sides of the middle columns of Fig.~\ref{fig:example}(b), depending on their target direction. If we do this consistently, after moving agents out of the middle columns, we can move all agents to their desired goal $3\times 3$ cell without stopping nor collision. 
Once all agents are in the correct $3\times 3$ cells, we can convert the balanced configuration to a centered configuration in at most $3$ steps, which is necessary for carrying out the next simulated row/column shuffle. 
Adding things up, we can simulate a shuffle operation using no more than $m + 5$ steps where $m = m_1$ or $m_2$. 
The efficiently simulated shuffle leads to low makespan \mpp algorithms. It is clear that all operations take polynomial time; a precise running time is given at the end of this subsection.

\begin{theorem}[Makespan Upper Bound for Random \mpp, $\le \frac{1}{3}$ Density]\label{t:rtm-ramdom}
For random \mpp instances on an $m_1 \times m_2$ grid, where $m_1 \ge m_2$ are multiples of three, for $n \le \frac{m_1m_2}{3}$ agents, an $m_1 + 2m_2 + o(m_1)$ makespan solution can be computed in polynomial time, with high probability. 
\end{theorem}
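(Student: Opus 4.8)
The plan is to account for \rth's makespan phase by phase, reducing every piece to a bound already established above. First I would observe that it suffices to treat the extremal case $n=\frac{m_1m_2}{3}$: when $n$ is smaller, I would pad the instance with ``virtual'' agents so that each $3\times3$ cell can be made balanced, which cannot increase the cost since virtual agents carry no start/goal constraint. With that reduction, the solution decomposes into three contributions whose makespans I would add: (i) the unlabeled reconfiguration carrying the random \emph{start} configuration to a centered balanced configuration; (ii) the labeled Grid Rearrangement transforming the start-side centered configuration into the goal-side centered configuration through simulated shuffles; and (iii) the unlabeled reconfiguration from the goal-side centered configuration to the true goal, executed in reverse using time-reversibility of \mpp solutions.

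For contributions (i) and (iii) I would simply invoke Proposition~\ref{p:phase:1}: with high probability a random $\tfrac{1}{3}$-density configuration is within distance $o(m_1)$ of a balanced configuration, and the associated unlabeled \mpp problems are solvable in polynomial time within $o(m_1)$ makespan; the final $O(1)$ steps convert a balanced configuration to a centered one. Applying the bound to both the start and goal ends (a union bound over two high-probability events is again high probability) shows these two contributions cost $o(m_1)$ in total. This is the \emph{only} place randomness enters and the sole origin of the $o(m_1)$ additive term — everything in contribution (ii) is deterministic.

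The crux is contribution (ii). Here \rta (Theorem~\ref{t:rta}) rearranges the centered start into the centered goal using three rounds of shuffles, and the makespan hinges entirely on the \emph{ordering} of these rounds. Because a row shuffle moves agents horizontally over a line of length $m_2$ while a column shuffle moves them vertically over a line of length $m_1$, I would schedule the three rounds as row--column--row (one column-shuffle round, two row-shuffle rounds) rather than column--row--column; the former costs $m_2+m_1+m_2 = m_1+2m_2$ worth of motion and beats the latter's $2m_1+m_2$ precisely because $m_1\ge m_2$. Each of the three parallel rounds is then realized by the highway heuristic, which, as established in the preceding development, simulates an arbitrary line shuffle of length $m$ in at most $m+5$ collision-free steps and supports all parallel lines at once; summing the three rounds yields $m_1+2m_2+O(1)$.

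Adding the three contributions gives a makespan of $m_1+2m_2+o(m_1)$, and since the perfect-matching computation, the two unlabeled solvers, and the shuffle simulations are each polynomial, the whole pipeline runs in polynomial time. The main obstacle I anticipate is not the summation but justifying the per-round $m+5$ simulation bound and its collision-freeness under fully parallel execution: specifically, that agents pushed out of their $3\times3$ cells to the two sides of the middle columns can be routed to their destination cells ``without stopping nor collision'' in $m+O(1)$ steps, and that the cell-level coloring fed to \rta faithfully encodes goal-column membership so that the three shuffle rounds actually produce the goal-side centered configuration. Once that $m+5$ per-round simulation guarantee is granted, the theorem follows by the accounting above.
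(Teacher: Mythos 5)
Your proposal is correct and follows essentially the same route as the paper's proof: invoke Proposition~\ref{p:phase:1} (plus the distance-to-makespan conversion) for the two $o(m_1)$ unlabeled reconfiguration phases, then account for the Grid Rearrangement phase as two parallel row-shuffle rounds and one parallel column-shuffle round, each simulated by the highway heuristic in $m+5$ steps, giving $m_1+2m_2+O(1)$. The only additions beyond the paper's argument are minor and sound: the explicit virtual-agent padding for $n<\frac{m_1m_2}{3}$ and the explicit union bound over the two high-probability events.
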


\begin{proof}
By Proposition~\ref{p:phase:1}, unlabeled reconfiguration requires distance $o(m_1)$ with high probability. This implies that a plan can be obtained for unlabeled reconfiguration that requires $o(m_1)$ makespan (for detailed proof, see Theorem 1 from \cite{yu2018constant}).
For the second phase, by Theorem~\ref{t:rta}, we need to perform $m_1$ parallel row shuffles with a row width of $m_2$, followed by $m_2$ parallel column shuffles with a column width of $m_1$, followed by another $m_1$ parallel row shuffles with a row width of $m_2$. Simulating these shuffles require $m_1 + 2m_2 + O(1)$ steps. Altogether, a makespan of $m_1 + 2m_2 + o(m_1)$ is required, with a very high probability. 
\end{proof}

Contrasting Theorem~\ref{t:rtm-ramdom} and Proposition~\ref{p:makespan-lower} yields

\begin{theorem}[Makespan Optimality for Random \mpp, $\le \frac{1}{3}$ Density]\label{t:rth-ratio}
For random \mpp instances on an $m_1 \times m_2$ grid, where $m_1 \ge m_2$ are multiples of three, for $n = cm_1m_2$ agents with $c \le \frac{1}{3}$, as $m_1 \to \infty$, a $(1 + \frac{m_2}{m_1 + m_2})$ makespan optimal solution can be computed in polynomial time, with high probability. 
\end{theorem}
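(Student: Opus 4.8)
The plan is to read the optimality ratio off directly by dividing the high-probability \emph{upper} bound of Theorem~\ref{t:rtm-ramdom} by the high-probability \emph{lower} bound of Proposition~\ref{p:makespan-lower}, and then letting $m_1 \to \infty$. Since both of these bounds are already in hand, the only real work is careful bookkeeping on the two probabilistic events and on the lower-order error terms.

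First I would fix the constant $c \le \frac{1}{3}$ and invoke Theorem~\ref{t:rtm-ramdom}, which asserts that \rth computes, in polynomial time, a solution of makespan $M_{\mathrm{ALG}} = m_1 + 2m_2 + o(m_1)$ on some event $\mathcal{E}_1$ whose probability tends to $1$. Separately, Proposition~\ref{p:makespan-lower} gives an event $\mathcal{E}_2$, again of probability tending to $1$, on which the optimal makespan obeys $M_{\mathrm{OPT}} \ge m_1 + m_2 - o(m_1)$. A union bound (no independence needed, since both events live over the same random draw of the start/goal configuration) shows that $\mathcal{E}_1 \cap \mathcal{E}_2$ still has probability tending to $1$, and this intersection is the event on which the claimed ``with high probability'' guarantee holds.

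Next I would evaluate the ratio on $\mathcal{E}_1 \cap \mathcal{E}_2$:
$$\frac{M_{\mathrm{ALG}}}{M_{\mathrm{OPT}}} \le \frac{m_1 + 2m_2 + o(m_1)}{m_1 + m_2 - o(m_1)}.$$
The key observation is that both $m_1 + 2m_2$ and $m_1 + m_2$ are at least $m_1$, so any $o(m_1)$ quantity divided by either is $o(1)$. Factoring $(m_1 + 2m_2)$ out of the numerator and $(m_1 + m_2)$ out of the denominator leaves a correction factor $\tfrac{1 + o(1)}{1 - o(1)} \to 1$, whence the bound converges to
$$\frac{m_1 + 2m_2}{m_1 + m_2} = 1 + \frac{m_2}{m_1 + m_2},$$
as claimed. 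Polynomial running time is inherited verbatim from \rth (equivalently, from \rtatwo together with the simulated shuffles), so no separate argument is required for that clause.

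I do not expect a genuine obstacle here; the one point demanding care is the interpretation of the limit, because $m_2$ may itself grow with $m_1$ subject to $m_2 \le m_1$. The reason the argument survives is that the error terms are controlled as fractions of $m_1 \le m_1 + m_2$ rather than as fractions of $m_2$, so the $o(\cdot)$ corrections vanish uniformly no matter how the aspect ratio $m_2/m_1$ behaves. This is precisely what pins the limiting ratio to $1 + \frac{m_2}{m_1 + m_2} \in (1, 1.5]$ and makes the contrast between the two bounds tight in the stated asymptotic sense.
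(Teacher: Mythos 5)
Your proposal is correct and is essentially the paper's own argument: the paper proves this theorem with the single remark ``Contrasting Theorem~\ref{t:rtm-ramdom} and Proposition~\ref{p:makespan-lower} yields'' the stated ratio, which is exactly your division of the high-probability upper bound by the high-probability lower bound. The extra bookkeeping you supply (the union bound over the two events and the uniform $o(1)$ control of the error terms relative to $m_1 \le m_1+m_2$, independent of the aspect ratio) is precisely the detail the paper leaves implicit, and it is handled correctly.
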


Since $m_1\ge m_2$, $1 + \frac{m_2}{m_1 + m_2} \in (1, 1.5]$. In other words, in polynomial running time, \rth achieves $(1 + \delta)$ asymptotic makespan optimality for $\delta \in (0, 0.5]$, with high probability. 

From the analysis so far, if $m_1$ and/or $m_2$ are not multiples of $3$, it is clear that all results in this subsection continue to hold for agent density $\frac{1}{3} - \frac{(m_1\bmod 3)(m_2\bmod 3)}{m_1m_2}$, which is arbitrarily close to $\frac{1}{3}$ for large $m_1$ and $m_2$. It is also clear that the same can be said for grids with certain patterns of regularly distributed obstacles (Fig.~\ref{fig:jd_center}(b)), i.e., 

\begin{corollary}[Random \mpp, $\frac{1}{9}$ Obstacle and $\frac{2}{9}$ Agent Density]
For random \mpp instances on an $m_1 \times m_2$ grid, where $m_1 \ge m_2$ are multiples of three and there is an obstacle at coordinates $(3k_1 + 2, 3k_2 + 2)$ for all applicable $k_1$ and $k_2$, for $n = cm_1m_2$ agents with $c \le \frac{2}{9}$, a solution can be computed in polynomial time that has makespan $m_1 + 2m_2 + o(m_1)$ with high probability.
As $m_1 \to \infty$, the solution approaches $1 + \frac{m_2}{m_1 + m_2}$ optimal, with high probability. 
\end{corollary}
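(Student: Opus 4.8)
The plan is to show that the regular obstacle pattern leaves the \rth machinery essentially intact, so that both the upper-bound construction of Theorem~\ref{t:rtm-ramdom} and the lower bound of Proposition~\ref{p:makespan-lower} transfer with only constant-level changes. The first thing I would record is the geometry: the obstacle at $(3k_1+2, 3k_2+2)$ is exactly the \emph{center} of the $(k_1,k_2)$-th $3\times 3$ cell, so each of the $\frac{m_1m_2}{9}$ cells retains $8$ free vertices with only its center blocked. In the obstacle-free algorithm a $3\times 3$ cell supports three parallel ``tracks'' (its three rows, or its three columns) along which agents travel during a simulated shuffle, and a centered configuration fills the middle column/row with three agents. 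Removing the center vertex destroys exactly the middle track, so I would redefine a \emph{balanced} configuration to allow at most \emph{two} agents per cell and a \emph{centered} configuration to place them at the two non-center vertices of the middle column (vertically) or middle row (horizontally). This is precisely why the admissible density drops from $\frac13$ to $\frac29$: two usable tracks per cell times $\frac{m_1m_2}{9}$ cells gives $\frac{2m_1m_2}{9}$ agents.

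For the unlabeled reconfiguration phase, I would re-run the argument of Proposition~\ref{p:phase:1} with the multiplicity two in place of three. A random placement of $\frac{2m_1m_2}{9}$ agents is viewed as two overlaid random placements of $\frac{m_1m_2}{9}$ points; applying Theorem~\ref{t:minimax} in its rectangular form to each layer shows that every agent can be routed to a balanced configuration moving a distance $O(\log^{3/4} m_1) = o(m_1)$ with high probability, and overlaying the two indistinguishable layers does not increase any travel distance. Hence Phase~1 costs $o(m_1)$ makespan, exactly as before.

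For the Grid Rearrangement phase, the point to verify is that the highway heuristic never requires the blocked middle track. Since each cell now holds at most two agents, placed at the top and bottom of the middle column, the agents leaving a cell slide to the left/right side of row $1$ and row $3$ of that cell (according to their target direction) and then travel horizontally along those two obstacle-free rows; the symmetric statement holds for column shuffles. Thus the center obstacle is never entered, the per-shuffle simulation still completes in $m + O(1)$ steps, and the three-round schedule of Theorem~\ref{t:rta} ($m_1$ row shuffles, $m_2$ column shuffles, $m_1$ row shuffles) yields a total makespan of $m_1 + 2m_2 + o(m_1)$. Finally, the lower bound of Proposition~\ref{p:makespan-lower} is inherited directly: $\frac{2m_1m_2}{9} = \Theta(m_1m_2)$, the Manhattan distance between the $\gtl$ and $\gbr$ corners can only grow when obstacles are inserted, and the probability estimate $p > 1 - e^{-\Theta(\alpha^4 m_1 m_2)}$ changes only in its constant. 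Contrasting the two bounds gives the asymptotic ratio $1 + \frac{m_2}{m_1+m_2}$, proving the claim.

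The step I expect to demand the most care is the highway verification in the Grid Rearrangement phase: I must confirm that, with the reduced two-agents-per-cell budget, every move dictated by a simulated shuffle (both the initial ``spill'' of agents to the cell sides and the subsequent straight-line travel) avoids the center obstacles while still fitting within the $m + O(1)$ step budget, and that the conversion between balanced and centered configurations remains feasible in $O(1)$ steps when the middle vertex is missing. Everything else is a constant-factor bookkeeping adaptation of the already-established $\frac13$-density analysis.
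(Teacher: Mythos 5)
Your proposal is correct and follows essentially the same route as the paper's own (much terser) proof: both reduce to the $\frac13$-density analysis by observing that obstacles plus agents stay within $\frac13$ total density, that the highway tracks (the side rows/columns of each $3\times 3$ cell) never touch the blocked cell centers, and that the minimax-matching phase and the corner-to-corner lower bound of Proposition~\ref{p:makespan-lower} survive with only constant-factor changes. The one point the paper states that you leave implicit is that, in the unlabeled reconfiguration phase, routing around each obstacle costs only a constant detour, so the matching distance $\ell$ remains $O(\log^{3/4} m)$ — but this is exactly the kind of constant-factor bookkeeping you flag at the end, so there is no real gap.
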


\begin{proof}
Because the total density of robot and obstacles are no more than $1/3$, if Theorem~\ref{t:minimax} extends to support regularly distributed obstacles, then Theorem~\ref{t:rth-ratio} applies because the highway heuristics do not pass through the obstacles. This is true because each obstacle can only add a constant length of path detour to an agent's path. In other words, the length $\ell$ in Theorem~\ref{t:rth-ratio} will only increase by a constant factor and will remain as $\ell = O(\log^{\frac{3}{4}}m)$. Similar arguments hold for Proposition~\ref{p:phase:1}.
\end{proof}

We now give the running time of \rtmapf and \rth. 

\begin{proposition}[Running Time, \rth]\label{p:time}
For $n \le \frac{m_1m_2}{3}$ agents on an $m_1 \times m_2$ grid, \rth runs in $O(nm_1^2m_2)$ time.
\end{proposition}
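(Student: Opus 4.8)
The plan is to add up the running times of \rth's two phases and show the first dominates. Recall that \rth first performs an \emph{unlabeled reconfiguration} (bringing both the start and the goal configurations to balanced/centered configurations) and then an \emph{\mpp resolution} phase built on \rta together with the highway heuristic. So the total running time is the cost of the two unlabeled reconfigurations plus the cost of the Grid Rearrangement computation and the simulation of its shuffles; I will argue the unlabeled reconfiguration is the bottleneck and already costs $O(n m_1^2 m_2)$, while everything else is of strictly lower order.

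First I would analyze the unlabeled reconfiguration. Following \cite{YuLav12CDC,Ma2016OptimalTA}, an unlabeled (permutation-invariant) reconfiguration of $n$ agents on $\mathcal G$ over a time horizon $T$ reduces to an integral maximum-flow computation on the time-expanded network of $\mathcal G$: each of the $m_1 m_2$ vertices is split (to enforce unit vertex capacity) and copied across $T+1$ time layers, so the network has $O(m_1 m_2 \cdot T)$ nodes and, since the grid has bounded degree, $O(m_1 m_2 \cdot T)$ arcs. The key input is that the required horizon is small: by Proposition~\ref{p:phase:1}, with high probability a random $\frac{1}{3}$-density configuration is within distance $o(m_1)$ of a balanced one, so $T = o(m_1) = O(m_1)$ suffices and the network has $O(m_1^2 m_2)$ nodes and arcs. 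A feasible reconfiguration corresponds to a flow of value $n$; computing it by $n$ integral augmenting-path iterations, each a BFS/DFS over the network in $O(m_1^2 m_2)$ time, costs $O(n m_1^2 m_2)$. Doing this twice (once for the start, once for the goal) leaves the bound at $O(n m_1^2 m_2)$.

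Next I would bound the second phase. Applying \rta requires computing the $m_2$ perfect matchings of the associated regular bipartite graph, which by the construction underlying Theorem~\ref{t:rta} takes $O(m_1^2 m_2)$ deterministic time. Simulating the resulting three rounds of row/column shuffles with the highway heuristic moves each agent a total of $O(m_1 + m_2) = O(m_1)$ steps, so writing out all $n$ paths costs $O(n m_1)$. Both terms, $O(m_1^2 m_2)$ and $O(n m_1)$, are dominated by $O(n m_1^2 m_2)$ (note $n m_1 \le m_1^2 m_2$ since $n \le m_1 m_2/3$). Summing the two phases therefore gives a total running time of $O(n m_1^2 m_2)$.

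I expect the main obstacle to be the horizon bound for the flow network rather than the flow computation itself: the $O(n m_1^2 m_2)$ estimate hinges on $T = O(m_1)$, which I obtain from Proposition~\ref{p:phase:1} and thus holds with high probability; a fully worst-case statement would additionally require arguing that a balanced configuration is always reachable within an $O(m_1)$ horizon (plausible given the slack of allowing up to three agents per $3\times 3$ cell, but needing its own short justification). A minor point to handle carefully is the node-splitting and wait-edge bookkeeping, so that the bounded-degree grid really yields $O(m_1^2 m_2)$ arcs, which is what keeps each augmenting-path iteration at $O(m_1^2 m_2)$.
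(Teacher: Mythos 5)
Your proof takes essentially the same route as the paper's: both decompose the cost into the matching computation ($O(m_1^2 m_2)$ deterministic) plus a max-flow computation on the time-expanded network of the grid, which at $O(n\, m_1 m_2 T)$ with horizon $T = O(m_1)$ yields the dominant $O(n m_1^2 m_2)$ term. The horizon worry you flag at the end is resolved deterministically rather than probabilistically: as noted in Sec.~\ref{subsec:arbi}, an unlabeled reconfiguration from an arbitrary configuration always completes within $m_1 + m_2 = O(m_1)$ steps, so the paper simply takes $T = O(m_1 + m_2)$ and needs no appeal to Proposition~\ref{p:phase:1}.
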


\begin{proof}
The running time of \rth is dominated by the matching computation and solving unlabeled \mpp. 
The matching part takes $O(m_1^2m_2)$ in deterministic time or $O(m_1m_2\log m_1)$ in expected time \cite{goel2013perfect}. 
Unlabeled \mpp may be tackled using the max-flow algorithm \cite{ford1956maximal} in $O(nm_1m_2T)=O(nm_1^2m_2)$ time, where $T=O(m_1+m_2)$ is the expansion time horizon of a time-expanded graph that allows a routing plan to complete. 
\end{proof}

\subsection{One-Half Agent Density: Shuffling with Linear Merging}\label{subsec:half-density}
Using a more sophisticated shuffle routine, $\frac{1}{2}$ agent density can be supported while retaining most of the guarantees for the $\frac{1}{3}$ density setting; obstacles are no longer supported. 

To best handle $\frac{1}{2}$ agent density, we employ a new shuffle routine called \emph{linear merge}, based on merge sort, and denote the resulting algorithm as Grid Rearrangement with Linear Merge or \rtlm. 
The basic idea is straightforward: for $m$ agents on a $2 \times m$ grid,  we iteratively sort the agents first on $2\times 2$ grids, then $2\times 4$ grids, and so on, much like how merge sort works. An illustration of the process on a $2 \times 8$ grid is shown in Fig.~\ref{fig:merge}.

\begin{algorithm}
\begin{small}
\DontPrintSemicolon
\SetKwProg{Fn}{Function}{:}{}
\SetKwFunction{LineMerge}{LineMerge}
\SetKwFunction{Merge}{Merge}

\caption{Line Merge Algorithm \label{alg:lineMerge}}

\KwIn{An array $arr$ representing the vertices labeled from $1$ to $n$ with current and intermediate locations of $n$ agents.}

\Fn{\LineMerge{$arr$}}{
    \If{$\text{length of } arr > 1$}{
        $mid \leftarrow \text{length of } arr \div 2$\;
        $left \leftarrow arr[0 \ldots mid-1]$\;
        $right \leftarrow arr[mid \ldots \text{length of } arr - 1]$\;
        
        \LineMerge{$left$}\;
        \LineMerge{$right$}\;
        
        \Merge{$arr$}\;
    }
}

\Fn{\Merge{$arr$}}{
      Sort agents located at the vertices in $arr$ to obtain intermediate states\;
    \For{$i\in arr$}{
        $a_i \leftarrow$ agent at vertex $i$\;
        \If{$a_i$.intermediate $> i$}{
            Route $a_i$ moving rightward using the bottom line while avoiding blocking those agents that are moving leftward\;
        }
        \Else{
            Route $a_i$ moving leftward using the upper line without stopping\;
        }
    }
    Synchronize the paths\;
}
\end{small}
\end{algorithm}

\begin{figure}[h!]
\centering
    \begin{overpic}[width=\linewidth]{./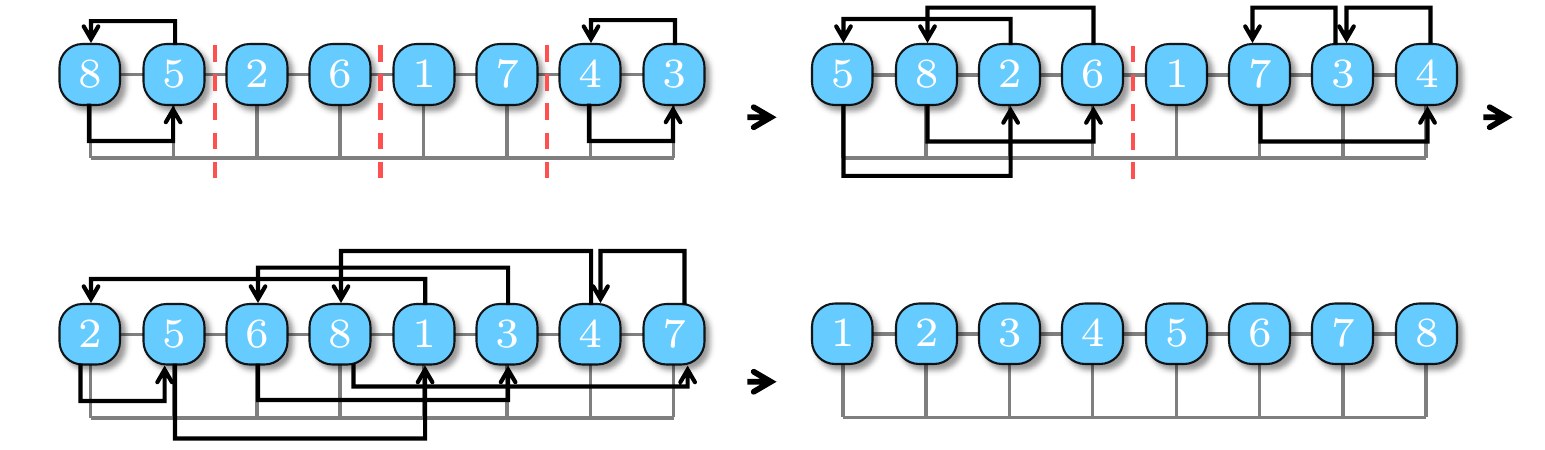}
             \footnotesize
             \put(22.8, 14.2) {(a)}
             \put(71.5, 14.2) {(b)}
             \put(22.8, -2.5) {(c)}
             \put(71.5, -2.5) {(d)}
    \end{overpic}
    \caption{A demonstration of the linear merge shuffle primitive on a $2\times 8$ grid. Agents going to the left always use the upper channel while agents going to the right always use the lower channel.} 
    \label{fig:merge}
\end{figure}


\begin{lemma}[Properties of Linear Merge]\label{l:lm}
On a $2\times m$ grid, $m$ agents, starting on the first row, can be arbitrarily ordered using $m + o(m)$ steps, using the Alg.~\ref{alg:lineMerge}, inspired from merge sort. The motion plan can be computed in polynomial time. 
\end{lemma}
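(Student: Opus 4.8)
The plan is to analyze the recursive structure of Alg.~\ref{alg:lineMerge} along the three axes that any feasible \mpp primitive must satisfy — correctness of the final ordering, collision-freeness of the constructed paths, and the makespan bound — and then to observe that the plan is computed in polynomial time. I would treat the target ordering as a set of keys to be sorted, so that ``arbitrarily ordered'' becomes ``sorted by key''; correctness of the resulting configuration then follows immediately from the correctness of merge sort, since each call to \texttt{Merge} combines two already-sorted sub-segments into one sorted segment, and the base case (a single agent) is trivially sorted.

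The crucial structural observation, which I would establish first, is that the two recursive calls \texttt{LineMerge}($left$) and \texttt{LineMerge}($right$) act on disjoint column ranges of the $2\times m$ grid and therefore execute in parallel without any spatial interference. This is exactly what keeps the makespan linear rather than $O(m\log m)$: writing $T(\ell)$ for the makespan of sorting a length-$\ell$ segment and $M(\ell)$ for the makespan of a single \texttt{Merge} on a segment of length $\ell$, the recursion gives $T(\ell)=T(\lceil \ell/2\rceil)+M(\ell)$ rather than $2\,T(\ell/2)+M(\ell)$. I would then bound $M(\ell)$. When a left sub-segment and a right sub-segment of total length $\ell$ are merged, an agent at relative position $p$ moves to its merged rank $q$; since its rank is its in-half rank plus the number of opposite-half elements that overtake it, one gets $|q-p|\le \lceil \ell/2\rceil$, so every agent travels horizontally at most about $\ell/2$ columns in a single merge. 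Invoking the directional-channel discipline of Alg.~\ref{alg:lineMerge} and Fig.~\ref{fig:merge} — right-movers descend to the lower row and advance rightward, left-movers remain on the upper row and advance leftward — the two groups never share a row, and the co-directional order within each row is preserved, so after the \texttt{Synchronize} step no vertex or edge conflict occurs; the descend/ascend maneuvers add only $O(1)$ steps. Hence $M(\ell)=\ell/2+O(1)$, and each \texttt{LineMerge} call returns its agents to the first row, preserving the invariant needed by its parent merge.

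Finally I would solve the recurrence. Unrolling over the $\log_2 m$ levels yields $T(m)=\sum_{i=0}^{\log_2 m-1}\big(m/2^{\,i+1}+O(1)\big)=(m-1)+O(\log m)=m+o(m)$; the geometric series of the leading terms sums to exactly $m$, which is precisely why the leading constant is $1$ rather than $2$. For $m$ not a power of two, the unbalanced splits perturb each $M(\cdot)$ by only an additive constant and leave the asymptotics unchanged. Computing the entire plan is merge sort augmented with explicit per-agent path assignment, clearly polynomial in $m$ and in the grid size, which settles the last claim.

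I expect the main obstacle to be the rigorous verification of collision-freeness during a single merge — in particular coordinating the descend/travel/ascend phases so that an ascending right-mover never meets a left-mover on the upper row nor a following right-mover on the lower row. This is exactly what the ``without stopping,'' ``avoiding blocking those agents that are moving leftward,'' and final synchronization clauses of Alg.~\ref{alg:lineMerge} are designed to guarantee, and I would argue it by fixing a common time parametrization in which co-directional agents move in lockstep while transitions between rows are scheduled at the segment endpoints. By contrast, the makespan computation is a routine geometric sum once the per-merge bound $M(\ell)=\ell/2+O(1)$ and the parallel-recursion observation are in hand.
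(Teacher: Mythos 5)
Your proposal is correct and follows essentially the same route as the paper's proof: the paper likewise exploits parallel execution of disjoint merges, bounds each agent's movement in a merge of two blocks by the opposite block's length plus $2$ (your $M(\ell)=\ell/2+O(1)$), sums the geometric series over the $\lceil\log m\rceil$ phases to get $m+2(\log m+1)=m+o(m)$, and establishes collision-freeness by the same channel discipline (left-movers on the upper row, right-movers descending to the lower row, with co-directional order preserved). Your recurrence $T(\ell)=T(\lceil\ell/2\rceil)+M(\ell)$ and rank-counting bound are just a top-down restatement of the paper's bottom-up, induction-over-phases argument.
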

\begin{proof}
We first show \emph{feasibility}. The procedure takes $\lceil \log m \rceil$ phases; in a phase, let us denote a section of the $2\times m$ grid where robots are treated together as a \emph{block}. For example, the left $2 \times 4$ grid in Fig.~\ref{fig:merge}(b) is a block. It is clear that the first phase, involving up to two robots per block, is feasible (i.e., no collision). Assuming that phase $k$ is feasible, we look at phase $k + 1$. We only need to show that the procedure is feasible on one block of length up to $2^{k+1}$. For such a block, the left half-block of length up to $2^k$ is already fully sorted as desired, e.g., in increasing order from left to right. For the $k+1$ phase, all robots in the left half-block may only stay in place or move to the right. These robots that stay must be all at the leftmost positions of the half-block and will not block the motions of any other robot. For the robots that need to move to the right, their relative orders do not need to change and, therefore, will not cause collisions among themselves. Because these robots that move in the left half-block will move down on the grid by one edge, they will not interfere with any robot from the right half-block. 
Because the same arguments hold for the right half-block (except the direction change), the overall process of merging a block occurs without collision. 

Next, we examine the \emph{makespan}. For any single robot $r$, at phase $k$, suppose it belongs to block $b$ and block $b$ is to be merged with block $b'$. It is clear that the robot cannot move more than $len(b') + 2$ steps, where $len(b')$ is the number of columns of $b'$ and the $2$ extra steps may be incurred because the robot needs to move down and then up the grid by one edge. This is because any move that $r$ needs to do is to allow robots from $b'$ to move toward $b$.
Because there are no collisions in any phase, adding up all the phases, no robot moves more than $m + 2(\log m + 1) = m + o(m)$ steps. 

Finally, the merge sort-like linear merge shuffle primitive runs in $O(m\log m)$ time since it is a standard divide-and-conquer routine with $\log m$ phases. 
\end{proof}
We distinguish between $\frac{1}{3}$ and $\frac{1}{2}$ density settings because the overhead in \rtlm is larger. Nevertheless, with the linear merge, the asymptotic properties of \rth for $\frac{1}{3}$ agent density mostly carry over to \rtlm. 

\begin{theorem}[Random \mpp, $\frac{1}{2}$ Agent Density]
For random \mpp instances on an $m_1 \times m_2$ grid, where $m_1 \ge m_2$ are multiples of two, for $\frac{m_1m_2}{3} \le n  \le \frac{m_1m_2}{2}$ agents, a solution can be computed in polynomial time that has makespan $m_1 + 2m_2 + o(m_1)$ with high probability.
As $m_1 \to \infty$, the solution approaches an optimality of $1 + \frac{m_2}{m_1 + m_2} \in (1, 1.5]$, with high probability. 
\end{theorem}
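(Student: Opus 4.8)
The plan is to replay the two-phase argument behind Theorems~\ref{t:rtm-ramdom} and \ref{t:rth-ratio}, but to tile $\mathcal G$ with $2\times 2$ cells instead of $3\times 3$ cells and to realize each simulated shuffle with the linear-merge primitive of Lemma~\ref{l:lm} rather than with the highway heuristic. The two phases are kept: (i) an unlabeled \emph{balancing} reconfiguration that brings the random start (and, symmetrically, the goal) configuration to one with at most $2$ agents per $2\times 2$ cell, and (ii) a labeled resolution that runs \rtatwo (Theorem~\ref{t:rta}) on top of this balanced layout, simulating its three parallel row/column shuffle rounds with linear merge.

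For phase (i), I would establish the $\frac{1}{2}$-density analogue of Proposition~\ref{p:phase:1}. With $2\times 2$ cells there are $\frac{m_1 m_2}{4}$ cells into which $n \le \frac{m_1 m_2}{2}$ agents must be distributed with at most two per cell, so I would split the agents into two groups of size $\le \frac{m_1 m_2}{4}$ and apply the minimax grid matching theorem (Theorem~\ref{t:minimax}) to each group separately, routing one agent of each group into every cell. As in Proposition~\ref{p:phase:1}, each group moves a maximum distance $O(\log^{3/4} m_1)$ with high probability, and since the agents are indistinguishable the two overlaid reconfigurations do not increase any individual travel distance; hence the random configuration is within $o(m_1)$ of a balanced configuration with high probability, yielding an $o(m_1)$-makespan unlabeled reconfiguration plan.

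For phase (ii), I would interpret each row (resp.\ column) of $2\times 2$ cells as a horizontal $2\times m_2$ (resp.\ vertical $2\times m_1$) strip holding up to $m_2$ (resp.\ $m_1$) agents at density $\frac{1}{2}$ --- exactly the regime of Lemma~\ref{l:lm}. A single parallel row shuffle is then simulated by first \emph{centering} the (at most two) agents of each cell onto the strip's first line in $O(1)$ steps, then invoking linear merge to realize the arbitrary target ordering in $m_2 + o(m_2)$ steps; a column shuffle is simulated identically on the vertical strips in $m_1 + o(m_1)$ steps, preceded by the $O(1)$ transpose that moves agents from the row arrangement into the column arrangement. Because \rtatwo needs only three parallel rounds (row, column, row), the total is $(m_2 + o(m_2)) + (m_1 + o(m_1)) + (m_2 + o(m_2)) = m_1 + 2 m_2 + o(m_1)$ steps; adding the $o(m_1)$ balancing cost of phase (i) gives an overall makespan of $m_1 + 2 m_2 + o(m_1)$ with high probability. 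Dividing by the lower bound $m_1 + m_2 - o(m_1)$ of Proposition~\ref{p:makespan-lower} (valid since $n = \Theta(m_1 m_2)$) and letting $m_1 \to \infty$ yields the optimality ratio $1 + \frac{m_2}{m_1 + m_2} \in (1, 1.5]$.

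The main obstacle I expect is the low-level plumbing of phase (ii): unlike the highway simulation, linear merge uses both lines of its $2\times m$ strip simultaneously (one traffic direction per line), so I must verify that the centering step, the linear-merge execution, and the row-to-column transpose between rounds can all be performed collision-free, in parallel across all strips, without neighboring strips interfering and without the per-round $o(m)$ overhead (the $2\log m$ merge term plus the $O(1)$ centering/transpose) accumulating beyond $o(m_1)$ over the three rounds. A secondary detail to check is that cells holding fewer than two agents are padded with virtual agents so that Lemma~\ref{l:lm} applies verbatim.
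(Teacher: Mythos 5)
Your proposal is correct and follows essentially the same route as the paper, whose own ``proof'' is only a one-line sketch (combine the makespan lower bound with Lemma~\ref{l:lm}): the intended argument is exactly your two-phase plan --- an $o(m_1)$ unlabeled balancing step inherited from Proposition~\ref{p:phase:1}, followed by the three \rtatwo shuffle rounds simulated via linear merge on $2\times m$ strips, then division by the lower bound of Proposition~\ref{p:makespan-lower}. Your filled-in details (centering/transpose overhead, parallel strip execution, virtual-agent padding) are the right things to verify and are consistent with how the paper handles the analogous $\frac{1}{3}$-density case.
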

\tg{added proof sketch}
\begin{proof}[Proof Sketch]
The proof follows by combining Proposition~\ref{p:asymptotic_lowerbound} and Lemma~\ref{l:lm}.
\end{proof}

\subsection{Supporting Arbitrary \mpp Instances on Grids}\label{subsec:arbi}
We now examine applying \rth to arbitrary \mpp instances up to $\frac{1}{2}$ agent density.
If an \mpp instance is arbitrary, all that changes to \rth is the makespan it takes to complete the unlabeled reconfiguration phase. On an $m_1\times m_2$ grid, by computing a matching, it is straightforward to show that it takes no more than $m_1 + m_2$ steps to complete the unlabeled reconfiguration phase, starting from an arbitrary start configuration.  Since two executions of unlabeled reconfiguration are needed, this adds $2(m_1 + m_2)$ additional makespan. Therefore, the following results holds.

\begin{theorem}[Arbitrary \mpp, $\le \frac{1}{2}$ Density]\label{t:arbi}
For arbitrary \mpp instances on an $m_1 \times m_2$ grid, $m_1 \ge m_2$, for $n \le \frac{m_1m_2}{2}$ agents, a $3m_1 + 4m_2 + o(m_1)$ makespan solution can be computed in polynomial time. 
\end{theorem}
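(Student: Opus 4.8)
The plan is to observe that the only component of \rth (resp.\ \rtlm, for densities between $\frac{1}{3}$ and $\frac{1}{2}$) whose cost depends on the \emph{randomness} of the instance is the unlabeled reconfiguration phase; every remaining step is oblivious to how the start/goal configurations were generated. Concretely, recall that the pipeline consists of (i) an unlabeled reconfiguration carrying the start configuration to a balanced (then centered) configuration, (ii) the Grid Rearrangement shuffle phase (highway or linear merge) that transports agents between the centered start and centered goal configurations, and (iii) a second unlabeled reconfiguration carrying the centered goal configuration to the arbitrary goal, executed as the time-reversal of reconfiguring the goal since \mpp plans are reversible. By Theorem~\ref{t:rtm-ramdom} (resp.\ its $\frac{1}{2}$-density counterpart obtained via Lemma~\ref{l:lm}), step (ii) costs $m_1 + 2m_2 + O(1)$ steps \emph{regardless} of the instance, because the shuffle simulation only ever sees the already-centered configurations. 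Hence it suffices to bound the makespan of a single worst-case unlabeled reconfiguration and then account for its two occurrences.

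First I would establish the key new ingredient: starting from any configuration of $n \le \frac{m_1 m_2}{2}$ agents, a balanced configuration is reachable with makespan at most $m_1 + m_2$ (up to lower-order terms). Since agents are interchangeable in this phase, I would pose it as an anonymous \mpp instance in which the targets are the slots of a balanced configuration and any assignment of agents to slots is admissible; feasibility of such a balanced target is immediate for density up to $\frac{1}{2}$. This problem is solvable exactly in polynomial time through the max-flow formulation on the time-expanded graph already used in Proposition~\ref{p:time}~\cite{ford1956maximal,YuLav12CDC}. The horizon it requires is governed by a bottleneck (min--max-distance) matching between agents and balanced slots, whose value is at most the grid diameter $(m_1-1)+(m_2-1) < m_1 + m_2$. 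Translating this bottleneck distance into an achievable makespan (up to an additive lower-order congestion term) is precisely the anonymous-routing guarantee invoked in the proof of Theorem~\ref{t:rtm-ramdom} (Theorem~1 of \cite{yu2018constant}); therefore a plan of makespan $m_1 + m_2 + o(m_1)$ exists and is computable in polynomial time, and by reversal the same bound applies to the second reconfiguration.

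Finally I would sum the three contributions, $(m_1+m_2) + (m_1 + 2m_2 + O(1)) + (m_1+m_2) = 3m_1 + 4m_2 + O(1)$, and fold the $O(1)$ together with any residual congestion slack into $o(m_1)$, yielding the claimed $3m_1 + 4m_2 + o(m_1)$ makespan. Polynomiality is inherited directly from Proposition~\ref{p:time}, since no step exceeds the matching/max-flow cost already analyzed there.

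The step I expect to be the main obstacle is making the $m_1 + m_2$ bound for a \emph{worst-case} unlabeled reconfiguration fully airtight: the diameter cleanly bounds the bottleneck matching distance, but converting that distance into an achievable makespan requires ruling out congestion blow-up when many agents must traverse overlapping corridors. I would resolve this by leaning on the anonymous-\mpp routing guarantee of \cite{yu2018constant} rather than re-deriving it from scratch, and by noting that any leftover congestion overhead is $o(m_1)$ and is thus absorbed harmlessly into the stated bound, so that the exact constants $3$ and $4$ on $m_1$ and $m_2$ are preserved.
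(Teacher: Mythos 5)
Your decomposition is exactly the paper's: two worst-case unlabeled reconfigurations at $m_1+m_2$ each, plus the instance-independent shuffle phase at $m_1+2m_2+O(1)$, summing to $3m_1+4m_2+o(m_1)$. The divergence — and the genuine gap — is in how you certify the $m_1+m_2$ bound for a \emph{worst-case} unlabeled reconfiguration. You bound the bottleneck (min--max) matching distance by the grid diameter and then invoke the anonymous-routing guarantee of \cite{yu2018constant} (as used in the proof of Theorem~\ref{t:rtm-ramdom}) to convert that distance $\ell$ into a makespan of $\ell + o(m_1)$. But that conversion is a \emph{multiplicative} constant-factor guarantee: it produces makespan $O(\ell)$, with a constant strictly larger than $1$ coming from the phased local routing it performs. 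In the random setting this distinction is invisible, because there $\ell = O(\log^{3/4} m_1) = o(m_1)$, so $O(\ell)$ is still $o(m_1)$; that is the only way the paper ever uses it. In your worst-case application $\ell = \Theta(m_1+m_2)$, so the same tool yields $C(m_1+m_2)$ with $C>1$, and the total becomes $(1+2C)m_1+(2+2C)m_2+o(m_1)$, destroying the claimed constants $3$ and $4$. Your closing remark that ``any leftover congestion overhead is $o(m_1)$'' is precisely the unproven crux, not a harmless bookkeeping step: the overhead of the cited theorem scales with $\ell$, not below it. Relatedly, the horizon at which the time-expanded max-flow becomes feasible is the optimal unlabeled makespan, which is lower-bounded — not upper-bounded — by the bottleneck matching distance, so feasibility at horizon $m_1+m_2+o(m_1)$ cannot be inferred from the diameter bound alone; it must be exhibited constructively.

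What the paper does instead (tersely, ``by computing a matching, it is straightforward\dots'') is a direct two-phase routing construction: a flow/matching computation assigns each agent an intermediate cell so that agents first move purely vertically within their own columns, then purely horizontally within rows, with the assignment chosen so that the intermediate occupancies are consistent with the balanced target. Each phase is a one-dimensional \emph{unlabeled} routing problem, hence monotone and collision-free, costing at most $m_1-1$ and $m_2-1$ steps respectively; this gives the additive $\le m_1+m_2$ bound with constant exactly $1$, which is what the theorem's constants require. To repair your argument you would need to replace the appeal to the constant-factor conversion theorem with this (or an equivalent) constructive matching-based routing argument; the rest of your proof — the instance-independence of the shuffle phase, the time-reversal of the second reconfiguration, and the polynomiality via Proposition~\ref{p:time} — is sound and matches the paper.
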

\tg{Obvious result, does it need a proof}
    

\section{Generalization to 3D}\label{sec: three_d}
We now explore the 3D setting. To keep the discussion focused, we mainly show how to generalize \rth to 3D, but note that \rtmapf and \rtlm can also be similarly generalized. 
High-dimensional \rtp \cite{szegedy2023rubik} is defined as follows. 

\begin{problem}[{\normalfont \bf Grid Rearrangement in $k$D (\rtpk)}]\label{p:rtkd}
Let $M$ be an $m_1 \times \ldots \times m_k$ table, $m_1\ge \ldots \ge m_k$, containing $\prod_{i=1}^km_i$ items, one in each table cell. 
In a \emph{shuffle} operation, the items in a single column in the $i$-th dimension of $M$, $1 \le i \le k$, may be permuted arbitrarily. 
Given two arbitrary configurations $S$ and $G$ of the items on $M$, find a sequence of shuffles that take $M$ from $S$ to $G$.
\end{problem}

We may solve \rtpthree using \rtatwo as a subroutine by treating \rtp as an \rtptwo, which is straightforward if we view a 2D slice of \rtatwo as a ``wide'' column. For example, for the $m_1 \times m_2 \times m_3$ grid, we may treat the second and third dimensions as a single dimension.
Then, each wide column, which is itself an $m_2 \times m_3$ 2D problem, can be reconfigured by applying \rtatwo.
With some proper counting, we obtain the following 3D version of 
Theorem~\ref{t:rta} as follows.
\begin{theorem}[{\normalfont \bf Grid Rearrangement Theorem, 3D}]\label{p:rta3d}
An arbitrary Grid Rearrangement problem on an $m_1\times m_2\times m_3$ table can be solved using $m_1m_2+m_3(2m_2+m_1)+m_1m_2$ shuffles. 
\end{theorem}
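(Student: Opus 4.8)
The plan is to obtain the bound by a single \emph{flattening} of the 3D problem into one application of \rtatwo in which each atomic ``row shuffle'' is itself realized by a nested 2D rearrangement. Concretely, I would view the $m_1\times m_2\times m_3$ table as a 2D table $\hat M$ of shape $m_3\times (m_1m_2)$ as follows: each of the $m_3$ \emph{rows} of $\hat M$ is the $m_1\times m_2$ slice obtained by fixing the third coordinate $z$, and each of the $m_1m_2$ \emph{columns} of $\hat M$ is a dimension-3 line $\{(x_0,y_0,z):z\in[m_3]\}$. The native 3D shuffles then split cleanly: a dimension-3 shuffle is exactly one column shuffle of $\hat M$, whereas dimension-1 and dimension-2 shuffles stay inside a fixed slice and are precisely the row/column shuffles of a 2D rearrangement of that $m_1\times m_2$ slice.

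The key step is to run the \emph{labeled} \rtatwo on $\hat M$ using the column--row--column schedule (rather than row--column--row), so that $\hat M$ is rearranged by two inexpensive column phases surrounding a single ``heavy'' row phase. Each column phase consists of $m_1m_2$ column shuffles of $\hat M$, every one of which is a single native dimension-3 shuffle, costing $m_1m_2$ native shuffles per phase. The one row phase requires each of the $m_3$ slices to be permuted arbitrarily; by Theorem~\ref{t:rta} applied to an $m_1\times m_2$ table (with $m_1\ge m_2$, via its own column--row--column schedule), each such slice permutation is a labeled 2D rearrangement costing $2m_2+m_1$ native dimension-1/dimension-2 shuffles, so the row phase totals $m_3(2m_2+m_1)$. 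Summing the three phases yields exactly $m_1m_2+m_3(2m_2+m_1)+m_1m_2$, matching the claim term-for-term.

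Correctness has two parts, both delegated to Theorem~\ref{t:rta}. At the outer level, Theorem~\ref{t:rta} guarantees that the column--row--column schedule on $\hat M$ realizes an arbitrary target configuration of the flattened items, i.e. of the 3D table, \emph{provided} each row shuffle can be executed as an arbitrary permutation of that row; the nested application of Theorem~\ref{t:rta} to each slice supplies exactly this capability. The $d$-regularity hypothesis of Hall's matching theorem (Theorem~\ref{t:hall}), which underlies Theorem~\ref{t:rta}, holds at both levels because $\hat M$ and every slice are completely filled (one item per cell), a property inherited from the fully occupied 3D table.

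I expect the only genuinely delicate point to be the bookkeeping of the reduction: choosing the orientation so that the cheap dimension-3 lines become the \emph{doubled} column phases while the expensive slice rearrangements are confined to a \emph{single} phase, and confirming that Theorem~\ref{t:rta} may be invoked on $\hat M$ even though its aspect ratio ($m_3$ rows versus $m_1m_2$ columns) need not satisfy the usual convention that the number of rows is at least the number of columns. The latter is harmless because the matching argument behind Theorem~\ref{t:rta} (Theorem~\ref{t:hall} on $d$-regular bipartite graphs) is orientation-agnostic; the convention only selects which of the two symmetric schedules attains the stated count. Everything else is the ``proper counting'' the statement alludes to, and the shuffle total is unaffected by whether the $m_3$ slice rearrangements are carried out simultaneously or sequentially, since the theorem counts shuffle operations rather than time steps.
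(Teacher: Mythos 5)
Your proposal is correct and is essentially the paper's own construction: the paper's \rtathree algorithm is exactly your flattening, namely $m_1m_2$ $z$-shuffles prescribed by $m_3$ perfect matchings in an $m_3$-regular bipartite graph over current/desired $(x,y)$ positions, then a nested labeled \rtatwo on each of the $m_3$ $x$-$y$ slices at $2m_2+m_1$ shuffles apiece, then $m_1m_2$ closing $z$-shuffles. The only cosmetic difference is that you make the outer $m_3\times(m_1m_2)$ table explicit and invoke Theorem~\ref{t:rta} at both levels (correctly noting that the row/column convention is immaterial to the matching argument), whereas the paper runs the outer matching construction directly on the 3D table.
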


Although \cite{szegedy2023rubik} offers a broad framework for addressing \rtpk, it is not directly applicable to multi-agent routing in high-dimensional spaces. To overcome this limitation, we have developed the high-dimensional \mpp algorithm by incorporating and elaborating on its underlying principles similar to the 2D scenarios.
Denote the corresponding algorithm for Theorem~\ref{p:rta3d} as 
\rtathree, we illustrate how it works on a $3 \times 3\times 3$ table (Fig.~\ref{fig:rubik3d}). \rta operates in three phases. First, a bipartite graph $B(T, R)$ is constructed based on the initial table where the partite set $T$ are the colors of items representing the desired $(x,y)$ positions, and the set $R$ are the set of $(x,y)$ positions of items  (Fig.~\ref{fig:rubik}(b)). Edges are added as in the 2D case. 
From $B(T,R)$, $m_3$ \emph{perfect matchings} are computed. Each matching contains $m_1m_2$ edges and connects all of $T$ to all of $R$. processing these matchings yields an intermediate table (Fig.~\ref{fig:rubik3d}(c)), serving a similar function as in the 2D case.
After the first phase of $m_1m_2$ $z$-shuffles, the intermediate table (Fig.~\ref{fig:rubik3d}(c)) that can be reconfigured by applying \rtatwo with $m_1$ $x$-shuffles and $2m_2$ $y$-shuffles.
This sorts each item in the correct $x$-$y$ positions (Fig.~\ref{fig:rubik}(d)).
Another $m_1m_2$  $z$-shuffles can then sort each item to its desired $z$ position (Fig.~\ref{fig:rubik}(e)). 

 \begin{figure}[!htbp]
        \centering
        \includegraphics[width=\linewidth]{./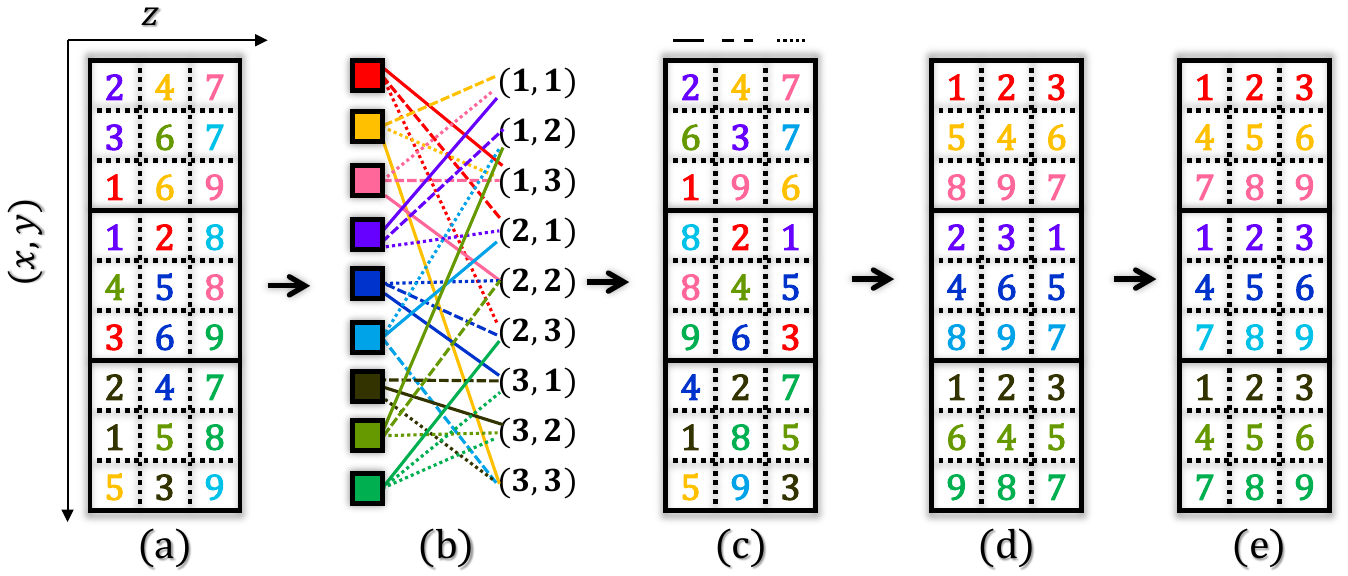}
                \caption{Illustration of applying \emph{\rtathree}. (a) The initial $3\times 3\times 3$ table with  a random arrangement of 27 items that are colored and labeled. Color represents the $(x,y)$ position of an item. (b) The constructed bipartite graph. The right partite set contains all the possible $(x,y)$ positions. It contains $3$ perfect matchings, determining the $3$ columns in (c). (c) Applying $z$-shuffles to (a), according to the matching results, leads to an intermediate table where each $x$-$y$ plane has one color appearing exactly once. (d) Applying wide shuffles to (c) correctly places the items according to their $(x, y)$ values (or colors). (e)  Additional $z$-shuffles fully sort the labeled items.} 
        \label{fig:rubik3d}
    \end{figure}
\subsection{Extending \rth to 3D}
Alg.~\ref{alg:rth3d}, which calls Alg.~\ref{alg:matching3d} and Alg.~\ref{alg:xyfit}, outlines the high-level process of extending \rth to 3D. 
In each $x$-$y$ plane, $\mathcal G$ is partitioned into $3\times 3$ cells (e.g.,Fig.~\ref{fig:example}). 
Without loss of generality, we assume that $m_1,m_2,m_3$ are multiples of 3 and there are no obstacles.
First, to make \rtathree applicable, we convert the arbitrary start/goal
configurations to intermediate \emph{balanced} configurations $S_1$ and $G_1$, treating agents as unlabeled, as we have done in \rth, wherein each 2D plane, each $3 \times 3$ cell contains no more than $3$ agents (Fig.~\ref{fig:random_to_balanced}). 

 \begin{figure}[!htbp]
        \centering
        \includegraphics[width=0.9\linewidth]{./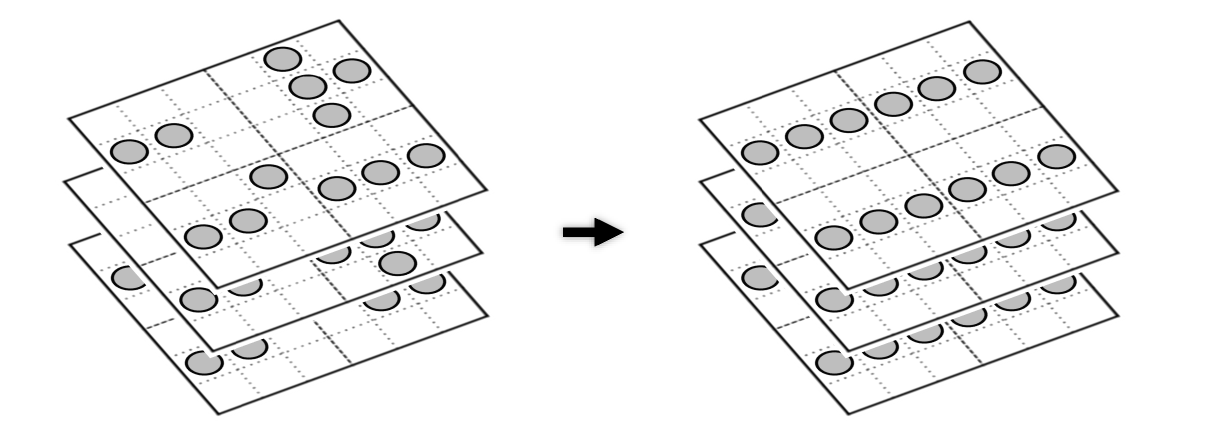}
\caption{Applying unlabeled \mpp to convert a random configuration to a balanced centering one on  $6\times 6 \times 3$ grids.} 
        \label{fig:random_to_balanced}
    \end{figure}

\rtathree can then be applied to coordinate the agents moving toward their intermediate goal positions $G_1$.
Function $\texttt{MatchingXY}$ finds a feasible intermediate configuration $S_2$ and routes the agents to $S_2$ by simulating shuffle operations along the $z$ axis.
Function $\texttt{XY-Fitting}$ apply shuffle operations along the $x$ and $y$ axes to route each agent $i$ to its desired $x$-$y$ position  $(g_{1i}.x,g_{1i}.y)$.
In the end, the function $\texttt{Z-Fitting}$ is called, routing each agent $i$ to the desired $g_{1i}$ by performing shuffle operations along the $z$ axis and concatenating the paths computed by unlabeled \mpp planner $\texttt{UnlabeledMRPP}$. 

\begin{algorithm}
\begin{small}
\DontPrintSemicolon
\SetKwProg{Fn}{Function}{:}{}
\SetKwFunction{Fampp}{UnlabeledMRPP}
\SetKwFunction{Frthddd}{GRH3D}
\SetKwFunction{Fmatchingthreed}{MatchingXY}
\SetKwFunction{Frthdd}{XY-Fitting}
\SetKwFunction{Fzfitting}{Z-Fitting}

  \caption{\rthddd \label{alg:rth3d}}
  \KwIn{Start and goal vertices $S=\{s_i\}$ and $G=\{g_i\}$}
  \Fn{\Frthddd{$S,G$}}{
$S_1,G_1\leftarrow$\Fampp{$S,G$}\;
\Fmatchingthreed{}\;
\Frthdd{}\;
\Fzfitting{}\;
}
\vspace{1mm}
\end{small}
\end{algorithm}  
We now explain each part of \rthddd.
$\texttt{MatchingXY}$ uses an extended version of \rta to find perfect matching that allows feasible shuffle operations. 
Here, the ``item color" of an item $i$ (agent) is the tuple $(g_{1i}.x,g_{1i}.y)$, which is the desired $x$-$y$ position it needs to go.
After finding the $m_3$ perfect matchings, the intermediate configuration $S_2$ is determined.
Then, shuffle operations along the $z$ direction can be applied to move the agents to $S_2$.
\begin{algorithm}
\begin{small}
\DontPrintSemicolon
\SetKwProg{Fn}{Function}{:}{}
\SetKwFunction{Fmatchingthreed}{MatchingXY}
  \caption{MatchingXY \label{alg:matching3d}}
  \KwIn{Balanced start and goal vertices $S_1=\{s_{1i}\}$ and $G_1=\{g_{1i}\}$}
  \Fn{\Fmatchingthreed{$S_1,G_1$}}{
$A\leftarrow[1,...,n]$\;
$\mathcal{T}\leftarrow$ the set of $(x,y)$ positions in $S_1$\;
\For{$(r,t)\in \mathcal{T}\times \mathcal{T}$}{
    \If{$\exists i\in A$ where $(s_{1i}.x,s_{1i}.y)=r\wedge (g_{1i}.x,g_{1i}.y)=t$}{
        add edge $(r,t)$ to $B(T,R)$\;
        remove $i$ from $A$ \;
    }

}
    compute matchings $\mathcal{M}_1,...,\mathcal{M}_{m_3}$ of $B(T, R)$\;
     $A\leftarrow [1,...,n]$\;
 \ForEach{$\mathcal{M}_c$ and $(r,t)\in \mathcal{M}_c$}{
 \If{$\exists i\in A$ where $(s_{1i}.x,s_{1i}.y)=r\wedge (g_{1i}.x,g_{1i}.y)=t$}{
 $s_{2i}\leftarrow (s_{1i}.x, s_{1i}.y,c)$ and remove $i$ from $A$\;
  mark agent $i$ to go to $s_{2i}$\;
 }
 }
   perform simulated $z$-shuffles in parallel \;
  
}
\vspace{1mm}
\end{small}
\end{algorithm}
The agents in each $x$-$y$ plane will be reconfigured by applying $x$-shuffles and $y$-shuffles.
We need to apply \rtatwo for these agents in each plane, as demonstrated in Alg.~\ref{alg:xyfit}.
In \rth, for each 2D plane,  the ``item color" for agent $i$ is its desired $x$ position $g_{1i}.x$.
For each plane, we compute the $m_2$ perfect matchings to determine the intermediate position $g_{2}$.
Then each agent $i$ moves to its $g_{2i}$ by applying $y$-shuffle operations.
In Line 18, each agent is routed to its desired $x$ position by performing $x$-shuffle operations.
In Line 19, each agent is routed to its desired $y$ position by performing $y$-shuffle operations.
\begin{algorithm}
\begin{small}
\DontPrintSemicolon
\SetKwProg{Fn}{Function}{:}{}
\SetKwFunction{Fxyfitting}{XY-Fitting}
\SetKwFunction{Frthdd}{GRH2D}
  \caption{XY-Fitting \label{alg:xyfit}}
  \KwIn{Current positions $S_2$ and balanced goal positions $G_1$}
  \Fn{\Fxyfitting{}}{
  \For{$z\leftarrow[1,...,m_3]$}{
$A\leftarrow \{i|s_{2i}.y=z\}$\;
\Frthdd{$A,z$}\;
}
}
  \Fn{\Frthdd{$A,z$}}{

$\mathcal{T} \leftarrow$ the set of $x$ positions of $S_2$\;
\For{$(r,t)\in \mathcal{T}\times\mathcal{T}$}{
       \If{$\exists i\in A$ where $s_{2i}.x=r\wedge g_{1i}.x=t$}{
       \If{agent $i$ is not assigned}{
       add edge $(r,t)$ to $B(T,R)$\;
        mark $i$ assigned \;
       }

    }
      compute matchings $\mathcal{M}_1,...,\mathcal{M}_{m_2}$ of $B(T, R)$\;
}
$A'\leftarrow A$\;
   \ForEach{$\mathcal{M}_c$ and $(r,t)\in \mathcal{M}_c$}{
 \If{$\exists i\in A'$ where $s_{2i}.x=c\wedge g_{1i}.x=t$}{
 $g_{2i}\leftarrow (s_{2i}.x, c,z)$ and remove $i$ from $A'$\;
  mark agent $i$ to go to $g_{2i}$\;
 }
 }
  route each agent $i\in A$ to $g_{2i}$\;
  route each agent $i\in A$ to  $(g_{2i}.x,g_{1i}.y,z)$\;
  route each agent $i\in A$ to  $(g_{1i}.x,g_{1i}.y,z)$\;
}

\vspace{1mm}
\end{small}
\end{algorithm} 
After all the agents reach the desired $x$-$y$ positions, another round of $z$-shuffle operations in $\texttt{Z-Fitting}$ can route the agents to the balanced goal configuration computed by an unlabeled \mpp planner.
In the end, we concatenate all the paths as the result.

\rtlm and \rtmapf can be extended to 3D in similar ways by replacing the solvers in 2D planes.
For \rtmapf, there is no need to use unlabeled \mpp for balanced reconfiguration, which yields a makespan upper bound of $4m_1 + 8m_2 + 8m_3$ (as a direct combination of Theorem~\ref{t:rtm-makespan} and Theorem~\ref{p:rta3d}). For arbitrary instances under half density in 3D, the makespan guarantee in Theorem~\ref{t:arbi} updates to  $3m_1 + 4m_2 + 4m_3 + o(m_1)$.

\subsection{Properties of \rthddd}
Computation for \rthddd is dominated by perfect matching and unlabeled \mpp.
Finding $m_3$ ``wide column'' matchings takes $O(m_3 m_1^2m_2^2)$ deterministic time or $O(m_3m_1m_2\log(m_1m_2))$ expected time.
We apply \rtatwo for simulating ``wide column'' shuffle, which requires $O(m_3 m_1^2m_2)$  deterministic time or $O(m_1m_2m_3\log m_1)$ expected time. 
Therefore, the total time complexity for the Grid Rearrangement part is $O(m_3 m_1^2m_2^2+m_1^2m_2m_3)$.
For unlabeled \mpp, we can use the max-flow based algorithm \cite{yu2013multi} to compute the makespan-optimal solutions. 
The max-flow portion can be solved in $O(n|E|T)=O(n^2(m_1+m_2+m_3))$  where $|E|$ is the number of edges and $T=O(m_1+m_2+m_3)$ is the time horizon of the time expanded graph \cite{ford1956maximal}.
%
%
One can also perform two ``wide column'' shuffles plus one $z$ shuffle, which yields $2(2m_1+m_2)+m_3$ number of shuffles and $O(m_1m_2m_3^2+m_3m_1m_2^2)$. This requires more shuffles but a shorter running time.

Next, we derive the optimality guarantee, for which the upper and lower bounds on the makespan are needed. These are straightforward generalizations of results in 2D.

\begin{theorem}[Makespan Upper Bound]\label{p:expected_makespan}
\rthddd returns solutions with $m_1+2m_2+2m_3+o(m_1)$ asymptotic makespan for \mpp instances with $\frac{m_1m_2m_3}{3}$ random start and goal configurations on 3D grids, with high probability.
\rtlmddd returns solutions with $m_1+2m_2+2m_3+o(m_1)$ asymptotic makespan for \mpp instances with $\frac{m_1m_2m_3}{2}$ random start and goal configurations on 3D grids, with high probability.
\rtmddd returns solutions with $3m_1+6m_2+6m_3+o(m_1)$ asymptotic makespan for \mpp instances with $m_1m_2m_3$ random start and goal configurations on 3D grids, with high probability.
Moreover, if $m_1=m_2=m_3=m$, \rthddd and \rtlmddd returns solutions with  $5m+o(m)$ makespan, \rtmddd returns solutions with $15m+o(m)$ makespan.
\end{theorem}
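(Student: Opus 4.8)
The plan is to prove all three asymptotic bounds with one shared makespan-accounting skeleton and to treat their differences as different choices of a single parameter: the cost $f(m)$ of simulating one shuffle of a line of length $m$ in parallel with all other shuffles of its round. By the three-phase structure underlying Theorem~\ref{p:rta3d}, each of \rthddd, \rtlmddd, and \rtmddd routes agents in (i) one parallel round of $z$-shuffles on columns of length $m_3$, (ii) a labeled $2$D rearrangement performed simultaneously inside every $x$-$y$ plane, which by Theorem~\ref{t:rta} consists of one round of $x$-shuffles of length $m_1$ between two rounds of $y$-shuffles of length $m_2$, and (iii) a second parallel round of $z$-shuffles. Since rounds run sequentially but the shuffles within a round run in parallel, the Grid Rearrangement portion costs exactly $f(m_1)+2f(m_2)+2f(m_3)$. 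First I would fix the three values of $f$: $f(m)=m+O(1)$ for the highway simulation of \rthddd (the $m+5$ bound of the $2$D development, applied here to $z$-columns and to the in-plane shuffles), $f(m)=m+o(m)$ for the linear-merge primitive of \rtlmddd (Lemma~\ref{l:lm}), and, for \rtmddd, $f(m)\le 4m$ in the worst case (Lemma~\ref{l:faster-line-shuffle}) but $f(m)=3m+o(m)$ with high probability when the line permutation is induced by a uniformly random instance.

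For \rthddd and \rtlmddd I would then charge the two \emph{unlabeled} reconfigurations that bracket the rearrangement (start-to-balanced and balanced-to-goal). This requires lifting Proposition~\ref{p:phase:1} to three dimensions: with high probability a uniformly random placement of a $\tfrac13$ (resp. $\tfrac12$) fraction of the vertices lies within $o(m_1)$ of a balanced configuration, i.e. one with at most three agents in each $3\times3$ cell of each $x$-$y$ plane. The cleanest route mirrors the $2$D proof: apply the minimax grid matching theorem (Theorem~\ref{t:minimax}) inside each plane, together with a short $z$-direction balancing, so that every agent moves only $O(\mathrm{polylog}\,m_1)=o(m_1)$; because agents are indistinguishable, overlaying the per-axis reconfigurations never increases any single agent's travel. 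Substituting $f(m)=m+o(m)$ into $f(m_1)+2f(m_2)+2f(m_3)$ and adding the $o(m_1)$ balancing yields the claimed $m_1+2m_2+2m_3+o(m_1)$ for both algorithms; I would also verify that the $O(1)$-step conversions between balanced and centered configurations inserted between consecutive rounds are absorbed into the lower-order term, exactly as in $2$D.

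The genuine obstacle is the \rtmddd bound, where the grid is fully occupied, so there is no balancing phase and hence no probabilistic slack coming from matching; the high-probability statement must therefore be earned \emph{entirely} from the line shuffles. Indeed, the worst-case Lemma~\ref{l:faster-line-shuffle} only gives $f(m)=4m$, which reproduces the deterministic $4m_1+8m_2+8m_3$ bound noted earlier (a combination of Theorem~\ref{t:rtm-makespan} and Theorem~\ref{p:rta3d}) and not the sharper $3m_1+6m_2+6m_3$ claimed here. I would attack the improvement by arguing that a uniformly random line permutation, realized on a narrow (width two or three) embedding with left-movers and right-movers confined to separate lanes (in the spirit of the highway and linear-merge primitives), has maximum displacement $m-o(m)$ and congestion that with high probability inflates the realization time by a factor approaching $3$ rather than $4$; I would first confirm that the permutations induced on the individual $z$-, $x$-, and $y$-lines by a uniform instance are themselves random enough for a displacement/congestion concentration argument, and then union-bound over the $\Theta(m_1m_2m_3)$ lines so that \emph{all} simulated shuffles meet $f(m)=3m+o(m)$ simultaneously. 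Granting this, the skeleton gives $f(m_1)+2f(m_2)+2f(m_3)=3m_1+6m_2+6m_3+o(m_1)$. Finally, the $m_1=m_2=m_3=m$ specializations are immediate substitutions: $m+2m+2m=5m+o(m)$ for \rthddd and \rtlmddd, and $3m+6m+6m=15m+o(m)$ for \rtmddd.
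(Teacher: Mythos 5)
Your makespan-accounting skeleton $f(m_1)+2f(m_2)+2f(m_3)$ is exactly the structure the paper relies on (the paper in fact gives no explicit proof of this theorem, introducing it only as a ``straightforward generalization'' of the 2D results), and your treatment of \rthddd and \rtlmddd is sound and matches the intended argument: $f(m)=m+o(m)$ from the highway and linear-merge primitives, two unlabeled balancing steps charged $o(m_1)$, and $O(1)$ inter-round conversions absorbed into the lower-order term. The one caveat there is the 3D balancing lemma: the paper justifies it by invoking the higher-dimensional minimax matching bound (cited in its $k$-dimensional theorem), whereas you propose Theorem~\ref{t:minimax} plane-by-plane plus a $z$-direction correction; that substitute is workable, but you must actually handle the $\Theta(\sqrt{m_1m_2})$ fluctuations in per-plane occupancy before planar matching can apply, which is the step you only gesture at.

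The genuine gap is the \rtmddd clause, and the route you propose cannot close it. At full density every vertex is occupied at every time step, so there are no free ``lanes'': the highway/merge-style picture of left-movers and right-movers on separate rows is physically impossible, and all motion must go through cyclic-rotation block swaps as in Lemmas~\ref{l:reconfigure} and~\ref{l:faster-line-shuffle}. Probabilistic relaxation also works against you, not for you: the per-round cost of the simulated odd-even sort is a \emph{maximum} over $\Theta(m_1m_2)$ parallel blocks, so for a random instance some block attains its worst-case combination in essentially every round with high probability (a union bound over blocks fails because each block is ``hard'' with constant probability), and in any case the permutations to be realized come from the matching computation rather than being uniform. Indeed the factor $3$ cannot come from probability at all --- at full density there is no balancing phase, so the ``with high probability'' and ``$o(m_1)$'' qualifiers in the \rtmddd clause are vacuous and any such bound must be deterministic. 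What actually yields $3m$ per shuffle round is a refinement of the block primitive: by Lemma~\ref{l:reconfigure} all $2\times k$ blocks cost at most $6$ steps, so whenever the blocks-of-four partition avoids the $8$-step $3\times 4$ leftovers (which the extra dimension in 3D makes far easier to arrange), the per-shuffle cost drops from $4m$ to roughly $6\cdot m/2=3m$. Absent such a primitive, the honest conclusion from the paper's own lemmas is $4m_1+8m_2+8m_3$ --- which is precisely what Sec.~\ref{sec: three_d} itself states for \rtmddd as the ``direct combination'' of Theorem~\ref{t:rtm-makespan} and Theorem~\ref{p:rta3d}. In other words, the obstacle you ran into reflects a real inconsistency between the theorem statement and the paper's text, and your proposal neither proves the stated constant nor flags that discrepancy.
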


\begin{proposition}[Makespan Lower Bound]\label{p:asymptotic_lowerbound}
For \mpp instances on $ m_1\times  m_2\times m_3$ grids with $\Theta(m_1m_2m_3)$ random start and goal configurations on 3D grids, the makespan lower bound is asymptotically approaching $m_1+m_2+m_3$, with high probability.
\end{proposition}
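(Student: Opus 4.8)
The plan is to mirror, in three dimensions, the two-corner argument behind the 2D lower bound (Proposition~\ref{p:makespan-lower}). The starting observation is that the makespan of any feasible solution is at least $\max_i \mathrm{dist}(s_i,g_i)$, since every agent must traverse at least as many edges as the graph distance between its start and goal; on the obstacle-free grid this distance equals the Manhattan distance. Hence it suffices to exhibit, with high probability, a single agent whose start and goal are nearly antipodal. To that end I would fix a small parameter $\alpha \ll 1$ and introduce two opposite corner blocks: $\mathcal{G}_1$, the $\alpha m_1 \times \alpha m_2 \times \alpha m_3$ sub-grid anchored at $(1,1,1)$, and $\mathcal{G}_2$, the block of identical dimensions anchored at $(m_1,m_2,m_3)$. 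For any $u \in V(\mathcal{G}_1)$ and $v \in V(\mathcal{G}_2)$ the coordinate-wise separation is at least $(1-2\alpha)m_i$ in each dimension, so their Manhattan distance is at least $(1-2\alpha)(m_1+m_2+m_3)$.

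Next I would estimate the probability that some agent connects the two corners. Writing the agent count as $c\,m_1m_2m_3$ for the constant $c\in(0,1]$ hidden in $\Theta(\cdot)$, and noting that each corner block occupies an $\alpha^3$ fraction of the $m_1m_2m_3$ cells, the probability that a fixed agent has its start in $\mathcal{G}_1$ and goal in $\mathcal{G}_2$ is $\alpha^{6}$ (the key change from the 2D exponent $\alpha^4$). Treating the agents as in the 2D argument, the probability that at least one such agent exists is $p = 1 - (1-\alpha^{6})^{c\,m_1m_2m_3}$, and applying the inequality $(1-x)^y < e^{-xy}$ used earlier gives $p > 1 - e^{-\alpha^{6} c\,m_1m_2m_3}$.

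The final step is to choose $\alpha$ so that $p \to 1$ while the lost distance stays negligible. Taking $\alpha = m_1^{-1/8}$, for instance, yields $\alpha^{6} m_1m_2m_3 = m_1^{1/4} m_2 m_3 \to \infty$, forcing $p \to 1$, and simultaneously $2\alpha(m_1+m_2+m_3) = O(m_1^{7/8}) = o(m_1)$. Consequently, with high probability the makespan is at least $(1-2\alpha)(m_1+m_2+m_3) = m_1+m_2+m_3 - o(m_1)$, which is exactly the asserted bound.

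The one genuinely delicate point — identical to the 2D case — is that the starts of a random \mpp instance, and likewise the goals, form a uniformly random injection rather than independent samples, so the per-agent corner-connecting events are mildly negatively correlated and the factorized bound $(1-\alpha^{6})^{c\,m_1m_2m_3}$ is not literally a product of independent probabilities. I expect the main work to lie in confirming that this correlation does not weaken the complement bound; this can be handled either by invoking negative association of the indicator variables, or by conditioning sequentially and checking that each conditional connecting probability remains $\Omega(\alpha^{6})$, just as the 2D proof does implicitly.
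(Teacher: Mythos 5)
Your proof is correct and is essentially the paper's own argument: the paper gives no separate proof of this 3D proposition, stating only that it is a ``straightforward generalization'' of the 2D two-corner argument in Proposition~\ref{p:makespan-lower}, and your write-up is precisely that generalization, with the correct exponent change from $\alpha^4$ to $\alpha^6$ and the same choice $\alpha = m_1^{-1/8}$ making both $p \to 1$ and the lost distance $o(m_1)$. Your closing caveat about the mild negative correlation of the corner-connecting events (since configurations are injections, not i.i.d.\ samples) is a fair point, but it applies equally to the paper's published 2D proof, which also treats these events as independent without comment.
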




\begin{theorem}[Makespan Optimality Ratio]\label{c:fh}
\rthddd and \rtlmddd yield asymptotic $1+\frac{m_2+m_3}{m_1+m_2+m_3}$ makespan optimality ratio for \mpp instances with $\Theta(m_1m_2m_3)\le \frac{m_1m_2m_3}{3}$ and $\le \frac{m_1m_2m_3}{2}$ random start and goal configurations respectively on 3D grids, with high probability.
\rtmddd yields asymptotic $3+\frac{3(m_1+m_2)}{m_1+m_2+m_3}$ makespan optimality ratio for \mpp instances with $\Theta(m_1m_2m_3)\le m_1m_2m_3$ random start and goal configurations on 3D grids with high probability.
\end{theorem}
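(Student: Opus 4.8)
The plan is to treat Theorem~\ref{c:fh} as a corollary obtained by dividing the high-probability makespan upper bounds of Theorem~\ref{p:expected_makespan} by the high-probability lower bound of Proposition~\ref{p:asymptotic_lowerbound} and passing to the limit $m_1 \to \infty$, mirroring exactly the way Theorem~\ref{t:rth-ratio} was derived from Theorem~\ref{t:rtm-ramdom} and Proposition~\ref{p:makespan-lower} in the 2D case. First I would fix the instance type and recall the two relevant bounds. For \rthddd (density $\le \tfrac13$) and \rtlmddd (density $\le \tfrac12$), Theorem~\ref{p:expected_makespan} gives a computed makespan of $m_1 + 2m_2 + 2m_3 + o(m_1)$ with high probability, and for \rtmddd it gives $3m_1 + 6m_2 + 6m_3 + o(m_1)$; Proposition~\ref{p:asymptotic_lowerbound} gives an optimal-makespan lower bound of $m_1 + m_2 + m_3 - o(m_1)$, also with high probability.

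Next I would take the conjunction of the upper-bound event and the lower-bound event via a union bound on their (vanishing) failure probabilities, so that on the resulting high-probability event both the numerator upper bound and the denominator lower bound hold simultaneously. On that event, the optimality ratio is bounded above by the ratio of the computed makespan to the lower bound. I would then form and simplify this ratio. For \rthddd and \rtlmddd it is
\[
\frac{m_1 + 2m_2 + 2m_3 + o(m_1)}{m_1 + m_2 + m_3 - o(m_1)},
\]
and because $m_3 \le m_2 \le m_1$ the denominator is $\Theta(m_1)$, so every $o(m_1)$ term is asymptotically negligible against it; as $m_1 \to \infty$ the ratio tends to $\frac{m_1 + 2m_2 + 2m_3}{m_1 + m_2 + m_3} = 1 + \frac{m_2 + m_3}{m_1 + m_2 + m_3}$. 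The identical manipulation for \rtmddd gives $\frac{3m_1 + 6m_2 + 6m_3}{m_1 + m_2 + m_3} = 3 + \frac{3(m_2 + m_3)}{m_1 + m_2 + m_3}$.

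This argument is essentially routine, so I do not expect a genuine obstacle; the two points requiring care are (a) the bookkeeping of the high-probability events, where I must argue that the intersection of the upper- and lower-bound events still occurs with high probability (a union bound on two probabilities each tending to $0$), and (b) justifying that the $o(m_1)$ terms in numerator and denominator truly vanish in the limit, which relies precisely on the ordering $m_3 \le m_2 \le m_1$ so that $\Theta(m_1) = \Theta(m_1 + m_2 + m_3)$ and the lower-order contributions wash out of the ratio. The one verification step I would flag is the \rtmddd expression: my simplification yields $3 + \frac{3(m_2 + m_3)}{m_1 + m_2 + m_3}$ from the upper bound $3m_1 + 6m_2 + 6m_3$, so before finalizing I would reconcile this against the form $3 + \frac{3(m_1 + m_2)}{m_1 + m_2 + m_3}$ written in the statement, double-checking the leading constants of Theorem~\ref{p:expected_makespan} to ensure the numerator coefficients pair correctly with the denominator terms.
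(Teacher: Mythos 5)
Your proposal is correct and matches the paper's own (implicit) argument exactly: the paper gives no separate proof of Theorem~\ref{c:fh}, treating it as the immediate quotient of the high-probability upper bounds in Theorem~\ref{p:expected_makespan} by the high-probability lower bound in Proposition~\ref{p:asymptotic_lowerbound}, which is precisely your derivation, including the union-bound and $o(m_1)$ bookkeeping. The discrepancy you flagged for \rtmddd is real and not your error: dividing $3m_1+6m_2+6m_3+o(m_1)$ by $m_1+m_2+m_3-o(m_1)$ yields $3+\frac{3(m_2+m_3)}{m_1+m_2+m_3}$, so the statement's $3+\frac{3(m_1+m_2)}{m_1+m_2+m_3}$ appears to be a typo in the paper (the two expressions coincide only when $m_1=m_3$, e.g., on cubic grids where both equal $5$, consistent with the $15m/3m$ check in Theorem~\ref{p:expected_makespan}).
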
 



We may further generalize the result to higher dimensions. 

\begin{theorem}
 Consider a $k$-dimensional cubic grid with grid size $m$. 
 For uniformly distributed start and goal configurations,  \rth and \rtlm can solve the instances with asymptotic makespan optimality being $\frac{2^{k-1}+1}{k}$ and \rtmapf yields $\frac{4(2^{k-1}+1)}{k}$ asymptotic makespan optimality.
\end{theorem}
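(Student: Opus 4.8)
The plan is to prove the claim by establishing, separately, a makespan upper bound of $(2^{k-1}+1)m + o(m)$ for \rth and \rtlm (and $4(2^{k-1}+1)m + o(m)$ for \rtmapf) together with a makespan lower bound of $km - o(m)$, and then dividing. For the lower bound I would lift the antipodal-corner argument of Proposition~\ref{p:makespan-lower} (reused in Proposition~\ref{p:asymptotic_lowerbound}) to $k$ dimensions: place sub-cubes of side $\alpha m$ at two opposite corners of the $k$-cube, note that any start/goal pair landing in the two sub-cubes is separated by Manhattan distance at least $(1-2\alpha)\,km$, and observe that with $\Theta(m^k)$ agents the probability that some agent realizes such a pair is $1-(1-\alpha^{2k})^{\Theta(m^k)}$. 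Taking $\alpha\to 0$ slowly enough that $\alpha^{2k}m^k\to\infty$ (e.g.\ $\alpha = m^{-1/(4k)}$) drives this probability to $1$ while keeping $(1-2\alpha)\,km = km - o(m)$, giving the lower bound with high probability.

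For the upper bound, I would first dispose of the two unlabeled balancing phases by generalizing Proposition~\ref{p:phase:1}: applying the minimax grid matching theorem (Theorem~\ref{t:minimax}) one dimension at a time shows a random configuration at density $\le\frac13$ (resp.\ $\le\frac12$) lies within $o(m)$ of a balanced configuration, so balancing contributes only $o(m)$ makespan and does not affect the leading term. The heart of the argument is the recursive Grid Rearrangement reduction behind Theorem~\ref{p:rta3d}: a labeled $k$-dimensional rearrangement is realized by peeling off the $k$-th axis and invoking the labeled $(k-1)$-dimensional solver as a black box \emph{twice}, once to set up the matching-determined intermediate configuration and once to fit, with shuffles along axis $k$ in between, where the boundary round shared between the axis-$k$ shuffle and the adjacent $(k-1)$-dimensional phases is absorbed. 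Each single-axis shuffle round is simulated in $m+o(m)$ steps by the highway heuristic (for \rth) or by linear merge (Lemma~\ref{l:lm}, for \rtlm), and---crucially---the per-slice $(k-1)$-dimensional rearrangements run in parallel across the disjoint slices orthogonal to axis $k$ without inter-slice interference. Writing the simulated makespan as $C_k\,m + o(m)$, this accounting yields the recurrence $C_k = 2C_{k-1}-1$ with base case $C_2 = 3$ (the 2D value $m_1+2m_2$ at $m_1=m_2=m$ from Theorem~\ref{t:rtm-ramdom}). Since $C_k - 1 = 2(C_{k-1}-1)$, we get $C_k - 1 = 2^{k-2}(C_2-1) = 2^{k-1}$, i.e.\ $C_k = 2^{k-1}+1$.

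Combining the two bounds gives the asymptotic optimality $\frac{(2^{k-1}+1)m}{km} = \frac{2^{k-1}+1}{k}$ for \rth and \rtlm. For \rtmapf the reduction and hence the round count $C_k$ are unchanged, but each round is simulated at full agent density, which by Lemma~\ref{l:faster-line-shuffle} costs $\approx 4m$ rather than $\approx m$ steps; the makespan therefore scales to $4(2^{k-1}+1)m + o(m)$ and the ratio to $\frac{4(2^{k-1}+1)}{k}$. As sanity checks, \rth/\rtlm recover $\frac32$ at $k=2$ and $\frac53$ at $k=3$, matching Theorem~\ref{t:rth-ratio} and Theorem~\ref{c:fh}, while \rtmapf recovers $6$ at $k=2$.

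The main obstacle is making the recurrence $C_k = 2C_{k-1}-1$ rigorous rather than heuristic. Concretely I must verify that (i) the Hall-type matchings of Theorem~\ref{t:hall} still exist when a ``super-column'' is an entire $(k-1)$-dimensional slice, so that the axis-$k$ shuffles genuinely decouple the problem into independent per-slice $(k-1)$-dimensional rearrangements; (ii) these per-slice rearrangements, each driven by the $3\times 3$-cell highway routine (or linear merge), execute simultaneously across all slices with no inter-slice collision and with no per-round makespan exceeding $m+o(m)$; and (iii) the round-sharing that produces the $-1$ term is valid at every level of the recursion and remains compatible with the dimension-ordering convention $m_1\ge\cdots\ge m_k$ (automatic in the cubic case but needed for the non-cubic extension). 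The probabilistic pieces---balancing and the corner lower bound---are routine generalizations of their 2D/3D counterparts; the delicate part is the combinatorial round-accounting that pins down the constant $2^{k-1}+1$.
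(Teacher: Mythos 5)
Your overall skeleton (antipodal-corner lower bound of $km-o(m)$, $o(m)$ balancing via minimax matching, then a recurrence counting simulated shuffle rounds) matches the paper's, but your recurrence is not the paper's, and the step you yourself flag as delicate is a genuine gap. You decompose a labeled $k$-dimensional rearrangement as \emph{two} labeled $(k-1)$-dimensional solves sandwiching one axis-$k$ shuffle round, and then claim one round of savings at each boundary to get $C_k=2C_{k-1}-1$. That saving is not justifiable: the two rounds adjacent to a boundary run along \emph{different} axes (the final 1D round of a recursive solve runs along some axis $j\le k-1$, the neighboring round along axis $k$), and rounds along different axes compose additively --- an agent may need to finish its axis-$j$ displacement before its axis-$k$ displacement can even begin, so they cannot be fused into a single $m$-step round. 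Nor can the first recursive call be weakened to a cheaper ``colored'' solve: the Hall matchings of Theorem~\ref{t:hall} assign each item a specific cell of its slice (the intersection of its slice with its assigned axis-$k$ column), so the preparation phase in your decomposition is a fully labeled $(k-1)$-dimensional problem. Without the absorption you only get $C_k=2C_{k-1}+1$, i.e.\ $C_k=2^k-1>2^{k-1}+1$, and the claimed constant is not established.

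The paper's route is the opposite sandwich, the one actually implemented by \rthddd and reflected in Theorem~\ref{p:rta3d}: one round of axis-$k$ shuffles (preparation, where the matchings guarantee each $(k-1)$-dimensional slice receives exactly one item per desired in-slice position), then a \emph{single} recursive labeled $(k-1)$-dimensional solve run in parallel over the disjoint slices, then one final round of axis-$k$ shuffles to fit. Since an axis-$k$ round costs only $m+o(m)$ to simulate, this gives the additive recurrence $f(k)=f(k-1)+2m+o(m)$ --- one recursive call, not two --- and requires no round-merging trick. You should be aware, however, that the paper's write-up is internally inconsistent at exactly this point: $f(k)=f(k-1)+2m$ with $f(2)=3m$ solves to $f(k)=(2k-1)m+o(m)$, not to the $(2^{k-1}+1)m$ closed form the paper asserts; the two expressions coincide precisely at $k=2,3$ (the only cases checked elsewhere in the paper), which is presumably how the slip survived. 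So the theorem's stated ratio $\frac{2^{k-1}+1}{k}$ holds only in the weak sense that it is implied by the stronger, correctly derived bound $\frac{2k-1}{k}$ (as $2k-1\le 2^{k-1}+1$ for all $k\ge 2$). By reverse-engineering a recurrence to reproduce the stated constant, you built your proof around an artifact of the paper's arithmetic rather than around the algorithm, and the resulting argument both has an unfixable step and, even if fixed, would prove an exponentially-in-$k$ weaker bound than the paper's own decomposition delivers.
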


\begin{proof}
Unlabeled \mpp takes $o(m)$ makespan (note for $k\geq 3$, the minimax grid matching distance is $O(\log ^{1/k}m)$ ~\cite{shor1991minimax}).
Extending proposition \ref{p:asymptotic_lowerbound} to $k$-dimensional grid, the asymptotic lower bound is $mk$.
We prove that the asymptotic makespan $f(k)$ is $(2^{k-1}+1)m+o(m)$ by induction.
The Grid Rearrangement algorithm solves a $k$-dimensional problem by using two 1-dimensional shuffles and one $(k-1)$-dimensional ``wide column" shuffle.
Therefore, we have $f(k)=2m+f(k-1)$.
It's trivial to see $f(1)=m+o(m),f(2)=3m+o(m)$, which yields that $f(k)=2^{k-1}m+m+o(m)$ and makespan optimality ratio being $\frac{2^{k-1}+1}{k}$ for \rth and \rtlm while $\frac{4(2^{k-1}+1)}{k}$ for \rtmapf.
\end{proof}

\section{Optimality-Boosting Heuristics}\label{sec:opt-boost}
\subsection{Reducing Makespan via Optimizing Matching}\label{subsec:heuristics}
Based on \rta, \rth has three simulated shuffle phases. The makespan is dominated by the agent needing the longest time to move, as a sum of moves in all three phases. As a result, the optimality of Grid Rearrangement  methods is determined by the first preparation/matching phase. 
Finding arbitrary perfect matchings is fast but the process can be improved to reduce the overall makespan. 

For improving matching, we propose two heuristics; the first is based on \emph{integer programming} (IP). 
We create binary variables $\{x_{ri}\}$ where $r$ represents the row number and $i$ the agent. 
agent $i$ is assigned to row $r$ if $x_{ri}=1$. 
Define single agent cost as $C_{ri}(\lambda)=\lambda |r-s_i.x|+(1-\lambda)|r-g_i.x|$. 
We optimize the makespan lower bound of the first phase by letting $\lambda=0$ or the third phase by letting $\lambda=1$. 
The objective function and constraints are given by
\begin{equation}
\label{eq:objective}
    \max_{r,i} \{{C_{ri}(\lambda=0)x_{ri}}\}+\max_{r,i}\{C_{ri}(\lambda=1)x_{ri}\}
\end{equation}
\vspace{-2mm}
\begin{equation}
\label{eq:constraint1}
    \sum_{r}x_{ri}=1, \mathrm{for\ each\ agent\ } i
\end{equation}
\vspace{-2mm}
\begin{equation}
\label{eq:constraint2}
    \sum_{g_i.y=t}x_{ri}\leq 1, {\small\mathrm{for\ each\ row\ }r\mathrm{\ and\ each\ color\ } t}  \end{equation}
\vspace{-1mm}
\begin{equation}
\label{eq:constraint3}
\sum_{s_i.x=c}x_{ri}=1, \mathrm{for\ each\ column\ }c \mathrm{and\ each\ row\ }r
\end{equation}

Eq. \eqref{eq:objective} is the summation of makespan lower bound of the first phase and the third phase. Note that the second phase can not be improved by optimizing the matching.
Eq. \eqref{eq:constraint1} requires that agent $i$ be only present in one row.
Eq. \eqref{eq:constraint2} specifies that each row should contain agents that have different goal columns.
Eq. \eqref{eq:constraint3} specifies that each vertex $(r,c)$ can only be assigned to one agent. 
The IP model represents a general assignment problem which is NP-hard in general.
It has limited scalability but provides a way to evaluate how optimal the matching could be in the limit.

A second matching heuristic we developed is based on \emph{linear bottleneck assignment (LBA)} \cite{burkard2012assignment}, which takes polynomial time.
LBA differs from the IP heuristic in that the bipartite graph is weighted.
For the matching assigned to row $r$, the edge weight of the bipartite graph is computed greedily.
If column $c$ contains agents of color $t$, we add an edge $(c,t)$ and its edge cost is
\vspace{-1.5mm}
\begin{equation}
\vspace{-1.5mm}
    C_{ct}=\min_{g_i.y=t}C_{ri}(\lambda=0)
\end{equation}
We choose $\lambda=0$ to optimize the first phase. Optimizing the third phase ($\lambda=1$) would give similar results.
After constructing the weighted bipartite graph, an $O(\frac{m_1^{2.5}}{\log m_1})$ LBA algorithm \cite{burkard2012assignment} is applied to get a minimum bottleneck cost matching for row $r$. Then we remove the assigned agents and compute the next minimum bottleneck cost matching for the next row. 
After getting all the matchings $\mathcal{M}_r$, we can further use LBA to assign $\mathcal{M}_r$ to a different row $r'$ to get a smaller makespan lower bound. The cost for assigning matching $\mathcal{M}_r$ to row $r'$ is defined as 
\vspace{-1.5mm}
\begin{equation}
\vspace{-1.5mm}
    C_{\mathcal{M}_rr'}=\max_{i\in\mathcal{M}_r}C_{r'i}(\lambda=0)
\end{equation}
The total time complexity of using LBA heuristic for matching is $O(\frac{m_1^{3.5}}{\log m_1})$.

We denote \rth with IP and LBA heuristics as \rthip and \rthlba, respectively.
We mention that \rtmapf, which uses the line swap motion  primitive, can also benefit from these heuristics to re-assign the goals within each group. This can lower the bottleneck path length and improve optimality. 

\subsection{Path Refinement}\label{sec: path_refine}
Final paths from \rta-based algorithms are concatenations of paths from multiple planning phases. 
This means agents are forced to ``synchronize'' at phase boundaries, which causes unnecessary idling for agents finishing a phase early. 
Noticing this, we de-synchronize the planning phases, which yields significant gains in makespan optimality. 

Our path refinement method does something similar to Minimal Communication Policy (\mcp)~\cite{ma2017multi}, a multi-agent execution policy proposed to handle unexpected delays without stopping unaffected agents.
During  execution, \mcp let agents execute their next non-idling move as early as possible while preserving the relative execution orders between agents,
e.g., 
when two agents need to enter the same vertex at different times, that ordering is preserved. 
%
%
%
%

We adopt the principle used in \mcp  to refine the paths generated by \rta-based algorithms as shown in Alg.~\ref{alg:refinement}, with Alg.~\ref{alg:mcp_move} as a sub-routine.  
The algorithms work as follows. All idle states are removed from the initial agent but the order of visits for each vertex is kept (Line 2-3).
Then we enter a loop executing the plans following the \mcp principle (Line 8-10).
In Alg.~\ref{alg:mcp_move}, if  $i$ is the next agent that enters vertex $v_i$ according to the original order, we check if there is a agent currently at $v_i$.
If there is not, we let $i$ enter $v_i$.
If another agent $j$ is currently occupying $v_j$, we examine if $j$ is moving to its next vertex $v_j$ in the next step by recursively calling the function 
$\texttt{Move}$. 
We check if there is any cycle in the agent movements in the next original plan.
If $i$ is in a cycle, we move all the agents in this cycle to their next vertex recursively (Line 20-28).
If no cycle is found and $j$ is to enter its next vertex $v_j$, we let $i$ also move to its next vertex $v_i$. 
Otherwise, $i$ should  wait at $u_i$. 
The algorithm is \emph{deadlock-free} by construction ~\cite{ma2017multi}.

\begin{algorithm}
\begin{small}
\DontPrintSemicolon
\SetKwProg{Fn}{Function}{:}{}
\SetKw{Continue}{continue}

 \caption{Path Refinement \label{alg:refinement}}
  \KwIn{Paths $\mathcal{P}$ generated by \rta }
  \Fn{Refine($\mathcal{P}$)}{
  \textbf{foreach} $v\in V$, $VOrder[v]\leftarrow Queue()$\;
  \texttt{Preprocess($InitialPlans,VOrder$)}\;
      \While{true}{
          \For{$i=1...n$}{
            $Moved\leftarrow Dict()$\;
            $CycleCheck\leftarrow Set()$\;
            $\texttt{Move}(i)$\;
              \If{$\texttt{AllReachedGoals()}$=true}{break\;}
        }
    }
}
\end{small}
\end{algorithm}

\begin{algorithm}
\begin{small}
\DontPrintSemicolon
\SetKwProg{Fn}{Function}{:}{}
\SetKw{Continue}{continue}
\caption{Move \label{alg:mcp_move}}
\Fn{$\texttt{Move}$(i)}{
\If{$i$ in $Moved$}{
\Return $Moved[i]$\;
}
$u_i\leftarrow$ current position of $i$\;
$v_i\leftarrow$ next position of $i$\;
\If{$i=VOrder[v_i].front()$}{
    $j\leftarrow$ the agent currently at $v_i$\;
    \If{$i$ is in $CycleCheck$}{
        $\texttt{MoveAllAgentsInCycle}(i)$\;
        \Return true\;
    }
    $CycleCheck.add(i)$\;
    \If{$\texttt{Move}(j)=true$ or $j=$None}{
       let $i$ enter $v_i$\;
       $VOrder[v_i].popfront()$\;
    $Moved[i]\leftarrow$true\;
    \Return true\;
    }
}
let $i$ wait at $u_i$\;
$Moved[i]$=false\;
\Return false\;
}
\Fn{$\texttt{MoveAllAgentsInCycle}(i)$}{
$Visited\leftarrow Set()$\;
$j\leftarrow i$\;
\While{$j$ is not in visited}{
    let $j$ enter its next vertex $v_j$\;
    $Visited.add(j)$\;
    $VOrder[v_j].popfront()$\;
    $Moved[j]\leftarrow$ true\;
    $j\leftarrow$ the agent currently at vertex $v_j$\; 
}
}

\vspace{-2mm}
\end{small}
\end{algorithm}
%
Let $T$ be the makespan of the initial paths, the makespan of the solution obtained by running $\texttt{Refine}$ is clearly no more than $T$.
In each loop of $\texttt{Refine}$, we essentially run DFS on a graph that has $n$ nodes and traverse all the nodes, for which the time complexity is $O(n)$,
Therefore the time complexity of the path refinement is bounded by $O(nT)$.

Other path refinement methods, such as \cite{li2021anytime,okumura2021iterative}, can also be applied in principle, which iteratively chooses a small group of agents and re-plan their paths holding other agents as dynamic obstacles. 
However, in dense settings that we tackle, re-planning for a small group of agents has little chance of finding better paths this way. 

\section{Simulation Experiments}\label{sec:eval}
We thoroughly evaluated \rta-based algorithms and compared them with many similar algorithms. 
We mainly highlight comparisons with \eecbs($w$=1.5) \cite{li2021eecbs}, \lacam~\cite{okumura2023lacam} and \ddm \cite{han2020ddm}. 
These two methods are, to our knowledge, two of the fastest near-optimal \mpp solvers.
Beyond \eecbs and \ddm, we considered a state-of-the-art polynomial algorithm, push-and-swap \cite{luna2011push}, which gave fairly sub-optimal results: the makespan optimality ratio is often above 100 for the densities we examine. 
%

As a reader's guide to this section, in Sec.~\ref{e:1}, as a warm-up, we show the 2D performance of all \rta-based algorithms at their baseline, i.e., without any efficiency-boosting heuristics mentioned in Sec.~\ref{sec:opt-boost}. 
In Sec.~\ref{e:2}, for 2D square grids, we show the performance of all \rta-based algorithms with and without the two heuristics discussed in Sec.~\ref{sec:opt-boost}. We then thoroughly evaluate the performance of all versions of the \rth2d algorithm at $\frac{1}{3}$ agent density in Sec.~\ref{subsec:rth}. Some special 2D patterns are examined in Sec.~\ref{e:4}. 3D settings are briefly discussed in Sec.~\ref{e:5}. 

All experiments are performed on an Intel\textsuperscript{\textregistered} Core\textsuperscript{TM} i7-6900K CPU at 3.2GHz. Each data point is an average of over 20 runs on randomly generated instances unless otherwise stated.
A running time limit of $300$ seconds is imposed over all instances. 
The optimality ratio is estimated as compared to conservatively estimated makespan lower bounds.
All the algorithms are implemented in C++.
%
We choose Gurobi \cite{gurobi} as the mixed integer programming solver and  ORtools \cite{ortools} as the max-flow solver.

\subsection{Optimality of Baseline Versions of \rta-Based Methods}\label{e:1}

First, we provide an overall evaluation of the optimality achieved by basic versions of \rtmapf, \rtlm, and \rth over randomly generated 2D instances at their maximum designed agent density. 
That is, these methods do not contain the heuristics from Sec.~\ref{sec:opt-boost}.
We test over three $m_1:m_2$ ratios: $1:1$, $3:2$, and $5:1$. 
For \rtmapf, different sub-grid sizes for dividing the $m_1 \times m_2$ grid are evaluated. 
%
The result is plotted in Fig.~\ref{fig:RTM-RTLM-RTH}. 
Computation time is not listed; we provide the computation time later for \rth; the running times of \rtmapf, \rtlm, and \rth are all similar. 
The optimality ratio is computed as the ratio between the solution makespan and the longest Manhattan distance between any pair of start and goal, which is conservative. 
\begin{figure}[htbp]
        \centering
        \includegraphics[width=1\linewidth]{./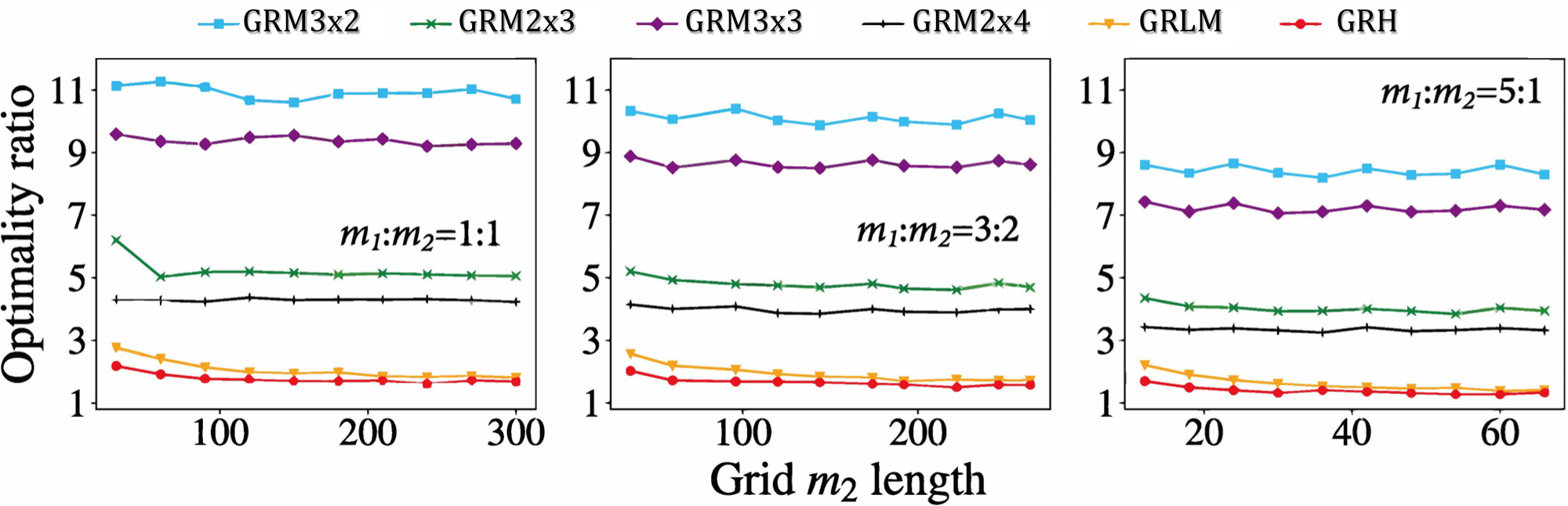}
\caption{Makespan optimality ratio for  \rtmapf (3x2, 2x3, 3x3, 2x4), \rtlm, and \rth at their maximum design density, for different $m_2$ values and $m_1:m_2$ ratios. The largest \rtmapf problems have $90,000$ agents on a $300 \times 300$ grid.} 
        \label{fig:RTM-RTLM-RTH}
    \end{figure}

\rtmapf does better and better on the optimality ratio as the sub-grid size ranges from $3\times 2$, $3\times 3$, $2\times 3$, and $2\times 4$, dropping to just above $3$.
%
In general, using ``longe'' sub-grids for the shuffle will decrease the optimality ratio because there are opportunities to reduce the overhead.
However, the time required for computing the solutions for all possible configurations grows exponentially as the size of the sub-grids increases.
%

On the other hand, both \rtlm and \rth achieve a sub-2 optimality ratio in most test cases, with the result for \rth dropping below $1.5$ on large grids. For all settings, as the grid size increases, there is a general trend of improvement of optimality across all methods/grid aspect ratios. This is due to two reasons: (1) the overhead in the shuffle operations becomes relatively smaller as grid size increases, and (2) with more agents, the makespan lower bound becomes closer to $m_1 + m_2$. Lastly, as $m_1:m_2$ ratio increases, the optimality ratio improves as predicted. For many test cases, the optimality ratio for \rth at $m_1:m_2=5$  is around $1.3$.

\begin{table}[h]
\small
  \centering
  \begin{tabular}{|c|c|c|c|c|c|}
    \hline
    \textbf{$\#$ of Agents} & 5 & 10  &15  &20 &25 \\
\hline
\textbf{Optimality Gap} & 1 &1.0025 &1.004 &1.011  &1.073\\
\hline
  \end{tabular}
  \caption{Optimality gap investigation on $5\times 5$ grids }
  \label{tab:OptimalityGap}
\end{table}

The exploration of optimality gaps is conducted on $5\times 5$ grids, as shown in Table.~\ref{tab:OptimalityGap}. For every specified number of agents, we create 100 random instances and employ the ILP solver to solve them. The optimality gap is then assessed by calculating the average ratio between the optimal makespan and the makespan lower bound. 
The optimality gap widens with higher agent density.

\subsection{Evaluation and Comparative Study of \rth}\label{subsec:rth}
\subsubsection{Impact of grid size}
For our first detailed comparative study of the performance of \rta,\rtlm and \rth at $100\%$, $\frac{1}{2}$ and $\frac{1}{3}$ density respectively, we set $m_1:m_2 = 3:2$ in terms of computation time and makespan optimality ratio.
We compare with \ddm \cite{han2020ddm}, EECBS ($w=1.5$) \cite{li2021eecbs}, Push and Swap\cite{luna2011push}, \lacam \cite{okumura2023lacam} in Fig.~\ref{fig:revise_full}-\ref{fig:revise_third}. For \eecbs, we turn on all the available heuristics and reasonings.
%



        \begin{figure}[htbp]
        \centering
        \includegraphics[width=1\linewidth]{./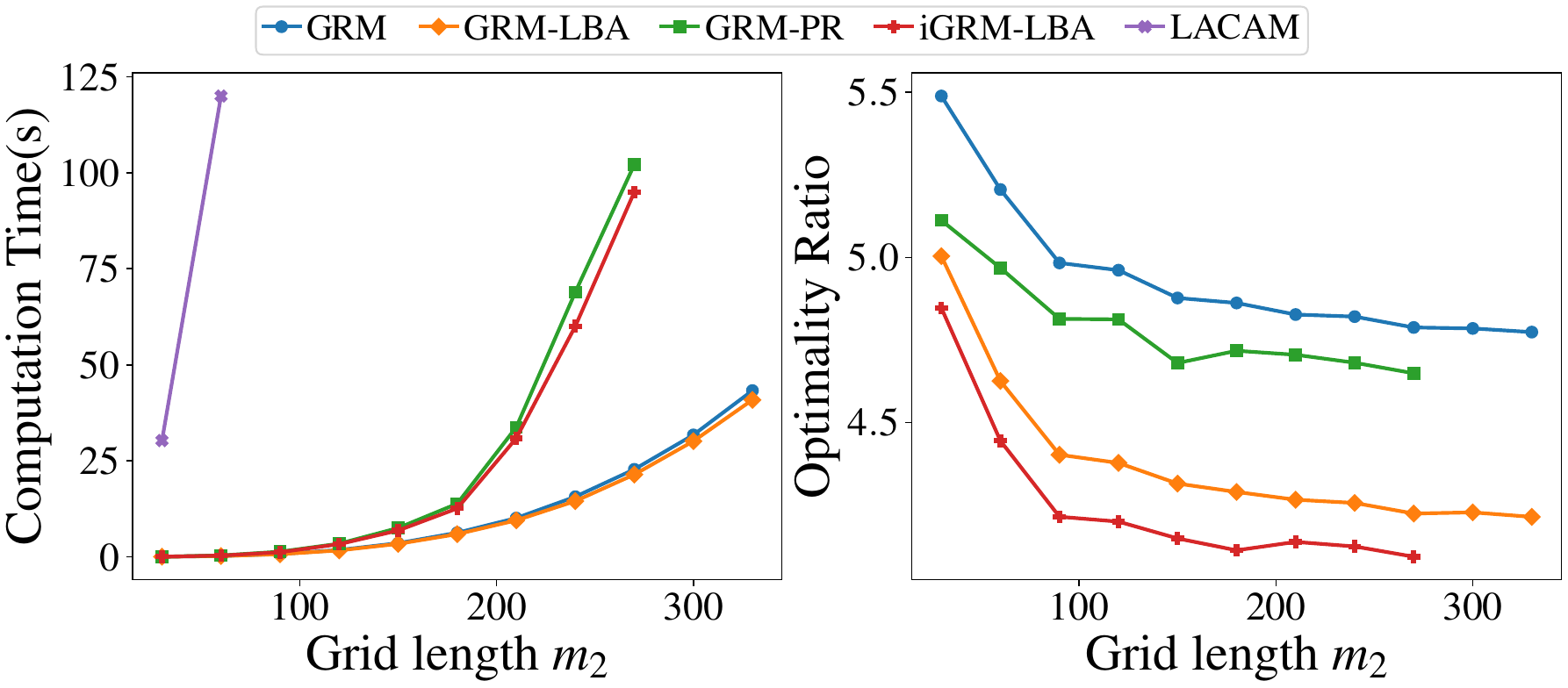}
        \caption{Computation time and optimality ratios on $m_1 \times m_2$ grids of varying sizes with $m_1:m_2 = 3:2$ and agent density at $100\%$ density.} 
        \label{fig:revise_full}
    \end{figure}

            \begin{figure}[htbp]
        \centering
        \includegraphics[width=1\linewidth]{./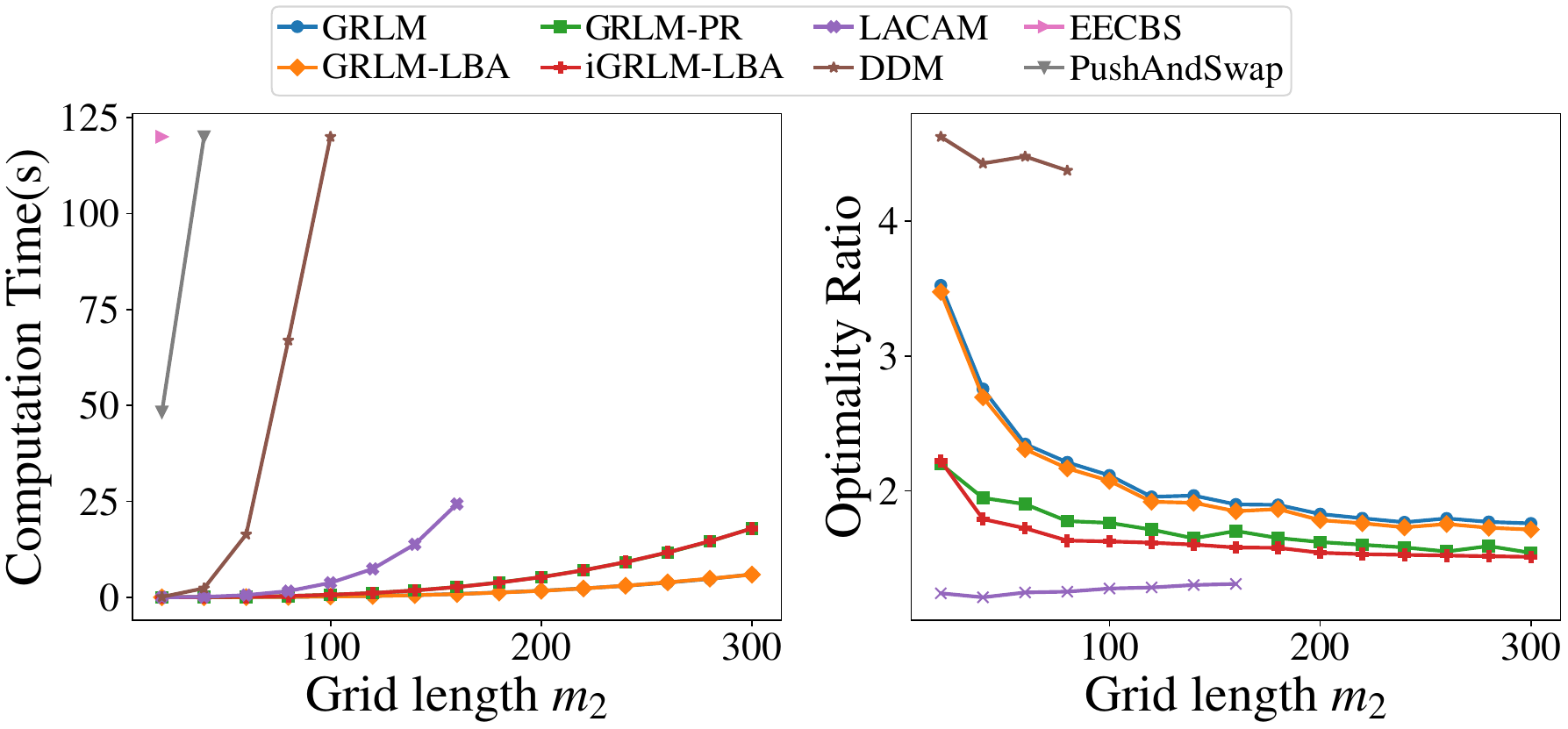}
        \caption{Computation time and optimality ratios on $m_1 \times m_2$ grids of varying sizes with $m_1:m_2 = 3:2$ and agent density at $\frac{1}{2}$ density.} 
        \label{fig:revise_half}
    \end{figure}

        \begin{figure}[htbp]
        \centering
        \includegraphics[width=1\linewidth]{./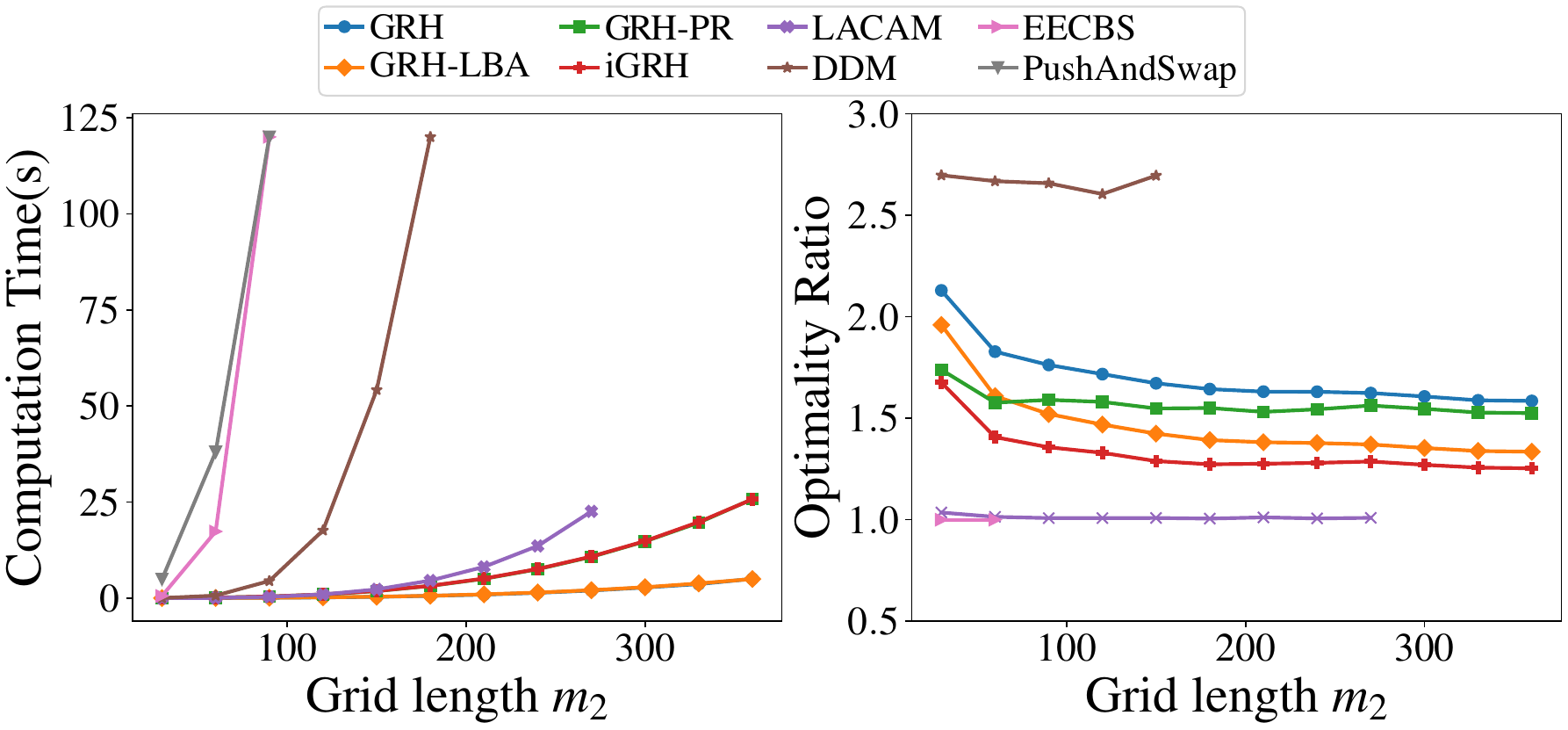}
        \caption{Computation time and optimality ratios on $m_1 \times m_2$ grids of varying sizes with $m_1:m_2 = 3:2$ and agent density at $1/3$ density.} 
        \label{fig:revise_third}
    \end{figure}
    
%
When at $100\%$ agent density, \rtmapf methods can solve huge instances, e.g., on $450 \times 300$ grids with $135,000$ agents in about $40$ seconds while none of the other algorithms can.
\lacam can only solve instances when $m_2=30$, though the resulting makespan optimality is around $2073$ and thus is not shown in the figure.
When at $\frac{1}{2}, \frac{1}{3}$ agent density, \rtlm and \rth still are the fastest methods among all, scaling to 45,000 agents in 10 seconds while the optimality ratio is close to 1.5 and 1.3 respectively.
\lacam also achieves great scalability and is able to solve problems when $m_2 \leq 270$. However, after that, \lacam faces out-of-memory error. This is due to the fact that \lacam is a search algorithm on joint configurations, and the required memory grows exponentially as the number of agents. In contrast, \rth, \rtlm do not have the issue and solve the problems consistently despite the optimality being worse than \lacam.
\eecbs, stopped working after $m_2 = 90$ at $\frac{1}{3}$ agent density and cannot solve any instance within the time limit at $100\%$ and $\frac{1}{2}$ agent density, while \ddm stopped working after $m_2=180$.
Push and Swap also performed poorly, and its optimality ratio is more than 30 in those scenarios and thus is not presented in the figure.

\subsubsection{Handling obstacles}
\rth can also handle scattered obstacles and is especially suitable for cases where obstacles are regularly distributed. 
For instance, problems with underlying graphs like that in Fig. \ref{fig:jd_center}(b), where each $3 \times 3$ cell has a hole in the middle, can be natively solved without performance degradation.
Such settings find real-world applications in parcel sorting facilities in large warehouses \cite{wan2018lifelong,li2020lifelong}.
For this parcel sorting setup, we fix the agent density at $\frac{2}{9}$ and test \eecbs, \ddm, \rth, \rthlba, \rthpr, i\rth on graphs with varying sizes. 
The results are shown in Fig \ref{fig: sorting_random}.
Note that \ddm can only apply when there is no narrow passage. So we added additional ``borders'' to the map to make it solvable for \ddm.
The results are similar as before; we note that i\rth reaches a conservative optimality ratio estimate of $1.26$.
\tg{for the sortation map, add push and swap to keep consistency}
   \begin{figure}[htbp]
        \centering
        \includegraphics[width=1\linewidth]{./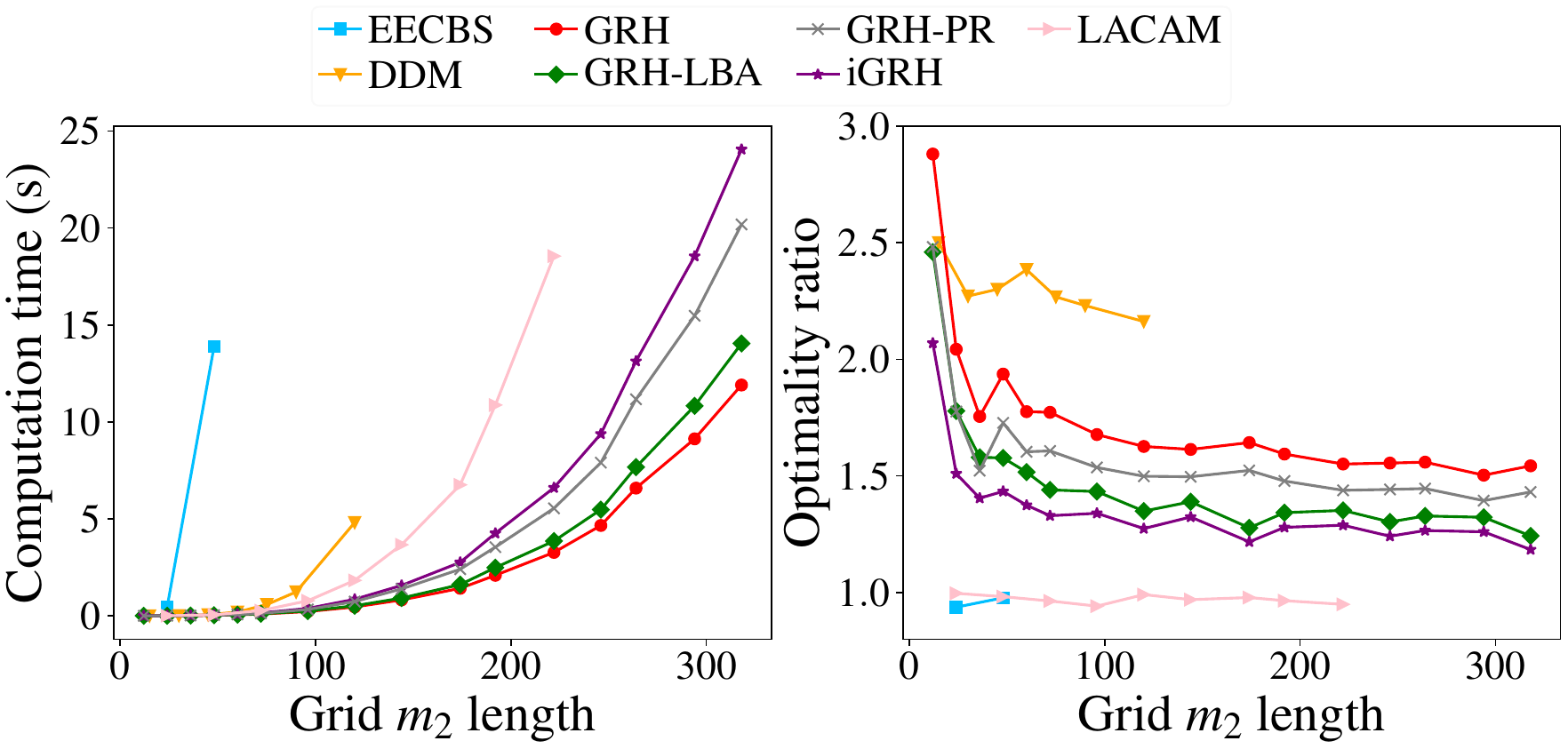}
        \caption{Computation time and optimality ratios on environments of varying sizes with regularly distributed obstacles at $\frac{1}{9}$ density and agents at $\frac{2}{9}$ density. $m_1:m_2 = 3:2$.
        } 
        \label{fig: sorting_random}
    \end{figure}
\tg{ also need to replace this figure}
\subsubsection{Impact of grid aspect ratios}
In this section, we fix $m_1m_2=90000$ and vary the $m_2:m_1$ ratio between $0$ (nearly one dimensional) and $1$ (square grids). We evaluated four algorithms, two of which are \rth and i\rth. Now recall that \rtp on an $m_1 \times m_2$ table can also be solved using $2m_2$ column shuffles and $m_1$ row shuffles. Adapting \rth and i\rth with $m_1 + 2m_2$ shuffles gives the other two variants we denote as \rth-LL and i\rth-LL, with ``LL'' suggesting two sets of longer shuffles are performed (each set of column shuffle work with columns of length $m_1$). The result is summarized in Fig.~\ref{fig:rectangle}.
  \begin{figure}[htbp]

        \centering
        \includegraphics[width=1\linewidth]{./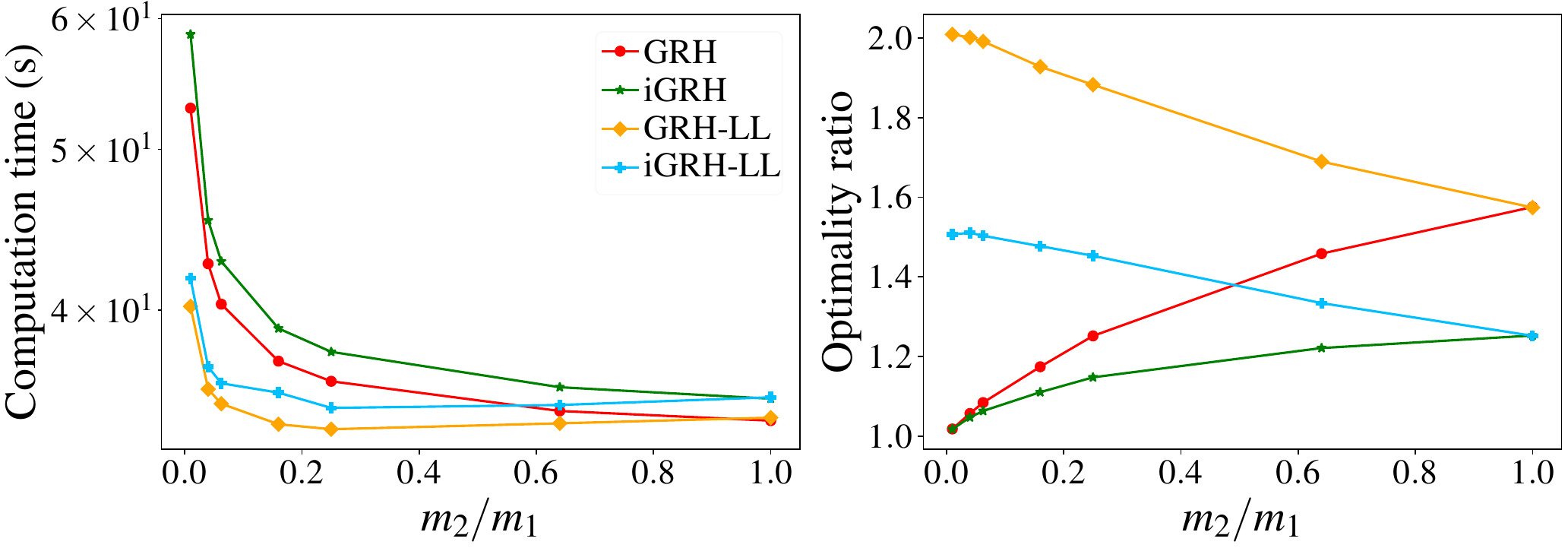}
        \caption{Computation time and optimality ratios on rectangular grids of varying aspect ratio and $\frac{1}{3}$ agent density.} 
        \label{fig:rectangle} 
    \end{figure}
    
Interestingly but not surprisingly, the result clearly demonstrates the trade-offs between computation effort and solution optimality. \rth and i\rth achieve better optimality ratio in comparison to \rth-LL and i\rth-LL but require more computation time.
Notably, the optimality ratio for \rth and i\rth is very close to 1 when $m_2:m_1$ is close to 0.
As expected, i\rth does much better than \rth across the board. 
\subsection{Special Patterns}\label{e:4}
We experimented i\rth on many ``special'' instances, two are presented here (Fig.~\ref{fig:new_test}). For both settings, we set $m_1 = m_2$. In the first, the ``squares'' setting, agents form concentric square rings and each agent and its goal are centrosymmetric. 
In the second, the ``blocks'' setting, the grid is divided into smaller square blocks (not necessarily $3 \times 3$) containing the same number of agents. Agents from one block need to move to another randomly chosen block.
i\rth achieves optimality that is fairly close to 1.0 in the square setting and 1.7 in the block setting. The computation time is similar to that of Fig.~\ref{fig: sorting_random}; EECBS performs well in terms of optimality, but its scalability is limited, working only on grids with $m \leq 90$. On the other hand, \lacam exhibits excellent scalability and good optimality for block patterns, although its optimality is comparatively worse for square patterns. Other algorithms are excluded from consideration due to significantly poorer optimality; for example, DDM's optimality exceeds 9, and Push\&Swap's optimality is greater than 40.

\tg{ change it to EECBS}
    \begin{figure}[htbp]
        \centering
           \begin{overpic}               
        [width=1\linewidth]{./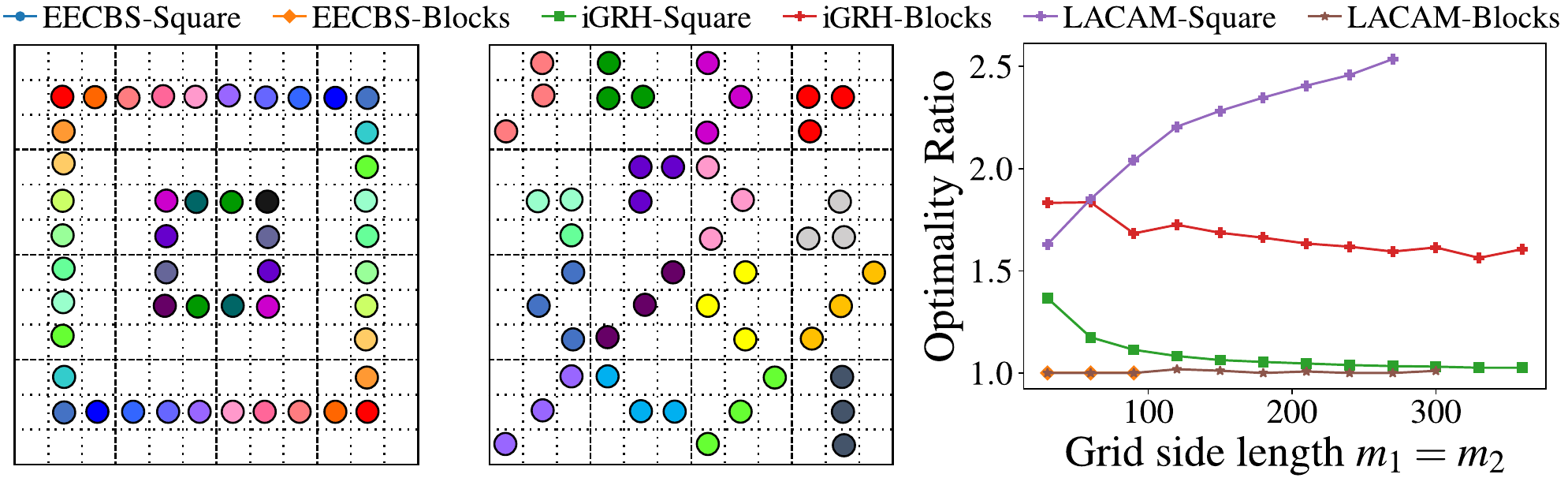}
             \footnotesize
             \put(12.5, -1) {(a)}
             \put(43.5, -1) {(b)}
                          \put(81.5, -1) {(c)}
        \end{overpic}
        \caption{(a) An illustration of the ``squares'' setting. (b) An illustration of the ``blocks'' setting. (c) Optimality ratios for the two settings for \eecbs, \lacam, and \rthlba.} 
        \label{fig:new_test} 
    \end{figure}

\subsection{Effectiveness of Matching and Path Refinement Heuristics}\label{e:2}
In this subsection, we evaluate the effectiveness of heuristics introduced in Sec.~\ref{sec:opt-boost} in boosting the performance of baseline \rta-based methods (we will briefly look at the IP heuristic later). We present the performance of \rtmapf, \rtlm, and \rth at these methods' maximum design density. For each method, results on all $4$ combinations with the heuristics are included.  
For a baseline method X, X-LBA, X-PR, and iX mean the method with the LBA heuristic, the path refinement heuristic, and both heuristics, respectively. 
In addition to the makespan optimality ratio, we also evaluated \emph{sum-of-cost}  (SOC) optimality ratio, which may be of interest to some readers. The sum-of-cost is the sum of the number of steps taking individual agents to reach their respective goals. 
We tested the worst-case scenario, i.e., $m = m_1 = m_2$. 
The result is shown in Fig. ~\ref{fig:path-refinement}. 
\begin{figure}[h!]
    \centering
    \includegraphics[width=1\linewidth]{./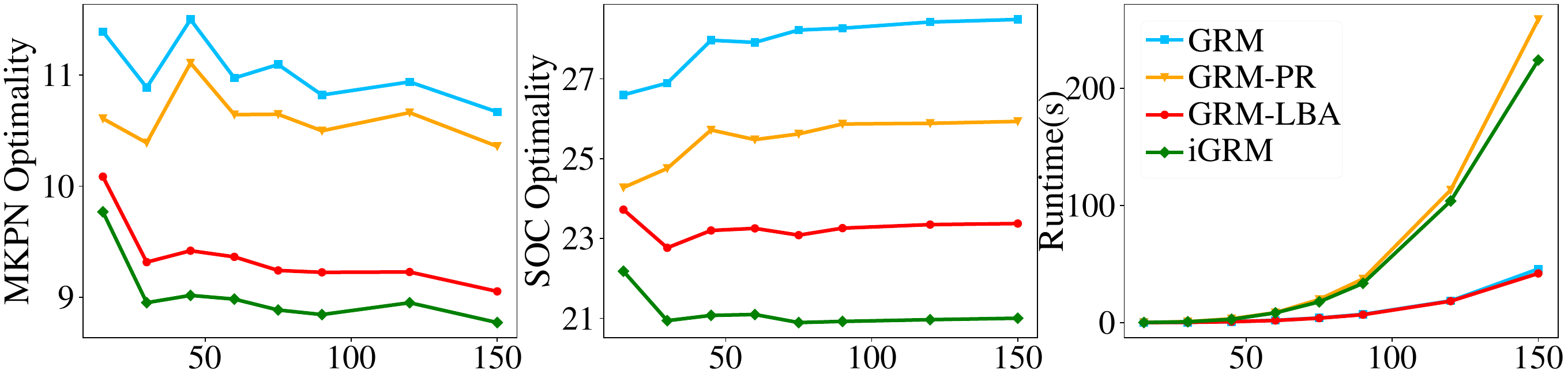}
    \includegraphics[width=1\linewidth]{./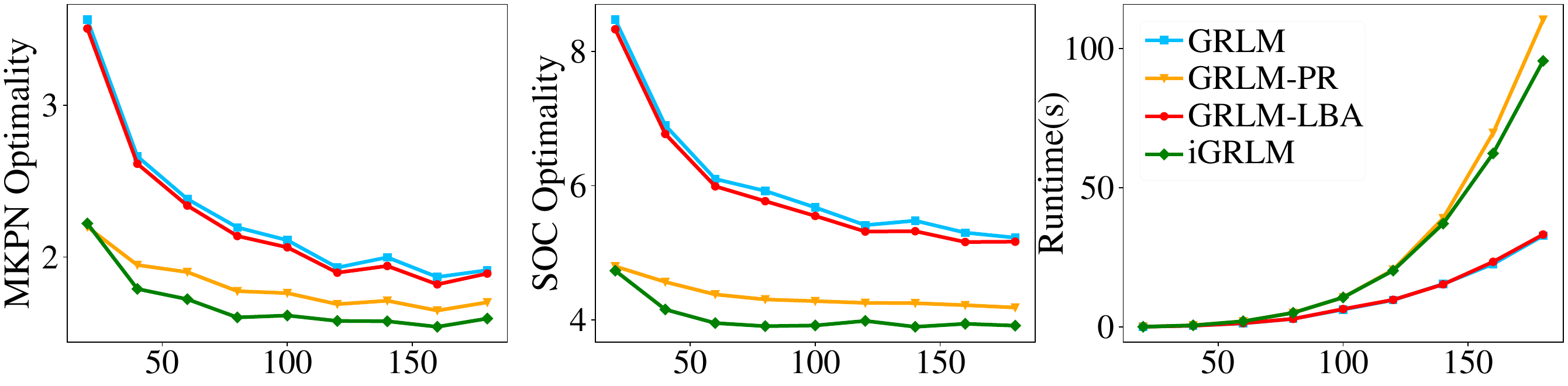}
    \includegraphics[width=1\linewidth]{./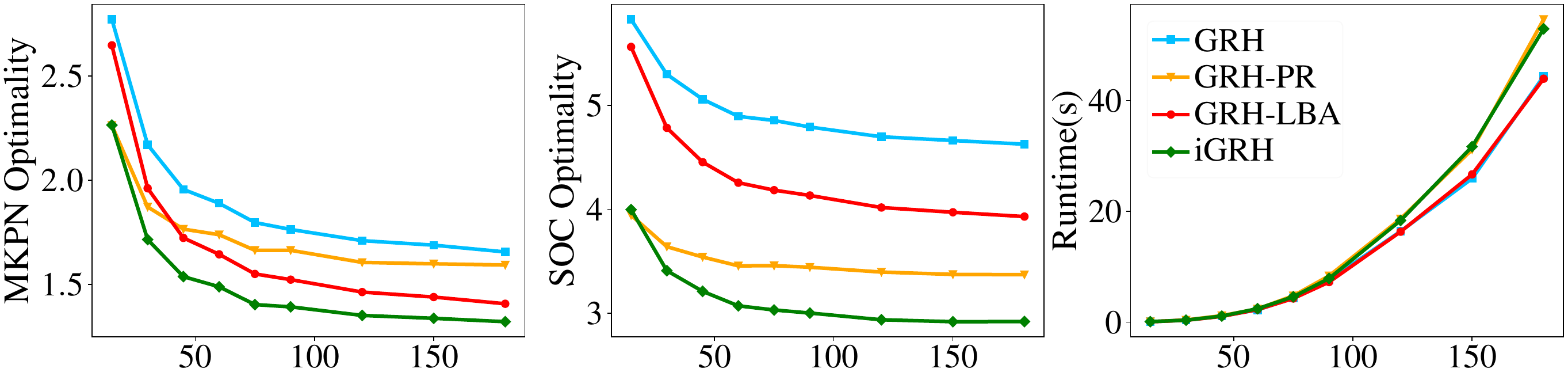}
    \caption{Effectiveness of heuristics in boosting \rta-based algorithms on $m\times m$ grids. For all figures, the $x$-axis is the grid side length $m$. Each row shows a specific algorithm (\rtmapf, \rtlm, and \rth). From left to right, makespan(MKPN) optimality ratio, SOC optimality ratio, and the computation time required for the path refinement routine are given.} 
    \label{fig:path-refinement} 
\end{figure}

We make two key observations based on Fig.~\ref{fig:path-refinement}. First, both heuristics provide significant individual boosts to nearly all baseline methods (except \rtlm-LBA), delivering around $10\%$--$20\%$ improvement on makespan optimality and $30\%$--$40\%$ improvement on SOC optimality. Second, the combined effect of the two heuristics is nearly additive, confirming that the two heuristics are orthogonal two each other, as their designs indicate. 
%
%
The end result is a dramatic overall cross-the-board optimality improvement. As an example, for \rth, for the last data point, the makespan optimality ratio dropped from around $1.8$ for the based to around $1.3$ for i\rth. 
In terms of computational costs, the LBA heuristic adds negligible more time. The path refinement heuristic takes more time in full and $\frac{1}{2}$ density settings but adds little cost in the $\frac{1}{3}$ density setting. 


\subsection{Evaluations on 3D Grids}\label{e:5}
For the 3D setting, the performance and solution structure of our methods are largely similar to the 2D setting. As such, we provide basic evaluations for completeness, fixing the aspect ratio at $m_1:m_2:m_3=4:2:1$ and density at $\frac{1}{3}$, and examine \rthddd variants on obstacle-free grids with varying sizes.
Here, because DDM only applies to 2D, we use ILP with split heuristics \cite{yu2016optimal} instead of DDM.
Start and goal configurations are randomly generated; the results are shown in Fig.~\ref{fig:random_rec3d}.
ILP with 16-split heuristic and \eecbs compute solution with better optimality ratio but does not scale.
In contrast, \rthddd variants readily scale to grids with over $370,000$ vertices and $120,000$ agents.
Optimality ratios for \rth variants decrease as the grid size increases, approaching 1.5-1.7.

\begin{figure}[htbp]
        \centering
        \includegraphics[width=1\linewidth]{./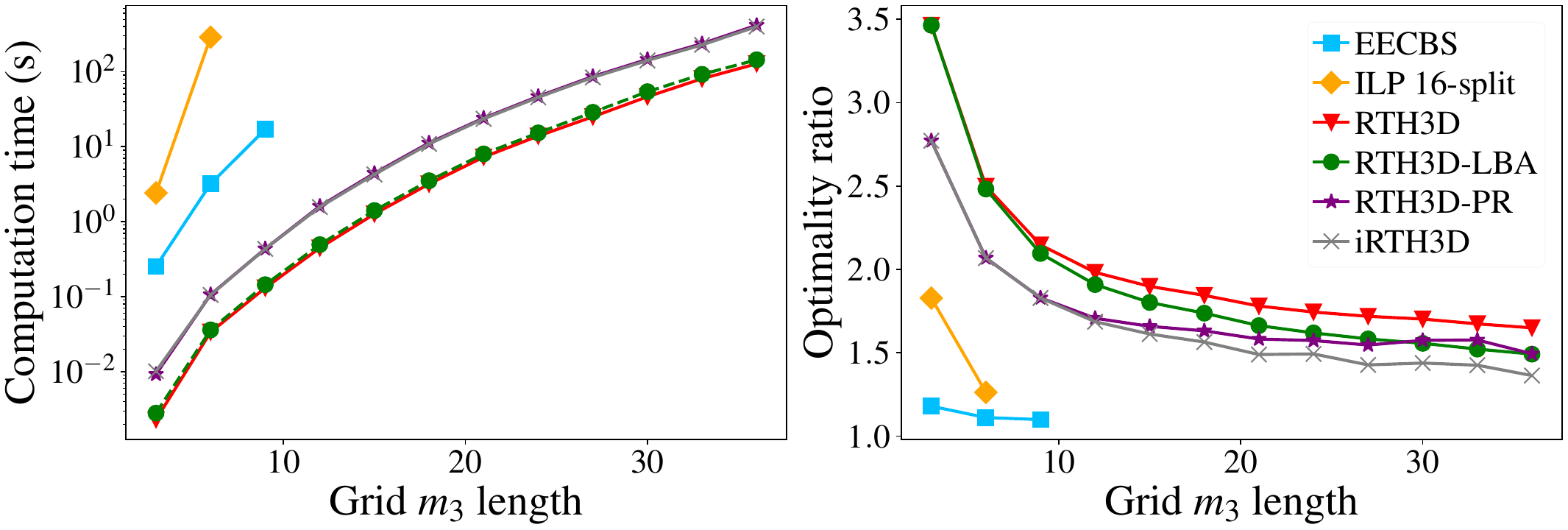}
\caption{Computation time and optimality ratio for four methods on $m_1\times m_2 \times m_3$ grids with varying grid size and $m_1:m_2:m_3=4:2:1$.} 
        \label{fig:random_rec3d}
    \end{figure}

\section{Conclusion and Discussion}\label{sec:conclusion}
In this study, we propose to apply Grid Rearrangements \cite{szegedy2023rubik} to solve \mpp. 
A basic adaptation of \rta, with a more efficient line shuffle routine, enables solving \mpp on grids at maximum agent density, in polynomial time, with a previously unachievable optimality guarantee. 
Then, combining \rta, a highway heuristic, and additional matching heuristics, we obtain novel polynomial time algorithms that are provably asymptotically $1 + \frac{m_2}{m_1+m_2}$ makespan-optimal on $m_1\times m_2$ grids with up to $\frac{1}{3}$ agent density, with high probability. 
Similar guarantees are also achieved with the presence of obstacles and at an agent density of up to one-half. 
These results in 2D are then shown to readily generalize to 3D and higher dimensions. 
In practice, our methods can solve problems on 2D graphs with over $10^5$ number of vertices and $4.5 \times 10^4$ agents to $1.26$ makespan-optimal (which can be better with a larger $m_1:m_2$ ratio). Scalability is even better in 3D.
To our knowledge, no previous \mpp solvers provide dual guarantees on low-polynomial running time and practical optimality. 

\tg{added discussion of limitation}
\textbf{Limitation}
While the \rta excels in achieving reasonably good optimality within polynomial time for dense instances, it does have limitations when applied to scenarios with a small number of robots or instances that are inherently easy to solve. 
In such cases, although \rta remains functional, its performance may lag behind other algorithms, and its solutions might be suboptimal compared to more specialized or efficient approaches. 
The algorithm's strength lies in its ability to efficiently handle complex, densely populated scenarios, leveraging its unique methodology. However, users should be mindful of its relative performance in simpler instances where alternative algorithms may offer superior solutions. 
This limitation underscores the importance of considering the specific characteristics of the robotic system and task at hand when choosing an optimization algorithm, tailoring the selection to match the intricacies of the given problem instance.

Our study opens the door for many follow-up research directions; we discuss a few here. 

\textbf{New line shuffle routines}. Currently, 
\rtlm and \rth only use two/three rows to perform a simulated row shuffle.
Among other restrictions, this requires that the sub-grids used for performing simulated shuffle be well-connected (i.e. obstacle-free or the obstacles are regularly spaced so that there are at least two rows that are not blocked by static obstacles in each motion primitive to simulate the shuffle).
Using more rows or irregular rows in a simulated row shuffle, it is possible to accommodate larger obstacles and/or support density higher than one-half.

\textbf{Better optimality at lower agent density}. 
It is interesting to examine whether further optimality gains can be realized at lower agent density settings, e.g., $\frac{1}{9}$ density or even 
lower, which are still highly practical. We hypothesize that this can be realized by somehow merging the different phases of \rta so that some unnecessary agent travel can be eliminated after computing an initial plan. 

\textbf{Consideration of more realistic robot models}.
The current study assumes a unit-cost model in which an agent takes a unit amount of time to travel a unit distance and allows turning at every integer time step. 
In practice, robots will need to accelerate/decelerate and also need to make turns. 
Turning can be especially problematic and cause a significant increase in plan execution time if the original plan is computed using the unit-cost model mentioned above. 
We note that \rth returns solutions where robots move in straight lines most of the time, which is advantageous compared to all existing \mpp algorithms, such as ECBS and DDM, which have many directional changes in their computed plans. 
it would be interesting to see whether the performance of \rta-based \mpp algorithms will further improve as more realistic robot models are adapted. 

\begin{remark}
    Currently, our \rta-based \mpp solves are limited to a static setting whereas e-commerce applications of multi-agent motion planning often require solving life-long settings  \cite{Ma2017LifelongMP}. 
%
The metric for evaluating life-long \mpp is often the \emph{throughput}, namely the number of goals reached per time step.
We note that \rth also provides optimality guarantees for such settings, e.g., for the setting where $m_1 = m_2 = m$,
the direct application of \rth to large-scale life-long \mpp on square grids yields an optimality ratio of $\frac{2}{9}$ on throughput.

We may solve life-long \mpp using \rth in \emph{batches}. 
For each batch with $n$ agents, \rth takes about $3m$ steps; the throughput 
is then $\mathcal{T}_{RTH} = \frac{n}{3m}$.
As for the lower bound estimation of the throughput, the expected Manhattan 
distance in an $m\times m$ square, ignoring inter-agent collisions, is 
$\frac{2m}{3}$.
Therefore, the lower bound throughput for each batch is $\mathcal{T}_{lb}=\frac{3n}{2m}$.
The asymptotic optimality ratio is $\frac{\mathcal{T}_{RTH}}{\mathcal{T}_{lb}}=\frac{2}{9}$.
The $\frac{2}{9}$ estimate is fairly conservative because \rth supports much higher 
agent densities not supported by known life-long \mpp solvers. 
Therefore, it appears very promising to develop an optimized Grid Rearrangement-inspired 
algorithms for solving life-long \mpp problems. 
\end{remark}

\vskip 0.2in
\bibliography{sample}
\bibliographystyle{theapa}

\end{document}